\documentclass{article}

\usepackage[margin=1in]{geometry}
\usepackage{lmodern}
\usepackage{natbib}
\usepackage[utf8]{inputenc}
\usepackage[T1]{fontenc}
\usepackage{hyperref}
\hypersetup{hidelinks}
\usepackage{url}
\usepackage{booktabs}
\usepackage{amsfonts}
\usepackage{nicefrac}
\usepackage{microtype}
\usepackage{xcolor}
\usepackage{graphicx}
\graphicspath{{figures/}{./}}
\usepackage{amsmath}
\usepackage{amssymb}
\usepackage{amsthm}
\newtheorem{proposition}{Proposition}
\usepackage{multicol}
\usepackage{tabularx}
\usepackage{array}
\usepackage{subcaption}
\usepackage{algorithm}
\usepackage{algorithmic}
\title{Time-partitioning Gaussian Process Mixture Model}
\author{
  Guilherme Piantino\thanks{\texttt{guilhermejlp@al.insper.edu.br}} \\
  Insper Institute of Education \\
  and Research, S\~ao Paulo
  \and
  Hedibert F.~Lopes \\
  Insper Institute of Education \\
  and Research, S\~ao Paulo
}
\date{}

\newcommand{\SimKHom}{10.3}
\newcommand{\SimKsdHom}{0.7}
\newcommand{\SimKPow}{10.0}
\newcommand{\SimKsdPow}{0.3}
\newcommand{\SimFoneHom}{0.981}
\newcommand{\SimFonePow}{0.996}
\newcommand{\SimExactHom}{72}
\newcommand{\SimExactPow}{92}

\newcommand{\SimMaePow}{0.05}
\newcommand{\SigHom}{0.121}
\newcommand{\SigPow}{0.526}

\newcommand{\SimSnAlt}{0.095}
\newcommand{\SimFoneHomAlt}{0.996}

\newcommand{\SimExactHomAlt}{96}

\newcommand{\SigHomMinAll}{0.104}
\newcommand{\SigHomMaxAll}{0.148}
\newcommand{\SigPowMinAll}{0.516}
\newcommand{\SigPowMaxAll}{0.540}
\newcommand{\SimFoneHomRbfSn}{0.984}
\newcommand{\SimFonePowRbfSn}{0.997}

\newcommand{\SimKHomRbfSn}{10.3}
\newcommand{\SimSnRbfSn}{0.100}

\newcommand{\VixK}{13.4}
\newcommand{\VixKlo}{11}
\newcommand{\VixKhi}{17}
\newcommand{\VixModalK}{11}
\newcommand{\VixModalShare}{0.19}
\newcommand{\VixSnAlt}{0.049}

\newcommand{\VixSnRbf}{0.063}

\newcommand{\VixKAlt}{8.7}
\newcommand{\TrKAlt}{8.2}
\newcommand{\VixKRbfSn}{29.0}
\newcommand{\TrKRbfSn}{40.2}

\newcommand{\TrK}{7.0}
\newcommand{\TrKlo}{5}
\newcommand{\TrKhi}{10}
\newcommand{\TrModalK}{5}
\newcommand{\FlexKvix}{51}
\newcommand{\FlexKtr}{13}
\newcommand{\RhatMainVix}{1.031}
\newcommand{\RhatMainTr}{1.011}

\newcommand{\RhatRbfSnTr}{1.14}
\newcommand{\EssKRbfSn}{12}
\newcommand{\RhatRbfSn}{1.27}
\newcommand{\EssKvix}{143}
\newcommand{\EssKtr}{387}
\newcommand{\MinPerChainVix}{8}
\newcommand{\MinPerChainVixAlt}{50}
\newcommand{\MinPerChainVixRbfSn}{5}
\newcommand{\MinPerChainVixFlex}{7}

\newcommand{\PriorTV}{0.014}
\newcommand{\VixCpA}{2015-06-29}
\newcommand{\VixCpPA}{0.96}
\newcommand{\VixCpB}{2015-08-21}
\newcommand{\VixCpPB}{1.00}
\newcommand{\VixCpC}{2016-11-09}
\newcommand{\VixCpPC}{1.00}
\newcommand{\VixCpD}{2017-04-24}
\newcommand{\VixCpPD}{0.99}
\newcommand{\VixCpE}{2017-08-10}
\newcommand{\VixCpPE}{0.98}
\newcommand{\VixCpF}{2018-02-05}
\newcommand{\VixCpPF}{1.00}
\newcommand{\VixCpG}{2020-03-02}
\newcommand{\VixCpPG}{0.99}
\newcommand{\VixCpH}{2021-01-27}
\newcommand{\VixCpPH}{0.99}
\newcommand{\VixCpI}{2021-12-07}
\newcommand{\VixCpPI}{0.68}
\newcommand{\VixCpJ}{2022-04-22}
\newcommand{\VixCpPJ}{0.42}
\newcommand{\InsMainVixRmse}{0.208}
\newcommand{\InsMainVixCrps}{0.122}
\newcommand{\FcMainVixHone}{0.706}
\newcommand{\FcMainVixHfive}{1.400}
\newcommand{\InsMainTrRmse}{0.061}
\newcommand{\InsMainTrCrps}{0.037}

\newcommand{\FcMainTrHfive}{0.729}
\newcommand{\InsFlexVixRmse}{0.076}
\newcommand{\InsFlexVixCrps}{0.046}
\newcommand{\FcFlexVixHone}{0.476}
\newcommand{\FcFlexVixHfive}{0.940}
\newcommand{\InsFlexTrRmse}{0.021}
\newcommand{\InsFlexTrCrps}{0.013}
\newcommand{\FcFlexTrHone}{0.250}
\newcommand{\FcFlexTrHfive}{0.595}

\newcommand{\InsAltVixCrps}{0.068}

\newcommand{\InsRbfSnVixCrps}{0.093}

\newcommand{\FcStdVixHone}{0.470}

\newcommand{\FcStdRVixHone}{0.503}

\newcommand{\FcStdVixHfive}{0.946}
\newcommand{\FcStdCrpsVixHfive}{0.496}
\newcommand{\FcStdRVixHfive}{1.003}
\newcommand{\FcFlexCrpsVixHfive}{0.482}

\newcommand{\InsGprVixRmse}{0.146}
\newcommand{\InsGprVixCrps}{0.081}
\newcommand{\InsGpmVixRmse}{0.115}
\newcommand{\InsGpmVixCrps}{0.065}

\newcommand{\RiceVix}{0.056}
\newcommand{\GsjVix}{0.048}

\newcommand{\FcStdTrHone}{0.248}

\newcommand{\FcStdTrHfive}{0.614}
\newcommand{\FcStdCrpsTrHfive}{0.347}
\newcommand{\FcStdRTrHfive}{0.909}
\newcommand{\FcFlexCrpsTrHfive}{0.377}

\newcommand{\InsGprTrCrps}{0.025}

\newcommand{\InsGpmTrCrps}{0.018}

\newcommand{\RiceTr}{0.041}
\newcommand{\GsjTr}{0.034}

\newcommand{\EvVixA}{1.00}
\newcommand{\EvVixB}{0.00}
\newcommand{\EvVixC}{1.00}
\newcommand{\EvVixD}{0.89}
\newcommand{\EvVixE}{0.01}
\newcommand{\EvVixF}{0.00}
\newcommand{\EvTrA}{0.22}
\newcommand{\EvTrB}{1.00}
\newcommand{\EvTrC}{0.11}
\newcommand{\EvAltVixA}{1.00}

\newcommand{\EvAltVixC}{1.00}
\newcommand{\EvAltVixD}{0.00}

\newcommand{\EvAltTrA}{0.01}

\newcommand{\ExFineGVix}{1{,}760}
\newcommand{\ExFineKVix}{13.69}
\newcommand{\ExMcmcKVix}{13.44}
\newcommand{\ExMcseVix}{0.15}
\newcommand{\ExTvVix}{0.049}
\newcommand{\ExDmaxVix}{0.14}
\newcommand{\ExDmeanVix}{0.007}
\newcommand{\ExSigVix}{0.082}
\newcommand{\ExSigMcmcVix}{0.090}
\newcommand{\ExThVix}{3.58}
\newcommand{\ExThMcmcVix}{3.56}
\newcommand{\ExCoarseKVix}{14.43}
\newcommand{\ExCoarseGVix}{210}

\newcommand{\ExFineKTr}{6.95}
\newcommand{\ExMcmcKTr}{7.00}
\newcommand{\ExMcseTr}{0.07}
\newcommand{\ExTvTr}{0.024}
\newcommand{\ExDmaxTr}{0.04}
\newcommand{\ExDmeanTr}{0.002}
\newcommand{\ExSigTr}{0.367}
\newcommand{\ExSigMcmcTr}{0.375}
\newcommand{\ExThTr}{5.16}
\newcommand{\ExThMcmcTr}{5.26}

\newcommand{\PeaksTpHalf}{7}
\newcommand{\PeaksTpFifth}{13}

\newcommand{\PeaksBocpdFifth}{20}

\newcommand{\PeaksHdpFifth}{38}
\newcommand{\PeaksMmAHalf}{7}

\newcommand{\PeaksMmBHalf}{3}
\newcommand{\PeaksMmBFifth}{14}
\newcommand{\SimFoneHomMmA}{0.985}
\newcommand{\SimKHomMmA}{9.8}
\newcommand{\SimKsdHomMmA}{0.5}
\newcommand{\SimMaeHomMmA}{3.38}

\newcommand{\SimFonePowMmA}{0.987}
\newcommand{\SimKPowMmA}{9.8}
\newcommand{\SimKsdPowMmA}{0.4}
\newcommand{\SimMaePowMmA}{1.22}

\newcommand{\MmKVixA}{9.8}
\newcommand{\InsMmAVixRmse}{0.224}
\newcommand{\InsMmAVixCrps}{0.121}
\newcommand{\MmKVixB}{16.7}

\newcommand{\MmPhiVix}{0.997}

\newcommand{\InsMmATrRmse}{0.059}
\newcommand{\InsMmATrCrps}{0.032}

\newcommand{\MmPhiTr}{0.998}
\newcommand{\EvMmAVixA}{0.00}

\newcommand{\EvMmAVixD}{0.00}

\begin{document}

\maketitle

\begin{abstract}
Nonstationarity and heterogeneous dynamics are common in financial and macroeconomic time series. Volatility alternates between calm and turbulent regimes, interest rates follow monetary-policy cycles, and major events trigger persistent structural shifts. We propose the Time-partitioning Gaussian Process Mixture Model (TPGPMM), a Bayesian nonparametric framework for regime identification and nonstationary prediction that infers such regimes and their within-regime dependence from data, without Markovian assumptions. It combines (i) an adaptive Pitman--Yor process prior on ordered compositions, mapped to block lengths with contiguity enforced by construction, and (ii) regime-specific Gaussian processes for temporal dependence, yielding a nonstationary predictive distribution. By inferring its discount parameter, the prior adapts to homogeneous and highly heterogeneous regime lengths.

Unlike sequential or Markov-switching methods, it flags persistent structural shifts as change points rather than reacting to transient shocks. A split--merge Metropolis-within-Gibbs sampler with exact Gibbs updates of regime boundaries jointly learns change points and hyperparameters in batch, quantifying uncertainty over the partition and within-regime dynamics. With an RBF kernel and a fixed, data-informed noise level, TPGPMM produces a sparse, confident segmentation. On simulated data it recovers the true regimes, and on the CBOE Volatility Index and the 10-year US Treasury yield it identifies regimes aligned with documented market and monetary-policy events. For prediction, variants estimating the noise level fit better than change-point baselines and fit and forecast better than the main specification, at the cost of a less sparse or less event-aligned segmentation; with a regime mean, they fit better than a fitted stationary Gaussian process and have lower forecast RMSE except at the shortest horizons.
\end{abstract}

\section{Introduction}

Macro-financial time series rarely obey a single stable data-generating process. Volatility moves between calm and turbulent regimes, interest rates track monetary-policy cycles, and both are punctuated by structural breaks around major economic events. Identifying these regimes---and separating persistent shifts from transient shocks---is central to risk management, forecasting, and policy analysis, yet the number, timing, and duration of regimes are typically unknown a priori and must be inferred from the data. Mixture models offer a natural starting point, generating each observation from one of several latent components \citep{McLachlanPeel2000}; but classical finite mixtures require fixing the number of components in advance. Bayesian nonparametric methods relax this by placing a prior on an infinite mixture, so that the effective number of components is inferred from the data. A commonly used prior is the Dirichlet process (DP), introduced by \citet{Ferguson1973}, which induces the Dirichlet process mixture (DPM), with posterior computation and density estimation developed by \citet{EscobarWest1995} and \citet{Neal2000}. In the Gaussian case this yields the flexible, complexity-adaptive Dirichlet process Gaussian mixture model (DPGMM) \citep{GorurRasmussen2010,Rasmussen1999}.

In time-series applications, partition-based Bayesian nonparametric methods have often been applied in the context of outlier detection, as in \citet{QuintanaIglesias2003}. This is largely because standard DP-mixture formulations rely on exchangeability, an assumption that is inappropriate for ordered data where temporal dependence is intrinsic. To overcome this limitation, researchers have extended these frameworks to allow the random mixing distribution to evolve with covariates. Key developments include the dependent Dirichlet process \citep{MacEachern2026}, order-based dependent Dirichlet processes \citep{GriffinSteel2006}, and kernel stick-breaking processes \citep{DunsonPark2008}.

In contrast, much of the time-series regime literature is built on state-space and switching formulations---including Markov-switching autoregressions \citep{Hamilton1989} and hidden Markov models \citep{Rabiner1989}. Bayesian nonparametric extensions allow an unknown number of regimes while typically retaining a Markovian latent state structure \citep{FoxSudderthJordanWillsky2011,KoChongGhosh2015}. Partition-based change-point models \citep{BarryHartigan1993,Chib1998,Fearnhead2006} and related exchangeable partition probability function (EPPF)--based constructions also often adopt simple or Markovian within-segment models; a notable example is \citet{MartinezMena2014}, who combine a prior on ordered compositions with discretely observed Ornstein--Uhlenbeck diffusion processes.

Motivated by these limitations, we develop the Time-partitioning Gaussian Process Mixture Model (TPGPMM), a Bayesian nonparametric framework designed to capture multiple regime changes and nonstationary dependence in time-ordered data without imposing a rigid Markovian structure. Standard Gaussian processes (GPs) provide a robust probabilistic framework for nonparametric regression, but the stationarity of their covariance kernels limits their ability to represent regime shifts and abrupt changes. The TPGPMM addresses this limitation by inferring both the number and locations of regimes directly from the data, representing nonstationarity through a flexible piecewise GP structure.

TPGPMM is designed for two purposes. The first is \emph{regime identification}: the posterior over contiguous partitions gives the number and timing of regime changes, with their uncertainty, and separates persistent shifts from transient shocks, which is what risk management and policy analysis require. The second is \emph{nonstationary prediction}: given the partition, the regime-specific GPs form a nonstationary GP, and averaging over the posterior gives a mixture of such GPs whose predictive distribution adapts to the regime structure, both within the sample and beyond its end. The two purposes favour different choices of the observation-noise level. A conservative fixed level yields a sparse segmentation, whereas estimating the noise yields better fit and forecasts, so we evaluate the model for each purpose separately.

This connects our work to the broader literature on nonstationary GPs. Nonstationarity is often introduced via spatially varying kernels, such as process convolution constructions \citep{HigdonEtAl1999} and nonstationary Mat\'ern-type formulations \citep{PaciorekSchervish2003,Katzfuss2013}, or by modelling input-dependent hyperparameters, including heteroskedastic GP frameworks \citep{HeinonenEtAl2016,BinoisEtAl2018}. Another line of work partitions the input space to fit local GPs with distinct hyperparameters, using approaches such as Voronoi tessellations \citep{KimEtAl2005}, treed GPs \citep{GramacyLee2008}, and local approximate GPs \citep{GramacyApley2015}, which constructs a separate local GP approximation around each prediction location. A complementary strategy learns nonlinear transformations (warpings) of the input space to induce stationarity in a latent representation, originating with \citet{SampsonGuttorp1992} and \citet{SchmidtOHagan2003}. Building on this, \citet{DamianouLawrence2013} introduced deep Gaussian processes (DGPs), which compose multiple GP layers to learn hierarchical warpings; recent work continues to develop full Bayesian inference for these models \citep{Sauer2023}.

Our work is most closely related to mixtures and segmentations of GPs. \citet{RasmussenGhahramani2001} proposed infinite mixtures of GP experts using DP-based mixing, while \citet{Tresp2000} studied mixtures of GPs with gating functions. However, mixture-of-experts formulations generally allow non-contiguous assignments and are not tailored to contiguous regime segmentation in ordered data. In the change-point setting, \citet{SaatciTurnerRasmussen2010} and \citet{GarnettEtAl2010} introduced sequential online Bayesian methods in which each segment is modelled by a GP and change-points are governed by a hazard function. Compared to these alternatives, TPGPMM differs in two key respects: (i) the partition prior is exchangeable on compositions and position-neutral, so segmentation is driven entirely by the block-size distribution rather than by a Markovian run-length recursion; and (ii) inference is performed in batch, jointly across the entire series, so each candidate boundary is evaluated using global posterior evidence. As we show in Section~\ref{sec:regimes}, this combination identifies persistent structural shifts as change points rather than reacting to every transient volatility shock. We evaluate TPGPMM on simulated benchmarks and on two macro-financial series, the Chicago Board Options Exchange (CBOE) Volatility Index (VIX) and the 10-year United States (US) Treasury yield. For regime identification (Section~\ref{sec:regimes}), the main specification, with an RBF kernel and a fixed noise level, recovers a sparse segmentation aligned with major market and monetary-policy events. For nonstationary prediction (Section~\ref{sec:prediction}), the specifications that estimate the noise level fit better than the change-point baselines and fit and forecast better than the main one, at the cost of a less sparse or less event-aligned segmentation, and with a regime mean they fit better than a fitted stationary GP and have lower forecast RMSE except at the shortest horizons.

\section{The model}
\label{sec:model}

We observe a time-ordered sequence $\mathbf{y}=(Y_1,\ldots,Y_n)$ at fixed, equally spaced inputs $X_t=t-1$. Dependence among the observations is introduced through an ordered partition of the index set into $K$ contiguous blocks, each of which is interpreted as a distinct regime. We represent the segmentation by its composition:
\begin{equation}\label{eq:rho}
  \rho=(n_1,\ldots,n_K),
  \qquad n_k\in\{m_{\min},m_{\min}+1,\ldots\},
  \qquad \sum_{k=1}^K n_k=n,
\end{equation}
where $m_{\min}$ is a minimum regime length ($m_{\min}=3$ throughout). Given $\rho$, we define the cumulative endpoints
\[
  i_0=0,\qquad i_k=\sum_{j=1}^k n_j,\quad k=1,\ldots,K,
\]
so that $0=i_0<i_1<\cdots<i_K=n$. The internal regime boundaries are $\{i_1,\ldots,i_{K-1}\}$, and the corresponding contiguous blocks are
\begin{equation}\label{eq:Sk}
  S_k=\{i_{k-1}+1,\ldots,i_k\},\qquad n_k=|S_k|.
\end{equation}

Each block $S_k$ is associated with block-specific hyperparameters $\boldsymbol{\phi}_k\in\Phi$. We define the time-indexed hyperparameter process $\{\boldsymbol{\phi}_t\}_{t=1}^n$ by
\[
  \boldsymbol{\phi}_t = \boldsymbol{\phi}_k, \qquad t\in S_k,
\]
so that $\boldsymbol{\phi}_t$ is piecewise constant over time and can change only at the internal boundaries. Conditional on $\rho$, we collect the block-specific hyperparameters as $\boldsymbol{\phi}_{1:K}=(\boldsymbol{\phi}_1,\ldots,\boldsymbol{\phi}_K)$ and assume
\begin{equation}\label{eq:phi_factorisation}
  p(\boldsymbol{\phi}_{1:K}\mid\rho)=\prod_{k=1}^K p(\boldsymbol{\phi}_k),
\end{equation}
so that the block hyperparameters are independent and identically distributed given the partition.

For each block $k$, let $f_k(\cdot)$ denote the latent function with GP prior governed by $\boldsymbol{\phi}_k$, and define the latent evaluations
\(
  f_t = f_k(X_t),\ t\in S_k,
\)
with global latent vector $\mathbf{f}=(f_1,\ldots,f_n)$. Observations are conditionally independent given $\mathbf{f}$, with $Y_t$ depending only on $f_t$ through a Gaussian noise model detailed in Section~\ref{sec:gp_likelihood}.

\begin{figure}[t]
    \centering
    \includegraphics[scale=0.5]{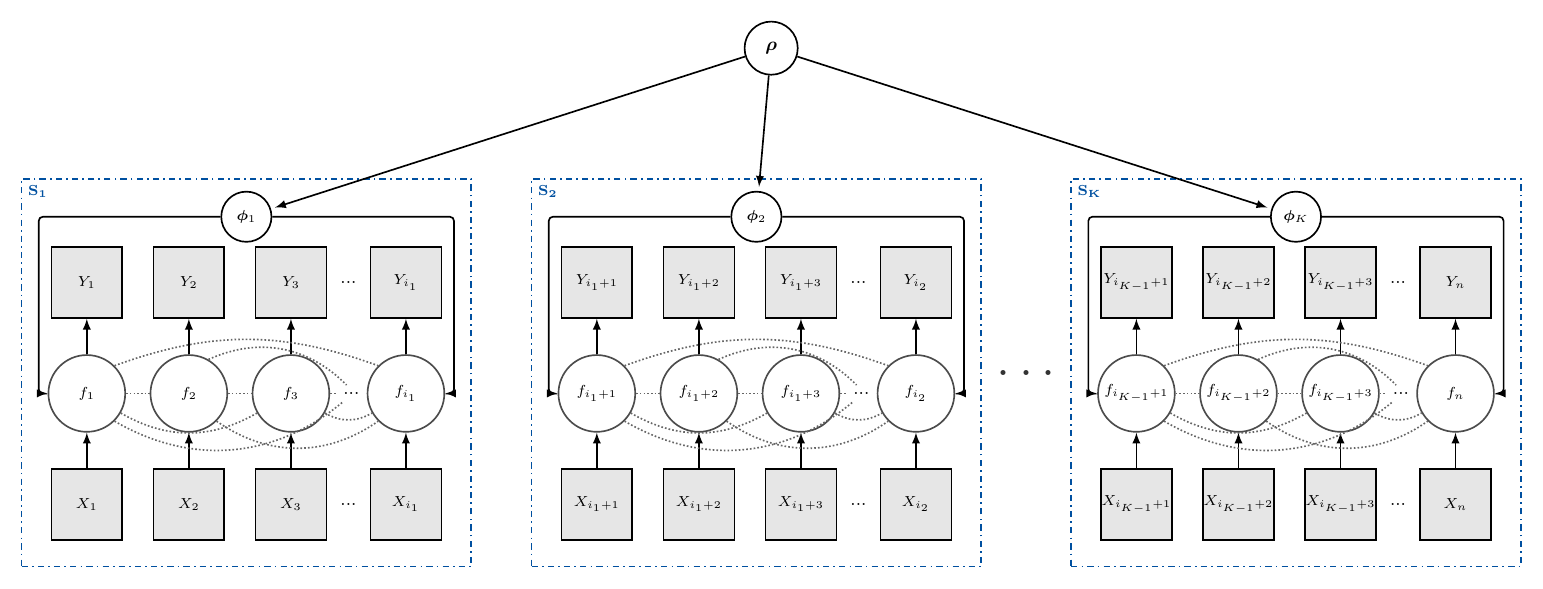}
    \vspace{0.1in}
    \caption{Graphical representation of the TPGPMM. Dashed blue boxes denote conditional independence across regimes, separating the global partition ($\rho$) from local parameters ($\boldsymbol{\phi}_k$). Within each regime, the gray lines connecting latent nodes ($f$) illustrate the non-Markovian GP covariance structure.}
    \label{fig:gpmm_diagram}
\end{figure}

Figure~\ref{fig:gpmm_diagram} shows how dependence flows from the partition $\rho$, which determines the contiguous segments $S_k$, to the regime-specific hyperparameters $\boldsymbol{\phi}_k$, which characterise the non-Markovian GP dependence; combined with the inputs $X_t$, these govern the latent function evaluations $f_t$ that generate the observed sequence $\mathbf{y}$.

\section{Prior on the segmentation}
\label{sec:prior_segmentation}

The prior distribution on the segmentation $\rho$ is specified via a Gibbs-type prior on random partitions, specifically the Pitman--Yor process (PYP) \citep{PitmanYor1997}, which includes the DP as a special case. Rather than fixing $K$, this formulation allows the number and location of regimes to be inferred directly from the data. The partition probability is a product of block terms that depend only on the block sizes, times a weight that depends only on the number of blocks; this is the defining form of a Gibbs-type prior \citep{GnedinPitman2006,DeBlasiEtAl2015}, which induces a coherent joint distribution over the number and lengths of regimes consistent with the underlying EPPF. (Because of the weight on $K$, the prior is not a product partition model in the sense of \citet{BarryHartigan1993}.)

While Gibbs-type priors typically describe unordered exchangeable partitions, temporal order must be preserved in this setting. We therefore consider compositions of $n$, which are in one-to-one correspondence with ordered partitions into contiguous blocks. To obtain a prior on ordered segmentations, we map the exchangeable partition induced by the PYP to an ordered composition using the exchangeable random order construction of \citet{Pitman2006}. This prior closely relates to the one used in \citet{MartinezMena2014}. By strictly reweighting the combinatorial order, this construction preserves the marginal distribution of $K$ while ensuring position neutrality (i.e., conditional on block sizes, all orders are equally likely). This is advantageous because it avoids encoding time-asymmetry and the prior governs only the number and length of segments, while the data likelihood drives the specific placement of changes. As shown in Appendix~\ref{app:pyp-composition-prior}, this results in
\begin{equation}
  \Pr(\rho)
  = \frac{n!}{K!\,\prod_{k=1}^K n_k!}\,
  \frac{\prod_{i=1}^{K-1}(\theta+i\sigma)}{(\theta+1)_{n-1}}
  \prod_{k=1}^K (1-\sigma)_{n_k-1}.
  \label{eq:ordered_py_composition_final}
\end{equation}
Given $\rho$, the corresponding segmentation is obtained by assigning the blocks deterministically to consecutive intervals, thereby enforcing contiguity by construction.

We require each regime to contain at least $m_{\min}$ observations and therefore condition the prior on this minimum-length requirement:
\begin{equation}
  \Pr\nolimits_{m_{\min}}(\rho\mid\sigma,\theta)=\frac{\Pr(\rho\mid\sigma,\theta)\,\mathbf 1\{n_k\ge m_{\min}\ \forall k\}}{Z_n(\sigma,\theta)},
  \label{eq:truncated_prior}
\end{equation}
Here, $Z_n(\sigma,\theta)$ is the prior probability that every regime meets the requirement, computed using the recursion over $K$ in Appendix~\ref{app:pyp-composition-prior}. Because it depends only on $(\sigma,\theta)$, this normalising constant cancels in partition updates when these parameters are held fixed, but must be included when they are updated. Omitting it would implicitly replace the specified hyperprior with one proportional to $p(\sigma)p(\theta)Z_n(\sigma,\theta)$.

We opt to work with the PYP instead of a simpler DP because the DP is the limiting case where $\sigma \to 0$; estimating $\sigma$ allows the data to determine which kind of prior is used, providing a parsimonious yet adaptable prior for segmentation. While the DP implies logarithmic growth in the number of blocks and favours homogeneous segment lengths, the PYP allows for polynomial growth and induces a heavy-tailed distribution where many short segments can coexist with a few long ones.

\section{Gaussian process likelihood within blocks}
\label{sec:gp_likelihood}

Having specified the prior on the regime boundaries, we now turn to the within-regime dependence structure. Conditional on the partition and the induced contiguous blocks, we model the observations within each regime using independent GPs. Each block $k$ is associated with a latent function $f_k$ with a GP prior
\begin{equation}\label{eq:gp_prior_block}
  f_k \mid \boldsymbol{\phi}_k \sim \mathcal{GP}\!\bigl(0,\,C_{\boldsymbol{\phi}_k}(\cdot,\cdot)\bigr),
\end{equation}
and observations within the block follow
\begin{equation}\label{eq:gp_observation_model}
  Y_t \mid f_k(X_t), \sigma^2_{\mathrm{n}}
  \sim \mathcal{N}\!\bigl(f_k(X_t), \sigma^2_{\mathrm{n}}\bigr),
  \qquad t \in S_k,
\end{equation}
where the observation noise variance $\sigma^2_{\mathrm{n}}$ is shared across all blocks and $C_{\boldsymbol{\phi}_k}(\cdot,\cdot)$ denotes a covariance (kernel) function. To prevent the model from explaining regime shifts as noise changes, we share the observation-noise variance across regimes. This way, regimes differ only through their GP hyperparameters.

We adopt a squared exponential, or radial basis function (RBF), kernel, which induces smooth latent functions without assuming parametric Markov transition dynamics:
\begin{equation}\label{eq:rbf_kernel}
  C_{\boldsymbol{\phi}_k}(x,x')
  = s_k^2 \exp\!\left(-\frac{(x-x')^2}{2\ell_k^2}\right),
\end{equation}
with block-specific length-scale $\ell_k$ and signal variance $s_k^2$, so that $\boldsymbol{\phi}_k = (\ell_k, s_k^2)$. Independent log-normal priors are placed on $\ell_k$ and $s_k^2$, with explicit hyperparameters given in Section~\ref{sec:experimental_evaluation}. This flexible kernel allows the GP posterior to capture regime levels nonparametrically, bypassing explicit mean parameters.

For the main specification, we fix the observation-noise standard deviation to help distinguish observation noise from within-regime variation. The GP signal amplitude and the noise level can be difficult to separate from a single time series, so fixing the noise scale reduces ambiguity in the length-scale and amplitude parameters that drive the segmentation. A conservative noise level is intended to absorb short-horizon fluctuations and discourage spurious change-points. The chosen value and its empirical justification are given in Section~\ref{sec:experimental_evaluation}.

To assess the role of the noise assumption and of the within-regime model, we also consider three alternative specifications in which the noise level is estimated rather than fixed. The first keeps the RBF kernel. The second replaces it with a Mat\'ern-3/2 kernel \citep{Stein1999}, which induces continuous but rougher latent functions. The third adds to the Mat\'ern-3/2 model a mean for each regime, which is integrated out analytically; we refer to it as the regime-mean alternative. These specifications allow us to examine how noise estimation, kernel smoothness and an explicit regime mean affect segmentation and prediction. Their kernels, block covariances and priors are given in Appendix~\ref{app:specifications}.

By the conditional-independence assumption~\eqref{eq:phi_factorisation}, the collection $\{(\boldsymbol{\phi}_k, f_k)\}_{k=1}^K$ is independent given $(\rho, \sigma^2_{\mathrm{n}})$. Integrating out $f_k$ yields the marginal likelihood for each block,
\begin{equation}\label{eq:block_marginal_likelihood}
  L_k(\boldsymbol{\phi}_k)
  \;\equiv\;
  p(\mathbf{y}_{S_k} \mid \mathbf{x}_{S_k}, \boldsymbol{\phi}_k, \sigma^2_{\mathrm{n}})
  = \mathcal{N}\!\Bigl(
    \mathbf{y}_{S_k} \,;\,
    \mathbf{0},\,
    K_k(\boldsymbol{\phi}_k) + \sigma^2_{\mathrm{n}} I_{|S_k|}
  \Bigr),
\end{equation}
where $K_k(\boldsymbol{\phi}_k) = \bigl[C_{\boldsymbol{\phi}_k}(X_i, X_j)\bigr]_{i,j \in S_k}$ is the $|S_k|\times|S_k|$ covariance matrix induced by the kernel on the inputs in $S_k$. For any positive semi-definite kernel and $\sigma^2_{\mathrm n}>0$ the matrix $K_k(\boldsymbol{\phi}_k) + \sigma^2_{\mathrm{n}} I_{|S_k|}$ is positive definite, ensuring the likelihood is well-defined. Because the inputs are equally spaced and the kernel is stationary, it is also a symmetric Toeplitz matrix, which the sampler exploits (Section~\ref{sec:inference}). Assuming conditional independence across blocks given the inputs $\mathbf{x}$, the partition $\rho$, the block-specific hyperparameters, and the shared noise variance, the full likelihood factorises as
\begin{equation}\label{eq:full_gp_likelihood_blocks}
  p\!\Bigl(
    \mathbf{y} \mid \mathbf{x}, \rho, \boldsymbol{\phi}_{1:K}, \sigma^2_{\mathrm{n}}
  \Bigr)
  = \prod_{k=1}^K L_k(\boldsymbol{\phi}_k).
\end{equation}

\section{Posterior Computation}
\label{sec:posterior}

Combining the segmentation prior~\eqref{eq:truncated_prior} with the block-wise GP likelihood~\eqref{eq:full_gp_likelihood_blocks}, and with the hyperpriors $p(\boldsymbol{\phi}_k)$, $p(\sigma)$, $p(\theta)$ specified in Section~\ref{sec:experimental_evaluation}, the joint posterior is
\begin{equation}
\pi(\rho, \boldsymbol{\phi}_{1:K}, \sigma, \theta \mid \mathbf{y})
\;\propto\; \Pr\nolimits_{m_{\min}}(\rho \mid \sigma, \theta)\,p(\sigma)\,p(\theta)
\prod_{k=1}^{K}\bigl[\,p(\boldsymbol{\phi}_k)\,L_k(\boldsymbol{\phi}_k)\bigr],
\label{eq:posterior}
\end{equation}
with $L_k$ defined in~\eqref{eq:block_marginal_likelihood}. When $\sigma_{\mathrm n}$ is estimated, it joins the parameters on the left and its prior $p(\sigma_{\mathrm n})$ multiplies the right-hand side. Since the marginal posterior of $\rho$ is analytically intractable, we perform inference jointly on $(\rho, \boldsymbol{\phi}_{1:K}, \sigma, \theta)$ via Markov chain Monte Carlo (MCMC) and obtain marginals for $\rho$ by Monte Carlo.

\subsection{Inference Algorithm}
\label{sec:inference}

Standard MCMC moves that adjust individual change points at random often exhibit poor mixing and high computational cost. We therefore adopt a split--merge strategy that proposes global modifications to the partition structure in a single step. This approach parallels split--merge samplers for mixture models~\citep{JainNeal2004} and their extensions, enabling the chain to escape local modes more effectively. Because the number of blocks $K$ is itself a random variable, structural moves are implemented within a reversible-jump MCMC (RJMCMC) framework~\citep{green1995,RichardsonGreen1997}: a split draws one auxiliary variable per hyperparameter and the merge is its exact inverse, so the move is dimension matched.

The Toeplitz structure makes two further moves cheap. The Cholesky factor of a leading sub-matrix of a covariance is the leading block of its Cholesky factor, and a symmetric Toeplitz matrix is unchanged by reversing the order of its rows and columns. One factorisation of a block's covariance therefore gives the log-likelihood of every prefix and every suffix of the block. We use this to draw each boundary exactly from its full conditional, and to propose split points in proportion to the target given the parent's hyperparameters, an informed proposal in the sense of \citet{Zanella2020}.

Our sampler targets~\eqref{eq:posterior} using a split--merge Metropolis-within-Gibbs scheme. Each iteration updates the partition by a split or merge, draws two regime boundaries from their full conditionals, updates the kernel hyperparameters of three random blocks by random-walk Metropolis, and updates the Pitman--Yor parameters $\sigma$ and $\theta$ (and the noise level, when it is estimated). Because latent functions are integrated out, likelihood evaluations are block-wise and structural proposals recompute only the affected blocks. Proposal details and acceptance ratios are given in Appendix~\ref{app:mcmc_details}, their derivation in Appendix~\ref{app:sampler_derivation}, and the procedure is summarised in Algorithm~\ref{alg:gpmm_inference}.

\begin{algorithm}[htbp]
\caption{TPGPMM split--merge Metropolis-within-Gibbs sampler}
\label{alg:gpmm_inference}
\begin{algorithmic}[1]
\REQUIRE Data $(\mathbf{X},\mathbf{y})$, hyperpriors, total iterations $M$, burn-in $B$, thinning $T$
\STATE Initialise $\rho^{(0)}$, $\boldsymbol{\phi}^{(0)}_{1:K}$, $\sigma^{(0)}$, $\theta^{(0)}$
\FOR{$m=1,\dots,M$}
    \STATE \textbf{(A)} Propose a split (informed split point) or a merge and accept/reject by the RJMCMC ratio in Appendix~\ref{app:mcmc_details}, recomputing $L_k$ only for the affected blocks
    \STATE \textbf{(A$'$)} Draw two randomly chosen boundaries from their full conditionals
    \STATE \textbf{(B)} For three blocks $k$ chosen at random, update $\log\ell_k$ and $\log s_k^2$ by random-walk Metropolis
    \STATE \textbf{(C)} Update $\sigma$ on the logit scale and $\theta$ on the log scale (including $Z_n$); if $\sigma_{\mathrm n}$ is estimated, update $\log\sigma_{\mathrm n}$
    \STATE \textbf{(D)} If $m\bmod 200=0$, refresh all $L_k$ from scratch
    \IF{$m>B$ \AND $(m-B)\bmod T=0$}
        \STATE Store $(\rho^{(m)},\boldsymbol{\phi}^{(m)}_{1:K},\sigma^{(m)},\theta^{(m)})$
    \ENDIF
\ENDFOR
\end{algorithmic}
\end{algorithm}

\subsection{Posterior Summaries}
\label{sec:posterior_summaries}

Let $\{(\rho^{(m)},\boldsymbol{\phi}^{(m)}_{1:K^{(m)}},\sigma^{(m)},\theta^{(m)})\}_{m=1}^{M^\star}$ denote the retained post-burn-in, post-thinning samples. Several non-predictive summaries follow directly from these draws.

We estimate the posterior probability of a regime boundary at index $t$ by
\begin{equation}
\widehat{\tau}_t = \frac{1}{M^\star}\sum_{m=1}^{M^\star}\mathbf{1}\{t\text{ is an internal boundary of }\rho^{(m)}\}.
\end{equation}
Because posterior mass for one boundary is often spread over neighbouring days, we also report the probability of at least one boundary within $\pm10$ trading days of $t$.

To summarise which observations are consistently assigned to the same regime, we construct the posterior similarity matrix $\widehat{S}\in[0,1]^{n\times n}$ with entries $\widehat{S}_{ij} = \frac{1}{M^\star}\sum_m \mathbf{1}\{i\sim_{\rho^{(m)}} j\}$, where $i\sim_\rho j$ denotes that observations $i$ and $j$ belong to the same block under $\rho$.

We also summarise the marginal posteriors of the Pitman--Yor parameters $\sigma$ and $\theta$ using their means, medians and equal-tailed $95\%$ credible intervals computed from the retained draws. The posterior of $K$ is obtained as the empirical distribution of $\{|\rho^{(m)}|\}_{m=1}^{M^\star}$.

The construction of TPGPMM's in-sample posterior predictive density $p(Y_t\mid\mathbf{y})$ and of its forecasts is given in Section~\ref{app:posterior_predictive}.

\section{Experimental setup}
\label{sec:experimental_evaluation}

We evaluate TPGPMM for its two purposes. Section~\ref{sec:regimes} assesses regime identification, on simulated data with known change-points and on two macro-financial series whose regimes we compare with documented events. Section~\ref{sec:prediction} assesses nonstationary prediction, within the sample and beyond its end, on the same two series. The data, baselines, model specifications and computational settings are described below. Appendices~\ref{app:baselines} and \ref{app:metrics} contain implementation and metric details.

The first empirical series is the log-transformed VIX, $y_t = \log(\mathrm{VIX}_t)$, which we use as a proxy for financial stress. The daily observations span January~2015 to December~2023 ($n = 2{,}264$). This timeframe spans low-volatility phases, sudden spikes, and prolonged crises, providing a strong testbed for structural-change detection. We also analyse the daily 10-year US Treasury constant-maturity yield (Federal Reserve Economic Data, FRED, series DGS10) over the same period ($n = 2{,}251$), modelling the standardised yield level (centred and divided by its sample standard deviation). The Treasury yield provides a complementary setting: a trending interest-rate level shaped by monetary-policy cycles, alongside the mean-reverting, stress-sensitive VIX. Both are noisy financial time series. On this scale the fixed noise level $\sigma_{\mathrm n}=0.1$ corresponds to about $9$ basis points of yield. For forecasting, both series are extended with the trading days of 2024, which are held out from fitting.

For regime identification, we compare TPGPMM with Bayesian online change-point detection (BOCPD) with a Normal--Inverse-Gamma run-length model \citep{AdamsMacKay2007}, the sticky hierarchical Dirichlet process switching linear dynamical system (HDP-SLDS) \citep{FoxSudderthJordanWillsky2011}, and the Martinez--Mena method \citep{MartinezMena2014}. We fit Martinez--Mena (MM) under both specifications and with the settings of the original paper (Appendix~\ref{app:martinez_mena}): Model B (MM-B), with the autocorrelation parameter held fixed at zero, treats observations as independent and identically distributed given each block's parameters; Model A (MM-A) allows Markovian autoregressive dependence by assigning the autocorrelation parameter a $\mathrm{U}(0,1)$ prior. For nonstationary prediction, we compare with single stationary GPs with RBF and Mat\'ern-3/2 kernels, fitting their hyperparameters by maximising the marginal likelihood. The fit comparisons additionally include the local approximate GP (laGP) of \citet{GramacyApley2015} and the posterior predictive distributions of HDP-SLDS and Martinez--Mena.

Throughout the experiments, TPGPMM denotes the main specification introduced in Section~\ref{sec:gp_likelihood}, with an RBF kernel and fixed $\sigma_{\mathrm n}=0.1$. The alternatives estimate $\sigma_{\mathrm n}$: TPGPMM ($\sigma_{\mathrm n}$ estimated) keeps the RBF kernel, TPGPMM (Mat\'ern-3/2, $\sigma_{\mathrm n}$ estimated) changes the kernel, and TPGPMM (Mat\'ern-3/2, regime mean, $\sigma_{\mathrm n}$ estimated), the regime-mean alternative, adds an integrated regime mean to the Mat\'ern-3/2 model.

The fixed noise level is informed by first- and second-difference estimates of the short-horizon noise level \citep{Rice1984,GasserEtAl1986}. These estimators are robust to smooth within-regime trends and give standard deviations of $\RiceVix$ and $\GsjVix$ for log-VIX, and $\RiceTr$ and $\GsjTr$ for the standardised Treasury yield. The chosen value of $0.1$ is of the same order of magnitude as these estimates, but deliberately conservative: about twice the estimates for log-VIX and two to three times those for the yield. We use the main specification for regime identification (Section~\ref{sec:regimes}) and compare all four specifications for nonstationary prediction (Section~\ref{sec:prediction}).

For the kernel hyperparameters, we assign independent log-normal priors: $\log \ell_k \sim \mathcal{N}(\log 10,\,0.6^2)$ and $\log s_k^2 \sim \mathcal{N}(\log 1,\,0.8^2)$; for the regime-mean alternative, $\log s_k^2\sim\mathcal N(\log\widehat{\mathrm{var}}(\mathbf y),1)$, and the regime mean has prior mean $\bar y$ and prior variance $\widehat{\mathrm{var}}(\mathbf y)$. The partition follows the Pitman--Yor process in \eqref{eq:truncated_prior} with $\sigma \sim \mathrm{Beta}(1,1)$ and $\theta \sim \mathrm{Gamma}(5,\,0.5)$ (prior mean $10$). When the observation noise is estimated, $\log\sigma_{\mathrm n}\sim\mathcal N(\log0.1,1)$.

Posterior inference uses four chains for each empirical series, each with its own random stream. The chains start from $K_0\in\{1,5,15,30\}$ equal blocks and run for $60{,}000$ iterations on VIX and $20{,}000$ on the Treasury yield. We discard the first 50\% as burn-in and retain every fifth draw. Each simulated dataset is fitted with one chain of $15{,}000$ iterations. Posterior summaries are computed using $M^\star = 800$ retained draws and forecasts using 500. The convergence analysis of the sampler (acceptance rates, effective sample sizes (ESS), and rank-normalised $\widehat{R}$ for $K,\sigma,\theta$) is given in Appendix~\ref{app:sampler_diagnostics}.

The computational cost of inference benefits from conditional independence across blocks, which reduces the cost of GP likelihood evaluation from $\mathcal{O}(n^3)$ to $\mathcal{O}(\sum n_k^3)$. We use Cholesky factorization with $10^{-8}$ diagonal jitter. Each proposal recomputes the likelihoods of the affected blocks, and a full refresh every 200 iterations limits numerical drift. Boundary updates reuse the Cholesky factorisations to evaluate the prefix and suffix likelihoods for all candidate positions. On one core of an Apple Silicon machine, a $60{,}000$-iteration VIX chain ($n=2{,}264$) takes about $\MinPerChainVix$ minutes for TPGPMM and up to $\MinPerChainVixAlt$ minutes for the alternatives (Table~\ref{tab:sampler_diagnostics}); the four chains run in parallel.

\section{Regime identification}
\label{sec:regimes}

This section evaluates the segmentation of TPGPMM, the specification designed for it, and compares it with the change-point baselines. The estimated-noise alternatives are discussed where they segment differently.

\subsection{Simulation Study}
\label{subsec:simulation}

We evaluate change-point detection and the adaptation of the Pitman--Yor discount parameter $\sigma$ on two synthetic scenarios at $n = 900$ with true number of regimes $K = 10$. The homogeneous scenario splits the series into ten segments of equal length 90; the power-law scenario consists of one dominant block of length 810 plus nine shorter blocks of length 10. The scenarios share the same specification and differ only in the segment-size distribution; the exact data-generating configuration is given in Appendix~\ref{app:simulation_config}. Each scenario is run over $50$ replications. For every replication and every method, we record segmentation-accuracy metrics $\hat{K}$, precision, recall, $F_1$, and change-point mean absolute error (CP\_MAE) at a $5$-step tolerance (all defined in Appendix~\ref{app:metrics}).

\begin{table}[htbp]
\centering
\caption{Simulation-study change-point recovery summary across $50$ replications per scenario, for TPGPMM, its three alternatives with estimated noise and the baselines. SD denotes standard deviation.}
\label{tab:simulation-study-recovery}
\resizebox{\linewidth}{!}{%
\begin{tabular}{llccccc}
\toprule
Scenario & Method & $\hat{K}$ (mean $\pm$ SD) & Precision & Recall & F1 & CP\_MAE \\
\midrule
homogeneous & \textbf{TPGPMM} & 10.3 $\pm$ 0.7 & 0.968 & 0.998 & 0.981 & 0.21 $\pm$ 1.41 \\
homogeneous & TPGPMM ($\sigma_{\mathrm n}$ estimated) & 10.3 $\pm$ 0.6 & 0.973 & 0.998 & 0.984 & 0.20 $\pm$ 1.41 \\
homogeneous & TPGPMM (Mat\'ern-3/2, $\sigma_{\mathrm n}$ estimated) & 10.0 $\pm$ 0.2 & 0.996 & 0.996 & \textbf{0.996} & 0.40 $\pm$ 1.98 \\
homogeneous & TPGPMM (Mat\'ern-3/2, regime mean, $\sigma_{\mathrm n}$ estimated) & 10.0 $\pm$ 0.2 & 0.996 & 0.996 & \textbf{0.996} & 0.40 $\pm$ 1.98 \\
homogeneous & BOCPD & 11.1 $\pm$ 1.3 & 0.844 & 0.938 & 0.885 & 4.98 $\pm$ 5.56 \\
homogeneous & HDP-SLDS & 15.5 $\pm$ 4.7 & 0.563 & 0.833 & 0.658 & 8.56 $\pm$ 8.33 \\
homogeneous & MM-A & 9.8 $\pm$ 0.5 & 0.998 & 0.973 & 0.985 & 3.38 $\pm$ 4.73 \\
homogeneous & MM-B & 18.9 $\pm$ 2.6 & 0.493 & 0.964 & 0.650 & 1.84 $\pm$ 2.01 \\
\midrule
power-law & \textbf{TPGPMM} & 10.0 $\pm$ 0.3 & 0.996 & 0.996 & 0.996 & 0.05 $\pm$ 0.22 \\
power-law & TPGPMM ($\sigma_{\mathrm n}$ estimated) & 10.0 $\pm$ 0.2 & 0.996 & 0.998 & \textbf{0.997} & 0.03 $\pm$ 0.16 \\
power-law & TPGPMM (Mat\'ern-3/2, $\sigma_{\mathrm n}$ estimated) & 9.9 $\pm$ 0.3 & 0.998 & 0.989 & 0.993 & 0.11 $\pm$ 0.34 \\
power-law & TPGPMM (Mat\'ern-3/2, regime mean, $\sigma_{\mathrm n}$ estimated) & 9.9 $\pm$ 0.3 & 0.998 & 0.989 & 0.993 & 0.11 $\pm$ 0.34 \\
power-law & BOCPD & 9.4 $\pm$ 1.1 & 0.988 & 0.918 & 0.947 & 2.48 $\pm$ 1.18 \\
power-law & HDP-SLDS & 9.6 $\pm$ 4.2 & 0.614 & 0.567 & 0.554 & 18.92 $\pm$ 64.19 \\
power-law & MM-A & 9.8 $\pm$ 0.4 & 1.000 & 0.976 & 0.987 & 1.22 $\pm$ 0.42 \\
power-law & MM-B & 14.0 $\pm$ 1.8 & 0.675 & 0.958 & 0.788 & 1.39 $\pm$ 0.67 \\
\bottomrule
\end{tabular}}
\end{table}

Table~\ref{tab:simulation-study-recovery} reports TPGPMM, its three alternatives with estimated noise and the baselines. In the power-law scenario TPGPMM achieves an $F_1$ of $\SimFonePow$ and a CP\_MAE of $\SimMaePow$, recovering the true $K = 10$ in $\SimExactPow\%$ of the replications ($\hat{K} = \SimKPow \pm \SimKsdPow$). In the homogeneous scenario it over-segments slightly ($\hat{K} = \SimKHom \pm \SimKsdHom$, exact $K$ in $\SimExactHom\%$ of the replications), so its precision is lower and its $F_1$ ($\SimFoneHom$) is slightly below that of MM-A ($\SimFoneHomMmA$), although its CP\_MAE is lower than that of every baseline.

The alternatives show what estimating the noise changes when the model is well specified. All three recover the noise level (posterior mean $\sigma_{\mathrm n}$ between $\SimSnAlt$ and $\SimSnRbfSn$, truth $0.1$). With the RBF kernel, which is the data-generating kernel, estimating the noise gives results close to TPGPMM's ($F_1$ of $\SimFoneHomRbfSn$ and $\SimFonePowRbfSn$), including the slight over-segmentation of the homogeneous scenario ($\hat K=\SimKHomRbfSn$). The two Mat\'ern-3/2 alternatives are the most accurate in that scenario ($F_1=\SimFoneHomAlt$, exact $K$ in $\SimExactHomAlt\%$ of the replications), even though their kernel differs from the data-generating one. They find the same modal partition in $98$ of the $100$ datasets, so the regime mean makes no difference to segmentation here. On simulated data the noise is therefore identifiable; on the real series it is not, and estimating it with the RBF kernel fragments the segmentation (Section~\ref{subsec:data_vix}).

Among the baselines, MM-A is the closest, with $\hat{K}$ tightly concentrated around the truth ($\SimKHomMmA\pm\SimKsdHomMmA$ and $\SimKPowMmA\pm\SimKsdPowMmA$) and $F_1$ of $\SimFoneHomMmA$ and $\SimFonePowMmA$. It rarely adds a spurious boundary but misses some and localises them less precisely (CP\_MAE $\SimMaeHomMmA$ and $\SimMaePowMmA$). BOCPD performs reasonably on power-law ($F_1 = 0.947$) but over-segments on homogeneous ($F_1 = 0.885$, $\hat{K} = 11.1 \pm 1.3$). MM-B and HDP-SLDS over-segment substantially in both scenarios, with lower precision and inflated CP\_MAE.

The posterior of the Pitman--Yor discount $\sigma$ adapts to the underlying segment-size distribution (Figure~\ref{fig:sigma_posterior_distribution}). In the homogeneous scenario, where the true partition is uniform, the posterior mean is small ($\bar{\sigma} = \SigHom$ on average), close to the DP limit. In the power-law scenario, where one dominant block coexists with many short ones, it moves well away from zero ($\bar{\sigma} = \SigPow$), consistent with a heavier-tailed Pitman--Yor regime.

\label{app:sigma_posterior}

Figure~\ref{fig:sigma_posterior_distribution} compares the posterior mean of $\sigma$ across the $50$ replicate seeds of the simulation study, separately for the two scenarios and for all four specifications. For every specification the two distributions are completely disjoint: across specifications, every homogeneous-scenario seed produces $\overline{\sigma} \in [\SigHomMinAll, \SigHomMaxAll]$ while every power-law seed produces $\overline{\sigma} \in [\SigPowMinAll, \SigPowMaxAll]$, and the four specifications give almost the same values. This per-seed separation provides direct evidence that the posterior on $\sigma$ adapts to the segment-size distribution of the data---staying near the Dirichlet-process limit when blocks are uniform and moving distinctly above zero when block sizes are heavy-tailed---rather than being pinned by the prior. Because it does not depend on the kernel, the noise model or the regime mean, the adaptation is a property of the partition prior. Including the truncation constant $Z_n$ in~\eqref{eq:truncated_prior} matters here: without it the effective prior on $\sigma$ has mean $0.04$ at $n=900$, which pulls both scenarios towards the Dirichlet-process limit.

\begin{figure}[ht]
    \centering
    \includegraphics[width=\linewidth]{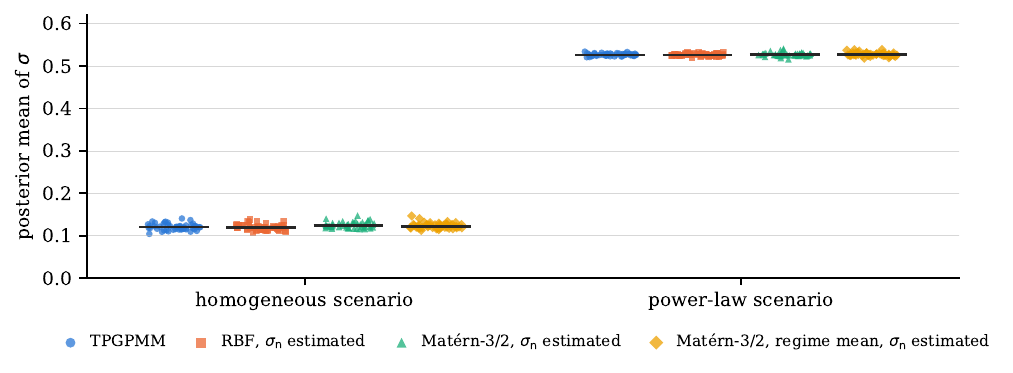}
    \caption{Distribution of the posterior mean of $\sigma$ across $50$ replicate seeds in the homogeneous and power-law scenarios, for TPGPMM and its three alternatives with estimated noise (one colour and marker per specification). Each point is the posterior mean for one seed, jittered within each specification; horizontal lines mark the medians.}
    \label{fig:sigma_posterior_distribution}
\end{figure}

These results highlight the role of two key ingredients in the TPGPMM architecture: a global Pitman--Yor-induced prior on contiguous compositions, which concentrates posterior mass on well-separated boundaries, and a within-block likelihood flexible enough to absorb regime-internal variation without creating spurious change points. In the synthetic setting, MM-A's Ornstein--Uhlenbeck process is expressive enough to generate good performance, whereas BOCPD and HDP-SLDS rely on local Markovian or run-length dynamics, and MM-B retains the partition prior but removes within-block dependence.

\subsection{VIX}
\label{subsec:data_vix}

We benchmark the posterior segmentation of log-VIX against six major market events, listed in Table~\ref{tab:vix_events}, and use them as economic references to interpret the inferred regimes rather than as model constraints.

\begin{table}[ht]
\centering
\caption{Six benchmark market events used to interpret the posterior VIX segmentation.}
\small
\begin{tabularx}{\textwidth}{l X}
\toprule
\textbf{Event (Date)} & \textbf{Market Impact} \\
\midrule
\textbf{China ``Black Monday''} (2015-08-24) & China slowdown triggered U.S.\ liquidity strains. \\
\textbf{Brexit Vote} (2016-06-24) & Surprise ``Leave'' vote sparked global equity sell-offs. \\
\textbf{``Volmageddon''} (2018-02-05) & VIX spike driven by short-volatility deleveraging. \\
\textbf{COVID Panic} (2020-03-16) & Pandemic lockdowns caused acute funding strains. \\
\textbf{Russia--Ukraine} (2022-02-24) & Invasion led to risk sell-offs and commodity surges. \\
\textbf{US Bank Turmoil} (2023-03-10) & Silicon Valley Bank failure sparked bank solvency and rate volatility. \\
\bottomrule
\end{tabularx}
\label{tab:vix_events}
\end{table}

\begin{figure}[t!]
    \centering
    \includegraphics[width=\linewidth]{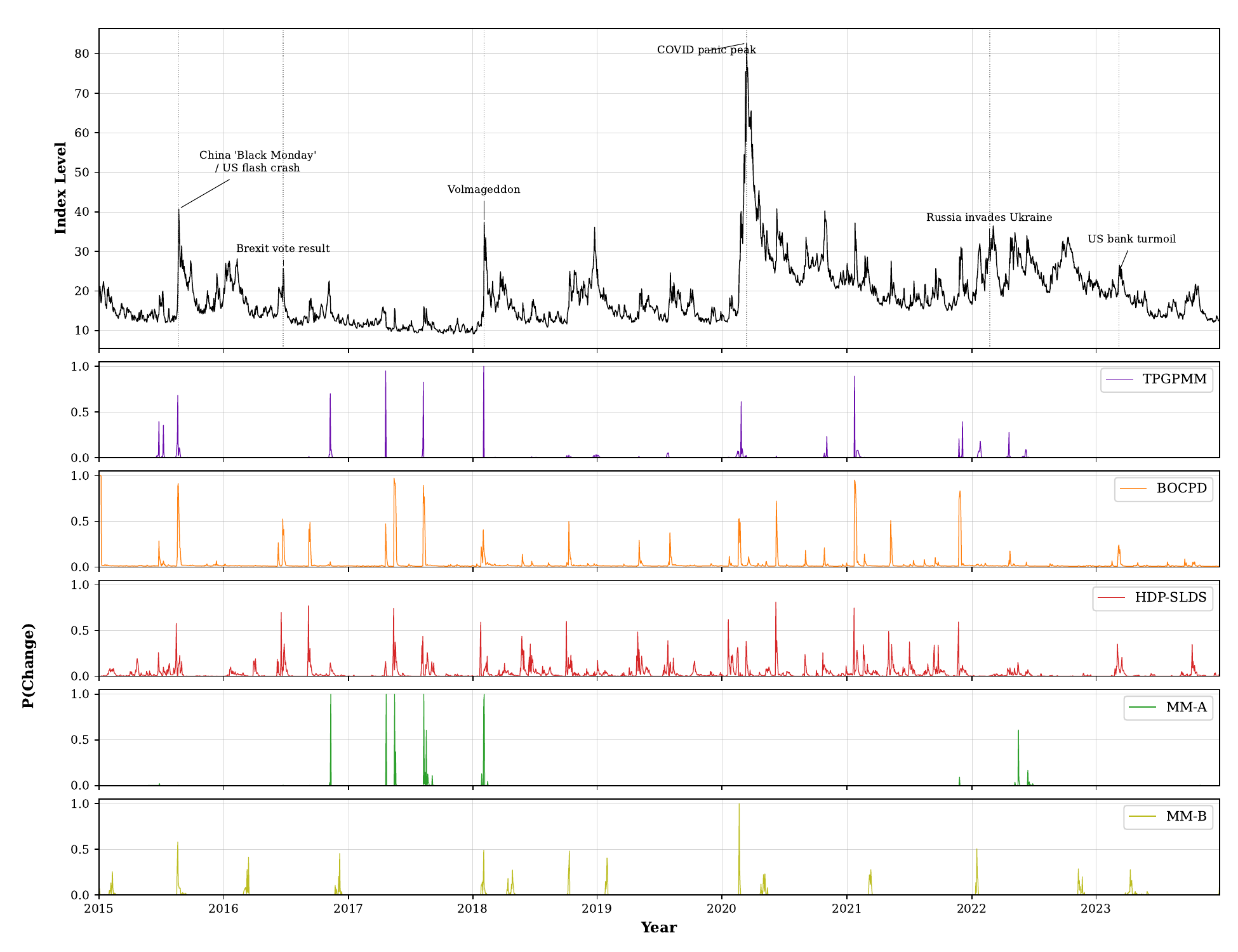}
    \caption{Analysis of the VIX dataset. Top panel: volatility index with the six benchmark market events. Lower five panels: posterior change-point probabilities $\widehat{\tau}_t$ for TPGPMM and the benchmark models.}
    \label{fig:vix_analysis_full}
\end{figure}

Figure~\ref{fig:vix_analysis_full} separates the five methods along two axes: the confidence of the change-point detections and the sparsity of the inferred segmentation. TPGPMM produces few, confident detections: its change-point probability exceeds $0.5$ at $\PeaksTpHalf$ locations and $0.2$ at $\PeaksTpFifth$. MM-A is as sparse ($\PeaksMmAHalf$ peaks above $0.5$ and $\MmKVixA$ regimes on average), but most of its boundaries fall between November 2016 and February 2018: it isolates the short spikes of August 2017 and February 2018 and places no boundary at the China sell-off ($\EvMmAVixA$) or the COVID-19 panic ($\EvMmAVixD$). MM-B has only $\PeaksMmBHalf$ peaks above $0.5$ but spreads moderate mass over many locations ($\PeaksMmBFifth$ above $0.2$). BOCPD and HDP-SLDS produce denser and more diffuse boundary probabilities ($\PeaksBocpdFifth$ and $\PeaksHdpFifth$ peaks above $0.2$), with HDP-SLDS in particular yielding a near-continuous stream of low-confidence transition candidates.

The posterior of TPGPMM places a mean of $\VixK$ regimes on the VIX (95\% interval $[\VixKlo,\VixKhi]$), and its modal partition has $K=\VixModalK$ regimes, with change-points on \VixCpA, \VixCpB, \VixCpC, \VixCpD, \VixCpE, \VixCpF, \VixCpG, \VixCpH, \VixCpI{} and \VixCpJ. The probability of a boundary within $\pm10$ trading days of these dates is $\VixCpPA$, $\VixCpPB$, $\VixCpPC$, $\VixCpPD$, $\VixCpPE$, $\VixCpPF$, $\VixCpPG$, $\VixCpPH$, $\VixCpPI$ and $\VixCpPJ$. Three of the six benchmark events are matched:
\begin{itemize}
\item \textbf{China, August 2015} ($\EvVixA$): a new regime opens on 21 August, the first day of the sell-off that culminated on 24 August.
\item \textbf{Volmageddon, February 2018} ($\EvVixC$): a new regime opens on 5 February 2018.
\item \textbf{COVID-19, March 2020} ($\EvVixD$): a new regime opens on 2 March 2020, at the start of the sell-off, two weeks before the 16 March peak.
\end{itemize}
The 2016 Brexit vote ($\EvVixB$), the Russia--Ukraine invasion ($\EvVixE$) and the 2023 bank turmoil ($\EvVixF$) receive no boundary mass: these short-lived spikes are absorbed as within-regime variation of regimes that began earlier. TPGPMM thus identifies persistent structural shifts rather than isolated excursions, at the cost of not marking every major event.

The alternatives with estimated noise behave differently. With the RBF kernel and estimated noise ($\sigma_{\mathrm n}=\VixSnRbf$), the posterior fragments the series into $\VixKRbfSn$ regimes on average. With the rougher Mat\'ern-3/2 kernel ($\sigma_{\mathrm n}=\VixSnAlt$), the posterior is again parsimonious ($\VixKAlt$ regimes) and matches China and Volmageddon ($\EvAltVixA$ and $\EvAltVixC$), but it absorbs the COVID-19 panic ($\EvAltVixD$) into a long, elevated-volatility regime that runs from February 2018 to June 2022.

The BOCPD posterior at index $t$ is a distribution over the run-length (time since the last change point) and tends to spread mass over windows of plausible boundary locations. Similarly, the Sticky HDP-SLDS uses a Markovian regime-switching prior on latent states but lacks global partition coherence, which can produce local and noisy transition probabilities.

Martinez--Mena also employs a Pitman--Yor-induced prior on ordered compositions; consequently, the differences between MM and TPGPMM are driven by the within-block likelihood rather than the prior. In the log-VIX application, the autocorrelation parameter of MM-A saturates near the unit root (posterior mean $\phi=\MmPhiVix$), so each regime behaves like a random walk whose accumulated drift absorbs persistent level shifts: MM-A merges the China sell-off and COVID-19 into long regimes while still isolating a few short spikes. MM-B, whose observations are independent within a block, reacts instead to transient level changes and uses $\MmKVixB$ regimes on average. TPGPMM, by contrast, models within-block dependence as a GP with non-Markovian covariance. Transient spikes can be absorbed into the within-block GP posterior, and the marginal likelihood penalizes adding new hyperparameters $(\ell_k, s_k^2)$ unless persistence and amplitude genuinely differ across a candidate boundary. This mechanism explains why TPGPMM's segmentation in Figure~\ref{fig:vix_analysis_full} is both sparse and aligned with persistent episodes.

\begin{figure}[t]
    \centering
    \includegraphics[width=\linewidth]{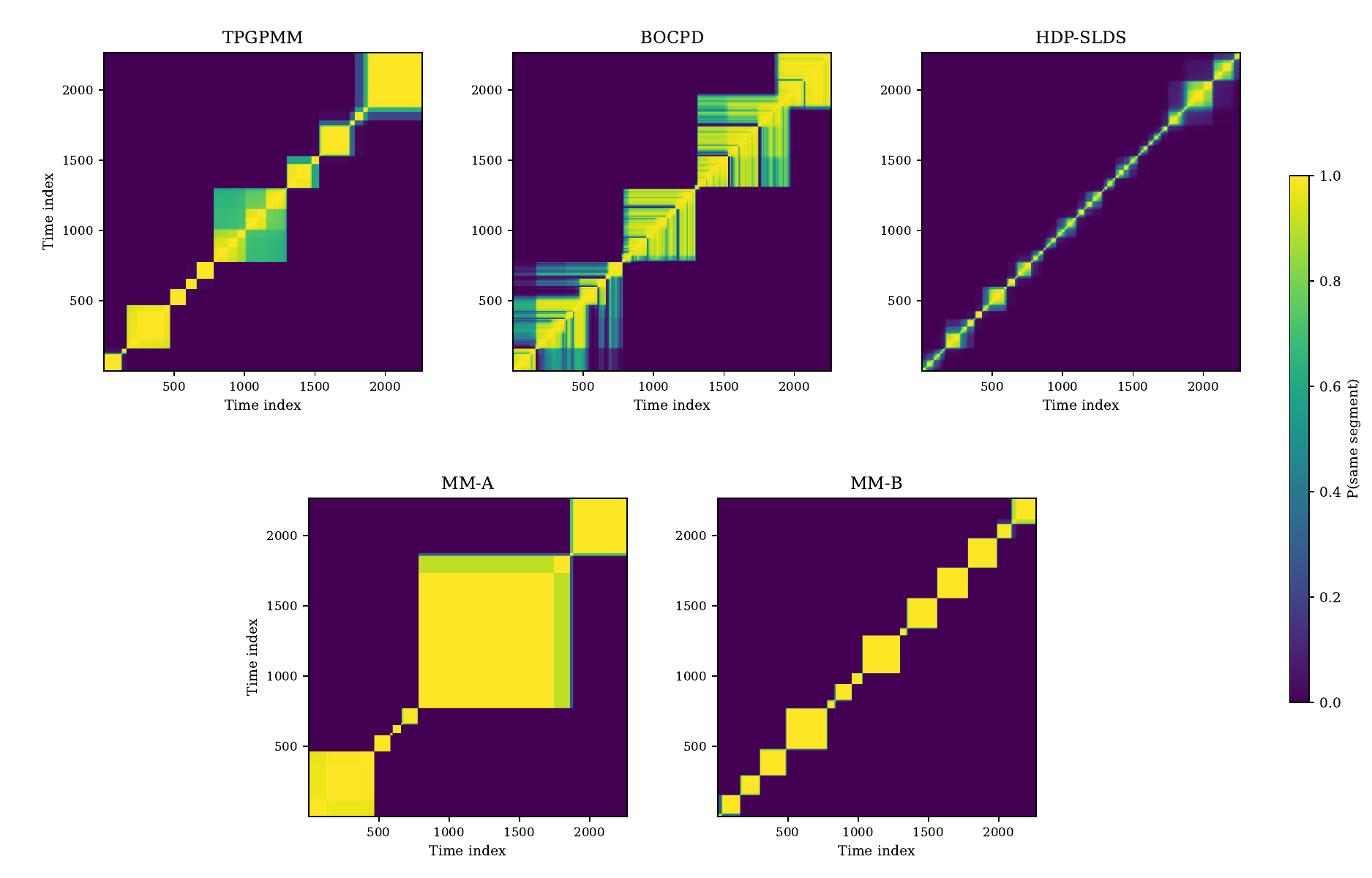}
    \caption{Posterior similarity matrices for the VIX time series under each method.}
    \label{fig:vix_psm}
\end{figure}

To assess partition stability, we compute the Posterior Similarity Matrix (PSM) for each method, where entry $(i,j)$ is the posterior probability that observations $i$ and $j$ belong to the same regime (Figure~\ref{fig:vix_psm}). TPGPMM yields a clean block-diagonal structure. Eight of its ten modal boundaries are sharp (probability $0.96$ or more). The main uncertainty concerns a possible split of the 2018--2020 regime during 2019 (boundary probabilities $0.21$--$0.28$) and the boundaries of late 2021 and 2022, which receive probabilities between $0.36$ and $0.68$. MM-A and MM-B also yield sharply block-diagonal PSMs. MM-A's is dominated by two long blocks, from 2015 to November 2016 and from February 2018 to May 2022, which merge the China sell-off and the Brexit vote, and the COVID-19 panic and the Russia--Ukraine invasion; MM-B's has many smaller blocks. BOCPD exhibits cross-block bleeding consistent with its diffuse run-length posterior, while HDP-SLDS collapses to a near-diagonal pattern indicative of many short, weakly distinguished regimes.

Across the five methods, only TPGPMM produces a segmentation that is at once sparse, confident and aligned with persistent market episodes. The results suggest that parsimonious change-point detection requires both a global prior on contiguous partitions and a within-block likelihood flexible enough to absorb transient variation without introducing unnecessary regimes.

\subsubsection{Interpretation of the inferred regimes}
\label{app:regime_interpretation}

To obtain an interpretable summary of the latent structure, we extract the \emph{modal partition} $\widehat{\rho}$, defined as the most frequently sampled composition $\rho^{(m)}$ in the MCMC output. It accounts for only $\VixModalShare\%$ of the draws, because boundary positions vary by a few days between draws; the boundary probabilities reported above show that the boundaries themselves are well identified. Conditional on this fixed partition, each block's kernel hyperparameters $\boldsymbol{\phi}_k = (\ell_k, s_k^2)$ are set to the posterior mean of $\{\boldsymbol{\phi}_k^{(m)}\}$ over the subset of MCMC draws with $\rho^{(m)} = \widehat{\rho}$, and the per-block posterior mean function is then computed by standard GP conditioning on the within-block data with the noise variance $\sigma^2_{\mathrm n}$. We refer to the resulting fit as the \emph{maximum a posteriori} (MAP) segmentation for brevity, although it is technically a partition-MAP with within-block posterior-mean kernels.

Figure~\ref{fig:vix_map} shows the MAP segmentation for the VIX series. The eleven inferred regimes recover an economically coherent narrative; the dates of the boundaries coincide with the events named below, which we use only as interpretation:
\begin{itemize}
\item \textbf{Block 1} is the calm first half of 2015 (VIX around $15$), with a long length-scale.
\item \textbf{Block 2} opens on 29 June 2015, when Greece closed its banks, and runs to the eve of the August sell-off.
\item \textbf{Block 3} runs from the China sell-off of August 2015 to the 2016 US election. It is a volatile phase with a short length-scale ($\ell=11.7$) that absorbs the Brexit vote.
\item \textbf{Blocks 4--6} capture the exceptionally calm period from November 2016 to January 2018, with the lowest VIX levels in the sample (around $10$--$12$) and long length-scales. Their boundaries fall in late April 2017, after the first round of the French presidential election, and on 10 August 2017, a short-lived spike.
\item \textbf{Block 7} opens with Volmageddon on 5 February 2018 and runs to the end of February 2020, absorbing the late-2018 sell-off.
\item \textbf{Block 8} is the COVID-19 regime, from 2 March 2020 to January 2021, with the highest level in the sample (VIX around $30$).
\item \textbf{Block 9} opens with the retail-trading volatility spike of late January 2021 and has the largest signal variance ($s^2=5.2$).
\item \textbf{Blocks 10 and 11} cover December 2021 to April 2022, which contains the Russia--Ukraine invasion, and the 2022 bear market and its normalisation through 2023, which contains the bank turmoil of March 2023. Their boundaries are the least certain ($\VixCpPI$ and $\VixCpPJ$).
\end{itemize}

Throughout this discussion, ``high-volatility'' and ``low-volatility'' regimes refer to the level of $\log(\mathrm{VIX}_t)$ implied by the
within-regime posterior mean of $f_k$, not to a model parameter: TPGPMM
characterises each regime by its kernel hyperparameters $(\ell_k,s_k^2)$---persistence and amplitude---and the level itself emerges from conditioning on the within-block data.

\begin{figure}[t]
    \centering
    \includegraphics[width=\linewidth]{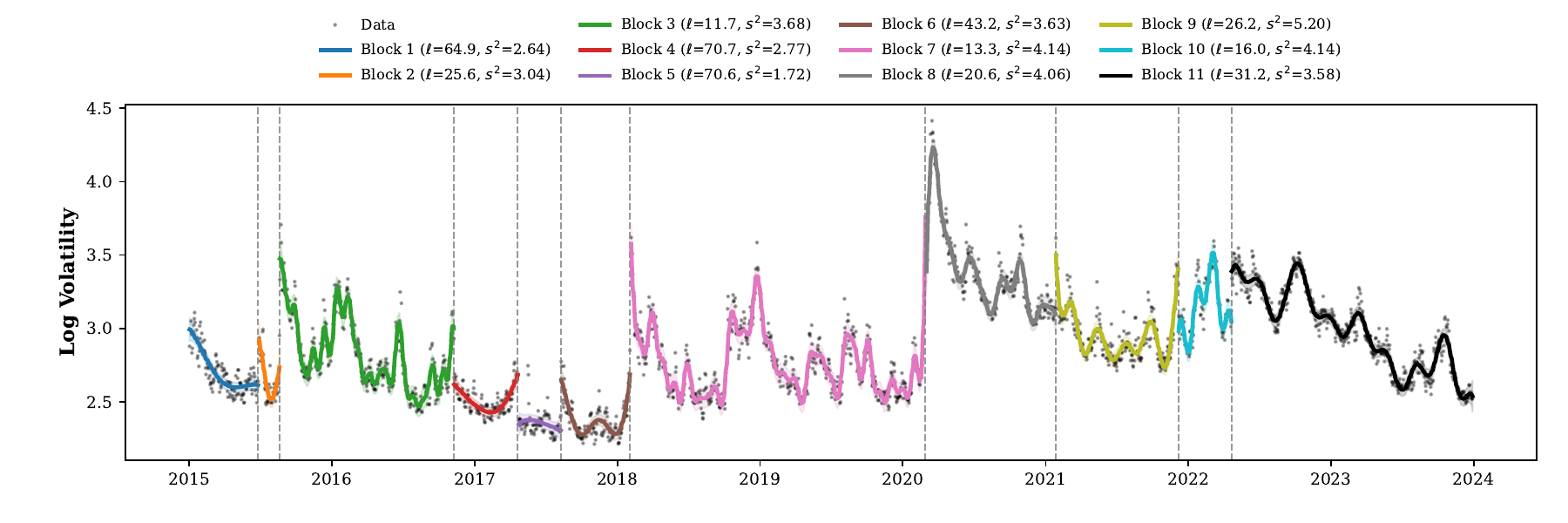}
    \caption{MAP segmentation of the VIX series. Coloured lines show the per-block GP posterior means under the modal partition $\widehat{\rho}$, with kernel hyperparameters fixed to the posterior mean across MCMC draws matching that partition. Dashed vertical lines mark the inferred internal change-points. The legend gives the averaged kernel hyperparameters $(\ell_k, s_k^2)$ for each of the eleven blocks.}
    \label{fig:vix_map}
\end{figure}

\subsection{10-year Treasury yield}
\label{subsec:data_treasury}
\label{app:treasury}

The posterior of TPGPMM places a mean of $\TrK$ regimes on the standardised yield (95\% interval $[\TrKlo,\TrKhi]$), with a $\TrModalK$-regime MAP partition. We interpret it against three monetary-policy events, listed in Table~\ref{tab:treasury_events}.

\begin{table}[ht]
\centering
\caption{Benchmark monetary-policy events used to interpret the posterior
10-year Treasury segmentation.}
\small
\begin{tabularx}{\textwidth}{l X}
\toprule
\textbf{Event (Date)} & \textbf{Policy / market impact} \\
\midrule
\textbf{Fed mid-cycle easing} (2019-07-31) & First of three 2019 rate cuts as growth and trade-war risks mounted. \\
\textbf{COVID emergency cuts} (2020-03-15) & Policy rate cut to the zero lower bound; large-scale asset purchases. \\
\textbf{Hawkish pivot} (2021-12-15) & The Federal Open Market Committee (FOMC) accelerated tapering, signalling the 2022 tightening cycle. \\
\bottomrule
\end{tabularx}
\label{tab:treasury_events}
\end{table}

The MAP partition (Table~\ref{tab:treasury_segmentation}) splits the sample into five economically coherent regimes:
\begin{itemize}
\item \textbf{Normalisation and hiking cycle,} from 2015 to mid-2019. It covers the December 2015 lift-off, the 2016--2018 rate hikes and the balance-sheet runoff.
\item \textbf{Mid-cycle easing,} from mid-2019. The timing of this boundary is uncertain: the modal partition places it on 30 July, the day before the first 2019 cut ($\EvTrA$ within $\pm10$ days), but more posterior mass ($0.69$) falls in early September 2019.
\item \textbf{COVID and quantitative easing,} from 20 March 2020, five days after the emergency cuts of 15 March, with probability $\EvTrB$.
\item \textbf{Tightening onset,} from late February 2022, three weeks before the first hike (probability $0.47$).
\item \textbf{Restrictive policy,} from mid-June 2022, when the Fed moved to 75-basis-point hikes. The mass for this boundary is spread over June and July 2022.
\end{itemize}
The COVID cuts are identified with certainty, and the 2019 easing and the 2022 tightening are detected, but with uncertain timing. The hawkish pivot of December 2021 receives little mass of its own ($\EvTrC$). The Mat\'ern-3/2 alternative with estimated noise dates the tightening more sharply (probability $0.93$ in February 2022) but gives no boundary for the 2019 easing ($\EvAltTrA$). As with VIX, transient moves that do not persist are absorbed within regimes.

\begin{table}[ht]
\centering
\small
\caption{MAP segmentation of the standardised 10-year Treasury yield under TPGPMM. Mean yield (in percent) and daily-change standard deviation (in basis points) are computed within each inferred block; the boundary posterior probability is the posterior probability of a change-point within $\pm10$ trading days of the block's left boundary (regime~1 has no left boundary).}
\label{tab:treasury_segmentation}
\resizebox{\linewidth}{!}{%
\begin{tabular}{c>{\raggedright\arraybackslash}p{3.2cm}lrrrc}
\toprule
Regime & Interpretation & Dates & $N$ & Mean yield & Daily-change SD & Boundary post.\ prob. \\
\midrule
$1$ & Normalisation and hiking cycle & 2015-01-02 to 2019-07-29 & $1{,}144$ & $2.32\%$ & $4.22$ bp & --- \\
$2$ & Mid-cycle easing & 2019-07-30 to 2020-03-19 & $160$ & $1.65\%$ & $6.50$ bp & $0.22$ \\
$3$ & COVID; quantitative easing & 2020-03-20 to 2022-02-25 & $486$ & $1.19\%$ & $4.26$ bp & $1.00$ \\
$4$ & Tightening onset & 2022-02-28 to 2022-06-10 & $73$ & $2.61\%$ & $7.15$ bp & $0.47$ \\
$5$ & Restrictive policy & 2022-06-13 to 2023-12-29 & $388$ & $3.77\%$ & $7.74$ bp & $0.38$ \\
\bottomrule
\end{tabular}}
\end{table}

\section{Nonstationary prediction}
\label{sec:prediction}

This section evaluates TPGPMM as a nonstationary predictive model and asks whether it predicts better than a stationary GP. We compare the posterior predictive distributions of all four specifications with stationary GPs whose hyperparameters are fitted to the data: their in-sample fit on 2015--2023, and their forecasts over the held-out year 2024. With a regime mean and estimated noise, the nonstationary model fits the training series more closely than the stationary GPs and has lower forecast RMSE except at the shortest horizons. Here the estimated-noise alternatives are the relevant models, and TPGPMM shows what its conservative noise level costs.

The results reveal a trade-off between sparse segmentation and predictive accuracy. TPGPMM produces the sparse, confident segmentation described in Section~\ref{sec:regimes}, with $\VixK$ regimes on VIX and $\TrK$ on the Treasury yield on average. Estimating the noise improves prediction. Every alternative fits more closely than TPGPMM in sample on both series; on VIX, their CRPS/sd is $\InsRbfSnVixCrps$, $\InsAltVixCrps$ and $\InsFlexVixCrps$, against $\InsMainVixCrps$. The alternatives also forecast more accurately in most comparisons across horizons and metrics, and the regime-mean alternative in all of them. These predictive gains are accompanied by changes in the inferred segmentation. With the RBF kernel, estimating the noise fragments the series into $\VixKRbfSn$ regimes on VIX and $\TrKRbfSn$ on the Treasury yield, and the regime-mean alternative uses $\FlexKvix$ and $\FlexKtr$. The Mat\'ern-3/2 alternative remains parsimonious ($\VixKAlt$ and $\TrKAlt$ regimes) but misses the COVID-19 regime on VIX and the 2019 easing on the Treasury yield.

\subsection{Posterior predictive distribution}
\label{app:posterior_predictive}

To characterise the latent function through a mixture of GPs and quantify
in-sample uncertainty, we approximate the in-sample posterior predictive
distribution via Monte Carlo integration, averaging over the MCMC draws
of the partition and GP hyperparameters. The summaries reported here
evaluate $p(Y_t \mid \mathbf{y})$ at the same inputs $X_t$ used to fit
the model; standard GP conditioning extends the construction to any
$x^\star$ at no additional cost.

For a given MCMC sample $m$, observation $Y_t$ falls within some block
$k$ under $\rho^{(m)}$, and the conditional predictive distribution is
Gaussian by standard GP conditioning on the within-block data:
\begin{equation}
  p\!\bigl(Y_t \mid \rho^{(m)},\, \boldsymbol{\phi}_k^{(m)},\,
            \sigma^{2(m)}_{\mathrm{n}},\, \mathbf{y}\bigr)
  = \mathcal{N}\!\bigl(Y_t \mid \mu_t^{(m)}, \nu_t^{(m)}\bigr),
\end{equation}
with
\begin{align}
  \mu_t^{(m)} &= (\mathbf{k}_{*}^{(m)})^{\top}
                 \bigl(K_k^{(m)} + \sigma^{2(m)}_{\mathrm{n}}\,I\bigr)^{-1}
                 \mathbf{y}_{S_k}, \\
  \nu_t^{(m)} &= k_{**}^{(m)}
               - (\mathbf{k}_{*}^{(m)})^{\top}
                 \bigl(K_k^{(m)} + \sigma^{2(m)}_{\mathrm{n}}\,I\bigr)^{-1}
                 \mathbf{k}_{*}^{(m)}
               + \sigma^{2(m)}_{\mathrm{n}},
\end{align}
where $K_k^{(m)}$ is the kernel matrix for block $k$ at the training
inputs, $\mathbf{k}_{*}^{(m)}$ is the cross-covariance vector between
$X_t$ and the within-block training inputs, and $k_{**}^{(m)}$ is the
prior variance at $X_t$, all computed using $\boldsymbol{\phi}_k^{(m)}$.
Marginalising over the joint posterior of
$(\rho,\boldsymbol{\phi}_{1:K},\sigma,\theta,\sigma_{\mathrm n})$ yields the mixture
\begin{equation}
  p(Y_t \mid \mathbf{y}) \;\approx\;
  \frac{1}{M^\star} \sum_{m=1}^{M^\star}
  \mathcal{N}\!\bigl(Y_t \mid \mu_t^{(m)}, \nu_t^{(m)}\bigr).
  \label{eq:posterior_predictive_mixture}
\end{equation}
The predictive mean is the average of the $\mu_t^{(m)}$ and credible intervals are the exact quantiles of the mixture \eqref{eq:posterior_predictive_mixture}. This naturally propagates uncertainty in the partition, the within-regime GP hyperparameters and, when it is estimated, the noise level, and widens the bands in regions of structural ambiguity. Forecasts are built the same way: each draw forecasts from its own final regime, and the draws are averaged.

\begin{figure}[t]
    \centering
    \includegraphics[width=\linewidth]{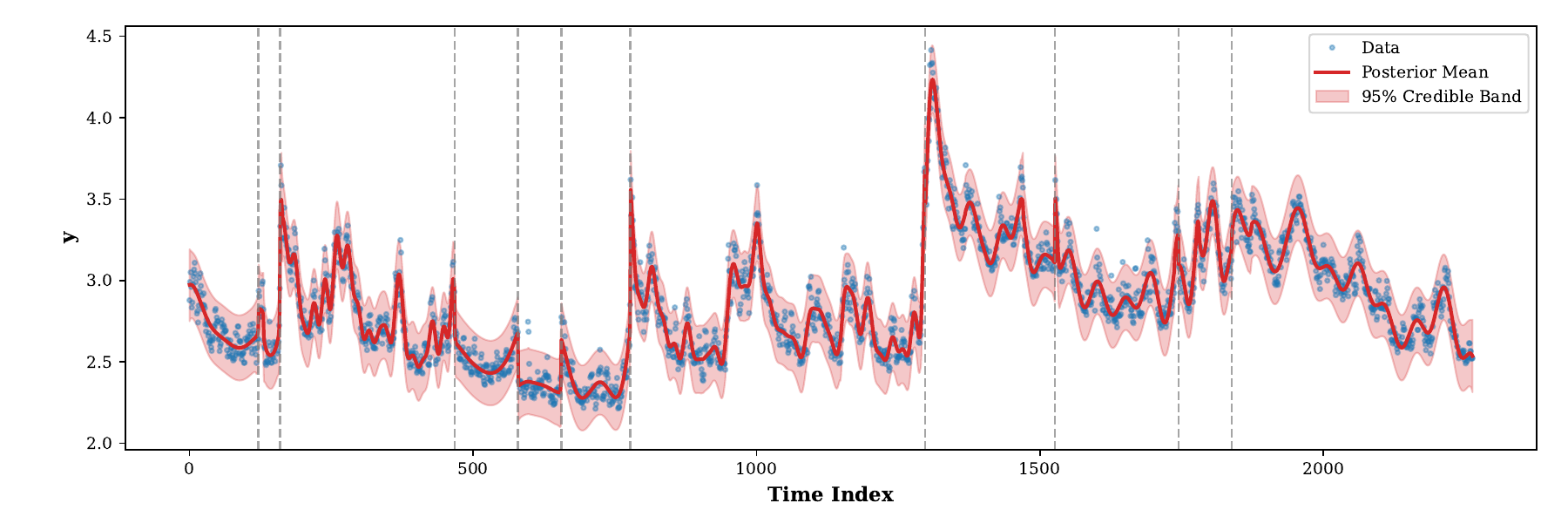}
    \caption{Posterior predictive mixture under TPGPMM on log-VIX. The red
    line is the posterior mean of $Y_t$, the shaded band is the $95\%$
    credible band, and the dashed vertical lines mark the modal-partition
    change-points. The construction follows
    Equation~\eqref{eq:posterior_predictive_mixture}.}
    \label{fig:vix_predictive}
\end{figure}

Figure~\ref{fig:vix_predictive} shows the posterior predictive mixture
on the log-VIX series. The red curve is the posterior mean of $Y_t$,
the shaded band is the pointwise $95\%$ credible band, and the dashed
vertical lines mark the modal-partition change-points discussed in
Section~\ref{subsec:data_vix}. The fitted predictive mean tracks the
series closely within each regime, including during the COVID-19 peak.

\subsection{In-sample fit}
\label{subsec:fit}

Table~\ref{tab:vix_predictive_comparison} reports the \emph{in-sample} posterior-predictive fit on the log-VIX series. On the scale-normalised metrics, which are the comparable ones because the baselines operate on different targets, TPGPMM (RMSE/sd $\InsMainVixRmse$, CRPS/sd $\InsMainVixCrps$) fits better than laGP, HDP-SLDS and MM-B, about as well as MM-A ($\InsMmAVixRmse$, $\InsMmAVixCrps$), and less closely than the fitted stationary GPs (RBF: $\InsGprVixRmse$, $\InsGprVixCrps$; Mat\'ern-3/2: $\InsGpmVixRmse$, $\InsGpmVixCrps$), because its conservative noise level deliberately attributes daily fluctuations to noise. MM-A has no observation noise, so its predictive of $y_t$ conditions on $y_{t-1}$ within the regime rather than on $y_t$ (Appendix~\ref{app:martinez_mena}). The alternatives with estimated noise fit more closely than all the change-point baselines, and the regime-mean alternative fits best of all ($\InsFlexVixRmse$, $\InsFlexVixCrps$). The fixed noise level that stabilises TPGPMM's segmentation therefore costs predictive accuracy, and we use TPGPMM for segmentation and the regime-mean alternative for prediction. Because in-sample fit rewards flexibility, the out-of-sample comparison is the forecasting exercise of Section~\ref{subsec:forecast}.

\begin{table}[ht]
\centering
\small
\caption{\emph{In-sample} posterior-predictive fit on the log-VIX series ($n\approx2264$ trading days). RMSE, NLPD and CRPS are the in-sample posterior-predictive metrics; RMSE/sd, $\Delta$NLPD and CRPS/sd are their scale-normalised variants. Because the baselines operate on different targets (second column), the normalised columns are the comparable ones. Lower is better throughout. The standard GPs have hyperparameters fitted by maximum marginal likelihood; the laGP and HDP-SLDS rows are from the original study, and Mart\'inez--Mena A and B are fitted with the settings of \citet{MartinezMena2014}. Mart\'inez--Mena A has no observation noise, so its predictive of $y_t$ conditions on $y_{t-1}$ within the regime (one step ahead) rather than on $y_t$ itself.}
\label{tab:vix_predictive_comparison}
\resizebox{\linewidth}{!}{%
\begin{tabular}{llrrrrrrr}
\toprule
Method & Target & RMSE & NLPD & CRPS & RMSE/sd & $\Delta$NLPD & CRPS/sd & $n$ \\
\midrule
\textbf{TPGPMM} & $\log$ VIX & $0.0714$ & $-1.1042$ & $0.0417$ & $0.2082$ & $-1.4537$ & $0.1215$ & $2264$ \\
TPGPMM ($\sigma_{\mathrm n}$ estimated) & $\log$ VIX & $0.0564$ & $-1.4339$ & $0.0318$ & $0.1642$ & $-1.7834$ & $0.0928$ & $2264$ \\
TPGPMM (Mat\'ern-3/2, $\sigma_{\mathrm n}$ estimated) & $\log$ VIX & $0.0409$ & $-1.7048$ & $0.0233$ & $0.1193$ & $-2.0543$ & $0.0678$ & $2264$ \\
TPGPMM (Mat\'ern-3/2, regime mean, $\sigma_{\mathrm n}$ estimated) & $\log$ VIX & $0.0261$ & $-2.0406$ & $0.0159$ & $0.0760$ & $-2.3901$ & $0.0463$ & $2264$ \\
\midrule
Standard GP (RBF) & $\log$ VIX & $0.0502$ & $-1.5255$ & $0.0277$ & $0.1464$ & $-1.8750$ & $0.0808$ & $2264$ \\
Standard GP (Mat\'ern-3/2) & $\log$ VIX & $0.0394$ & $-1.7117$ & $0.0223$ & $0.1149$ & $-2.0612$ & $0.0651$ & $2264$ \\
\midrule
laGP & $\log$ VIX & $0.1328$ & $-0.7501$ & $0.0691$ & $0.3869$ & $-1.0996$ & $0.2013$ & $2264$ \\
HDP-SLDS & $\log(r_t^2+\epsilon)$ & $0.7444$ & $1.2443$ & $0.3903$ & $0.2900$ & $-1.1174$ & $0.1520$ & $2263$ \\
Mart\'inez--Mena A & $\log$ VIX & $0.0769$ & $-1.2057$ & $0.0414$ & $0.2241$ & $-1.5552$ & $0.1206$ & $2264$ \\
Mart\'inez--Mena B & $\log$ VIX & $0.1441$ & $-0.4771$ & $0.0820$ & $0.4199$ & $-0.8266$ & $0.2388$ & $2264$ \\
\bottomrule
\end{tabular}}
\end{table}

On the Treasury yield, Table~\ref{tab:treasury_predictive_comparison} reports the in-sample posterior-predictive fit for the four specifications and the baselines. On the comparable scale-normalised
metrics, TPGPMM (RMSE/sd $\InsMainTrRmse$, CRPS/sd $\InsMainTrCrps$) fits better than laGP ($0.092$, $0.046$), HDP-SLDS and MM-B but less closely than MM-A ($\InsMmATrRmse$, $\InsMmATrCrps$), whose regimes, with $\phi=\MmPhiTr$, track the yield from one day to the next, and than the fitted stationary GPs (CRPS/sd $\InsGprTrCrps$ for RBF and $\InsGpmTrCrps$ for Mat\'ern-3/2). The alternatives with estimated noise fit more closely than all the change-point baselines, and the regime-mean alternative fits best ($\InsFlexTrRmse$, $\InsFlexTrCrps$).

\begin{table}[ht]
\centering
\small
\caption{\emph{In-sample} posterior-predictive fit on the standardised 10-year Treasury yield ($n\approx2{,}251$). Columns as in Table~\ref{tab:vix_predictive_comparison}.}
\label{tab:treasury_predictive_comparison}
\resizebox{\linewidth}{!}{%
\begin{tabular}{llrrrrrrr}
\toprule
Method & Target & RMSE & NLPD & CRPS & RMSE/sd & $\Delta$NLPD & CRPS/sd & $n$ \\
\midrule
\textbf{TPGPMM} & std.\ yield & $0.0605$ & $-1.1785$ & $0.0371$ & $0.0605$ & $-2.5972$ & $0.0371$ & $2251$ \\
TPGPMM ($\sigma_{\mathrm n}$ estimated) & std.\ yield & $0.0357$ & $-1.8700$ & $0.0203$ & $0.0357$ & $-3.2887$ & $0.0203$ & $2251$ \\
TPGPMM (Mat\'ern-3/2, $\sigma_{\mathrm n}$ estimated) & std.\ yield & $0.0244$ & $-2.1798$ & $0.0143$ & $0.0244$ & $-3.5985$ & $0.0143$ & $2251$ \\
TPGPMM (Mat\'ern-3/2, regime mean, $\sigma_{\mathrm n}$ estimated) & std.\ yield & $0.0215$ & $-2.2681$ & $0.0129$ & $0.0215$ & $-3.6868$ & $0.0129$ & $2251$ \\
\midrule
Standard GP (RBF) & std.\ yield & $0.0452$ & $-1.6580$ & $0.0250$ & $0.0452$ & $-3.0767$ & $0.0250$ & $2251$ \\
Standard GP (Mat\'ern-3/2) & std.\ yield & $0.0320$ & $-1.9500$ & $0.0181$ & $0.0320$ & $-3.3688$ & $0.0181$ & $2251$ \\
\midrule
laGP & std.\ yield & $0.0919$ & $-1.2189$ & $0.0460$ & $0.0919$ & $-2.6377$ & $0.0460$ & $2251$ \\
HDP-SLDS & $\log(\Delta y_t^2+\epsilon)$ & $1.9097$ & $1.4495$ & $0.7079$ & $0.3094$ & $-1.7894$ & $0.1147$ & $2250$ \\
Mart\'inez--Mena A & std.\ yield & $0.0588$ & $-1.5288$ & $0.0318$ & $0.0588$ & $-2.9476$ & $0.0318$ & $2251$ \\
Mart\'inez--Mena B & std.\ yield & $0.1428$ & $-0.4421$ & $0.0843$ & $0.1429$ & $-1.8608$ & $0.0843$ & $2251$ \\
\bottomrule
\end{tabular}}
\end{table}

\subsection{Out-of-sample forecasting}
\label{subsec:forecast}

Out-of-sample forecasting is the more demanding test. Table~\ref{tab:vix_oos_forecast} evaluates $1$--$5$ trading-day-ahead forecasts of log-VIX
on a held-out $2024$ window: all models are fit \emph{once} on the
$2015$--$2023$ training series and forecast forward with a daily rolling
origin, conditioning on the observed data up to each origin but without
re-estimation. TPGPMM forecasts from each draw's final inferred regime and averages over draws; the standard GPs are single stationary GPs with hyperparameters fitted on the training window.
\begin{itemize}
\item \textbf{The regime-mean alternative} is the most accurate model in CRPS at every horizon and in RMSE from two days ahead. At one day the fitted Mat\'ern GP has a slightly lower RMSE ($\FcStdVixHone$ against $\FcFlexVixHone$). The margins are small: at five days the RMSE/sd is $\FcFlexVixHfive$ against $\FcStdVixHfive$, and the CRPS/sd $\FcFlexCrpsVixHfive$ against $\FcStdCrpsVixHfive$.
\item \textbf{Without a regime mean,} every TPGPMM specification is less accurate than the fitted standard GP with the same kernel, at every horizon. This includes TPGPMM itself, whose RMSE/sd is $\FcMainVixHone$ at one day and $\FcMainVixHfive$ at five, against $\FcStdRVixHone$ and $\FcStdRVixHfive$ for the fitted RBF GP.
\item \textbf{Among the TPGPMM specifications,} all three alternatives with estimated noise forecast more accurately than TPGPMM in CRPS at every horizon. In RMSE they do so at every horizon, except the RBF alternative at four and five days.
\end{itemize}
On VIX, the regime structure therefore matches the best stationary GP in forecasting only when each regime has its own mean and the noise is estimated. The value of TPGPMM itself lies in the segmentation.

\begin{table}[ht]
\centering
\small
\caption{Out-of-sample forecasting on log-VIX. All models are fit once on the $2015$--$2023$ training window and forecast $1$--$5$ trading days ahead across $2024$ ($252$ days) with a daily rolling origin, conditioning on observed data up to each origin but not re-estimated. TPGPMM forecasts are posterior mixtures over 500 draws, each forecasting from its final regime; the standard GPs are single stationary GPs with hyperparameters fitted by maximum marginal likelihood on the training window. Entries are horizon-wise RMSE and CRPS normalised by the test standard deviation; lower is better, best in bold.}
\label{tab:vix_oos_forecast}
\resizebox{\linewidth}{!}{%
\begin{tabular}{lccccc@{\hspace{1.2em}}ccccc}
\toprule
 & \multicolumn{5}{c}{RMSE/sd} & \multicolumn{5}{c}{CRPS/sd} \\
\cmidrule(lr){2-6}\cmidrule(lr){7-11}
Model & $h=1$ & 2 & 3 & 4 & 5 & $h=1$ & 2 & 3 & 4 & 5 \\
\midrule
TPGPMM & 0.706 & 0.893 & 1.069 & 1.237 & 1.400 & 0.369 & 0.473 & 0.574 & 0.671 & 0.761 \\
TPGPMM ($\sigma_{\mathrm n}$ estimated) & 0.636 & 0.846 & 1.046 & 1.243 & 1.435 & 0.322 & 0.436 & 0.544 & 0.649 & 0.752 \\
TPGPMM (Mat\'ern-3/2, $\sigma_{\mathrm n}$ estimated) & 0.548 & 0.780 & 0.978 & 1.146 & 1.288 & 0.272 & 0.397 & 0.501 & 0.590 & 0.674 \\
TPGPMM (Mat\'ern-3/2, regime mean, $\sigma_{\mathrm n}$ estimated) & 0.476 & \textbf{0.664} & \textbf{0.796} & \textbf{0.886} & \textbf{0.940} & \textbf{0.227} & \textbf{0.325} & \textbf{0.392} & \textbf{0.441} & \textbf{0.482} \\
\midrule
Standard GP (RBF) & 0.503 & 0.726 & 0.886 & 0.978 & 1.003 & 0.249 & 0.360 & 0.440 & 0.496 & 0.533 \\
Standard GP (Mat\'ern-3/2) & \textbf{0.470} & 0.670 & 0.808 & 0.898 & 0.946 & 0.230 & 0.332 & 0.402 & 0.455 & 0.496 \\
\bottomrule
\end{tabular}}
\end{table}

On the Treasury yield we follow the same protocol (Table~\ref{tab:treasury_oos_forecast}).
\begin{itemize}
\item \textbf{In RMSE,} the regime-mean alternative and the fitted Mat\'ern GP are close. The GP is more accurate at one and two days ($\FcStdTrHone$ against $\FcFlexTrHone$ at one day), and the regime-mean alternative from three days ahead ($\FcFlexTrHfive$ against $\FcStdTrHfive$ at five days).
\item \textbf{In CRPS,} the fitted Mat\'ern GP is more accurate at every horizon ($\FcStdCrpsTrHfive$ against $\FcFlexCrpsTrHfive$ at five days). The regime-mean alternative's predictive bands are too wide, which the CRPS penalises: all 2024 observations fall inside its $95\%$ five-day interval.
\item \textbf{Within the RBF family,} TPGPMM is less accurate than the fitted RBF GP at one day but more accurate from three days ahead ($\FcMainTrHfive$ against $\FcStdRTrHfive$ at five days); with estimated noise, the RBF version is more accurate at every horizon. The Mat\'ern-3/2 version without a regime mean is less accurate than the fitted Mat\'ern GP at every horizon.
\item \textbf{Among the TPGPMM specifications,} the RBF and regime-mean alternatives forecast more accurately than TPGPMM at every horizon in both metrics. The Mat\'ern-3/2 alternative does so in RMSE at every horizon except four days, but in CRPS only at one and two days.
\end{itemize}
As on VIX, the regime structure gives forecasts comparable to a well-tuned stationary GP rather than clearly better ones. Its advantages are the interpretable segmentation and, for the regime-mean alternative, the fit.

\begin{table}[ht]
\centering
\small
\caption{Out-of-sample forecasting on the standardised 10-year Treasury yield over $2024$ ($250$ days), with the protocol of Table~\ref{tab:vix_oos_forecast}.}
\label{tab:treasury_oos_forecast}
\resizebox{\linewidth}{!}{%
\begin{tabular}{lccccc@{\hspace{1.2em}}ccccc}
\toprule
 & \multicolumn{5}{c}{RMSE/sd} & \multicolumn{5}{c}{CRPS/sd} \\
\cmidrule(lr){2-6}\cmidrule(lr){7-11}
Model & $h=1$ & 2 & 3 & 4 & 5 & $h=1$ & 2 & 3 & 4 & 5 \\
\midrule
TPGPMM & 0.331 & 0.433 & 0.530 & 0.632 & 0.729 & 0.194 & 0.247 & 0.301 & 0.359 & 0.414 \\
TPGPMM ($\sigma_{\mathrm n}$ estimated) & 0.282 & 0.386 & 0.484 & 0.595 & 0.699 & 0.158 & 0.216 & 0.272 & 0.339 & 0.403 \\
TPGPMM (Mat\'ern-3/2, $\sigma_{\mathrm n}$ estimated) & 0.265 & 0.428 & 0.525 & 0.639 & 0.725 & 0.149 & 0.246 & 0.320 & 0.406 & 0.483 \\
TPGPMM (Mat\'ern-3/2, regime mean, $\sigma_{\mathrm n}$ estimated) & 0.250 & 0.383 & \textbf{0.448} & \textbf{0.531} & \textbf{0.595} & 0.141 & 0.222 & 0.274 & 0.330 & 0.377 \\
\midrule
Standard GP (RBF) & 0.291 & 0.447 & 0.597 & 0.761 & 0.909 & 0.161 & 0.246 & 0.331 & 0.429 & 0.517 \\
Standard GP (Mat\'ern-3/2) & \textbf{0.248} & \textbf{0.365} & 0.452 & 0.545 & 0.614 & \textbf{0.137} & \textbf{0.201} & \textbf{0.248} & \textbf{0.304} & \textbf{0.347} \\
\bottomrule
\end{tabular}}
\end{table}

\section{Conclusions}
\label{sec:conclusions}

We introduced the Time-partitioning Gaussian Process Mixture Model, a Bayesian
nonparametric framework that infers a contiguous segmentation of a
time-ordered series jointly with regime-specific Gaussian processes,
learning the kernel and partition hyperparameters, including the Pitman--Yor discount
$\sigma$, directly from the data, with the observation noise fixed at a conservative, data-informed level in the main specification. The model serves two purposes. For regime identification, TPGPMM localises the simulated change points more accurately than every change-point baseline, with an $F_1$ at the level of the best one, Martinez--Mena A, and the alternatives with estimated noise perform comparably. On the log-VIX series it produces a confident segmentation whose boundaries match three of the six benchmark events (the China sell-off, Volmageddon and COVID-19), while shorter-lived spikes are absorbed within regimes; on the Treasury yield it identifies the COVID-19 cuts with certainty and the 2019 easing and the 2022 tightening with uncertain timing. For nonstationary prediction, the specifications that estimate the noise fit better than the change-point baselines, and the regime-mean alternative fits better than a fitted stationary GP and has lower forecast RMSE except at the shortest horizons (Section~\ref{sec:prediction}). The two purposes trade sparsity against predictive accuracy: fixing the noise at a conservative level keeps TPGPMM's segmentation sparse and aligned with the events, while estimating it improves fit and forecasts but either fragments the segmentation or misses events.

For segmentation, the results point to three ingredients. The Pitman--Yor-induced prior on contiguous compositions adapts between the Dirichlet-process limit and heavier-tailed segment-size structures; the non-Markovian GP likelihood absorbs short-range fluctuations as within-regime dynamics and penalises unnecessary boundaries through the marginal likelihood; and the conservative noise level keeps daily fluctuations from being modelled as new regimes. Without any one of them the segmentation changes: estimating the noise with the RBF kernel fragments the series; Martinez--Mena A, with the same prior but Markovian regimes, is as sparse but places its boundaries at short spikes rather than at the China sell-off and COVID-19; Martinez--Mena B, with independent observations, reacts to transient level changes; and BOCPD and HDP-SLDS produce diffuse, frequent boundaries.

TPGPMM is most useful for series that decompose into persistent regimes with non-Markovian within-regime dynamics, where transient shocks should be absorbed rather than flagged and each regime warrants a full functional description rather than only a mean or variance shift, as in regime analysis for risk management and policy evaluation; it is correspondingly less suited to rapidly switching or recurring states. With respect to limitations, we highlight three. First, inference is computationally demanding: each iteration factorises the covariances of the regimes it updates, at a cost cubic in their lengths, so specifications with fewer and longer regimes are slower. On one core, a $60{,}000$-iteration chain on the VIX takes about $\MinPerChainVix$ minutes for TPGPMM, $\MinPerChainVixRbfSn$ and $\MinPerChainVixFlex$ minutes for the RBF and regime-mean alternatives, and $\MinPerChainVixAlt$ minutes for the Mat\'ern-3/2 alternative, which has the fewest and longest regimes. This is practical for series of several thousand observations, which suits macro-financial regime analysis on daily, monthly or quarterly data, whereas multivariate extensions and more complex kernels would raise the cost; we deliberately prioritised an accurate approximation of the posterior (Appendix~\ref{app:exact}) over scaling to very long series. Second, although forecasts average over posterior draws, each draw forecasts from its own final regime, so the predictive mixture reverts towards the recent regime level and cannot anticipate a new regime; the regime-mean alternative improves on a well-tuned stationary GP only by small margins and beyond the shortest horizons, and on the Treasury yield the GP remains more accurate in CRPS, so a persistent component shared across regimes, such as a random-walk or long-range Ornstein--Uhlenbeck kernel, is the natural extension. Third, the composition prior is defined for a fixed series length, so online use would require a sequential construction.

\newpage

\bibliographystyle{plainnat} \bibliography{references}

\newpage

\appendix

\section{Derivation of the Pitman--Yor prior on ordered segmentations}
\label{app:pyp-composition-prior}

A Gibbs-type prior is characterised by an EPPF of the form
\[
  p_K^{(n)}(n_1,\ldots,n_K)
  = V_{n,K}\prod_{k=1}^K (1-\sigma)_{n_k-1},
\]
where $\sigma<1$ is a discount parameter, $(1-\sigma)_{n_k-1}$ denotes the rising factorial $(1-\sigma)(2-\sigma)\cdots(n_k-1-\sigma)$, and the nonnegative weights $V_{n,K}$ satisfy the backward recursion
\[
  V_{n,K} = (n - K\sigma)\,V_{n+1,K} + V_{n+1,K+1},
  \qquad V_{1,1}=1.
\]

Different choices of $(\sigma,V_{n,K})$ recover familiar Bayesian nonparametric priors. In particular:

\noindent
\begin{tabularx}{\textwidth}{X X}
\textbf{Dirichlet process.} For $\sigma=0$,
\[
  V_{n,K} = \frac{\theta^{K-1}}{(\theta+1)_{n-1}},
\]
which implies logarithmic growth of $K$.
&
\textbf{Pitman--Yor process.} For $\sigma\in(0,1)$,
\[
  V_{n,K} = \frac{\prod_{i=1}^{K-1}(\theta+i\sigma)}{(\theta+1)_{n-1}},
\]
which implies polynomial growth of $K$.
\end{tabularx}

For $\sigma \ge 0$, Gibbs-type priors are nonparametric and provide explicit control over the growth of the number of blocks through the discount parameter $\sigma$.

We derive the prior in \eqref{eq:ordered_py_composition_final} from the EPPF of the Pitman--Yor process.
Let $\Pi_n$ denote an exchangeable random partition of $[n]=\{1,\ldots,n\}$
with $K$ blocks, and suppose that the block sizes are
$(n_1,\ldots,n_K)$, where $n_k\ge 1$ and $\sum_{k=1}^K n_k=n$.
For an exchangeable partition with EPPF $p_K^{(n)}$, the probability of any
particular unordered partition of $[n]$ with these block sizes is
\[
  p_K^{(n)}(n_1,\ldots,n_K),
\]
where the EPPF is symmetric in its arguments.

To induce a distribution on ordered compositions, we use the exchangeable
random order construction of \cite{Pitman2006}. Conditional on the unordered
partition and on $K$, the $K$ blocks are placed in a uniformly random order.
Hence each ordered arrangement of the $K$ blocks has conditional probability
$1/K!$.

Now fix an ordered composition
\[
  \rho=(n_1,\ldots,n_K).
\]
The number of ordered assignments of the labels $\{1,\ldots,n\}$ to blocks
of sizes $n_1,\ldots,n_K$ is the multinomial coefficient
\[
  \binom{n}{n_1,\ldots,n_K}
  =
  \frac{n!}{\prod_{k=1}^K n_k!}.
\]
Each ordered assignment corresponds to an unordered exchangeable partition
together with a particular ordering of its $K$ blocks. Therefore, summing over
all ordered label assignments with block-size sequence $\rho$ gives
\[
  \Pr(\rho)
  =
  \binom{n}{n_1,\ldots,n_K}
  \frac{1}{K!}
  p_K^{(n)}(n_1,\ldots,n_K).
\]
This is precisely the induced probability mass function on compositions under
the exchangeable random order construction.

For the Pitman--Yor process with discount parameter $\sigma\in[0,1)$ and
strength parameter $\theta>-\sigma$, the EPPF is
\[
  p_K^{(n)}(n_1,\ldots,n_K)
  =
  \frac{\prod_{i=1}^{K-1}(\theta+i\sigma)}
       {(\theta+1)_{n-1}}
  \prod_{k=1}^K (1-\sigma)_{n_k-1},
\]
with the convention that the empty product equals one when $K=1$.
Substituting this EPPF into the composition probability above yields
\[
  \Pr(\rho)
  =
  \frac{n!}{K!\,\prod_{k=1}^K n_k!}\,
  \frac{\prod_{i=1}^{K-1}(\theta+i\sigma)}
       {(\theta+1)_{n-1}}
  \prod_{k=1}^K (1-\sigma)_{n_k-1}.
\]
This is the expression in \eqref{eq:ordered_py_composition_final}. Summing it over the compositions with $K$ blocks recovers the Pitman--Yor law of $K$; we checked this against the Chinese-restaurant recursion to machine precision.

Finally, in the segmentation model, an ordered composition
$\rho=(n_1,\ldots,n_K)$ determines a unique contiguous segmentation of
$[n]$ by assigning the first block to observations $1,\ldots,n_1$, the second
block to observations $n_1+1,\ldots,n_1+n_2$, and so on. Thus the probability
mass assigned to the ordered composition is inherited directly by the
corresponding contiguous segmentation. The resulting prior preserves the
Pitman--Yor distribution on the number and sizes of blocks while restricting
the support to temporally ordered contiguous regimes.

\paragraph{Two properties.} Writing $w_\sigma(m)=(1-\sigma)_{m-1}/m!$, the prior factorises as
\[
\Pr(\rho\mid\sigma,\theta)=\frac{n!}{(\theta+1)_{n-1}}\cdot\frac{\prod_{i=1}^{K-1}(\theta+i\sigma)}{K!}\cdot\prod_{k=1}^Kw_\sigma(n_k).
\]
The middle factor depends on $K$ only, so the prior is of Gibbs type but not a product partition model. The family is also not projective in $n$: removing the last observation from a draw at size $n+1$ does not give the prior at size $n$, because in a uniformly random order the last block is a uniformly chosen block, whereas the block containing a fixed element of an exchangeable partition is size-biased. For $n=5,8,11$ the total-variation distance between the two laws is $0.06$--$0.10$. The model is therefore defined for a fixed series length, which is why an online version needs a different construction (Section~\ref{sec:conclusions}).

\paragraph{Truncation constant.} The normalising constant of the truncated prior \eqref{eq:truncated_prior} is
\[
Z_n(\sigma,\theta)=\sum_{K\ge1}\frac{n!}{(\theta+1)_{n-1}}\,\frac{\prod_{i=1}^{K-1}(\theta+i\sigma)}{K!}\,A_K(\sigma),
\qquad
A_K(\sigma)=\sum_{\substack{n_1+\cdots+n_K=n\\ n_k\ge m_{\min}}}\ \prod_{k=1}^Kw_\sigma(n_k).
\]
The sums $A_K(\sigma)$, $K=1,\ldots,\lfloor n/m_{\min}\rfloor$, are computed by repeated convolution of the sequence $w_\sigma(m)\mathbf 1\{m\ge m_{\min}\}$ with rescaling, on a grid of $\sigma$ values, and interpolated linearly in $\sigma$; the product is evaluated as $\prod_{i<K}(\theta+i\sigma)=\sigma^{K-1}\Gamma(\theta/\sigma+K)/\Gamma(\theta/\sigma+1)$. The table takes about ten seconds for $n=2{,}264$ and matches brute-force enumeration exactly for $n=12$ and $18$. For $\sigma=0$ and $n\to\infty$, $Z_n\to e^{-1.5\theta}$ when $m_{\min}=3$, so omitting it has a large effect on $\theta$.

\section{Specifications of the within-regime model}
\label{app:specifications}

The four specifications share the partition prior~\eqref{eq:truncated_prior}, the minimum regime length, the priors on $(\sigma,\theta)$ and the prior $\log\ell_k\sim\mathcal N(\log10,0.6^2)$. They differ in the kernel, in whether each regime has its own mean, and in whether the observation noise is fixed or estimated (Table~\ref{tab:specifications}).

\begin{table}[ht]
\centering
\small
\caption{Within-regime specifications. The amplitude prior is $\log s_k^2\sim\mathcal N(0,0.8^2)$ except for the regime-mean alternative, for which it is $\mathcal N(\log\widehat{\mathrm{var}}(\mathbf y),1)$.}
\label{tab:specifications}
\begin{tabular}{llll}
\toprule
Specification & Kernel & Regime mean & Noise level $\sigma_{\mathrm n}$ \\
\midrule
TPGPMM & RBF & no & fixed at $0.1$ \\
TPGPMM ($\sigma_{\mathrm n}$ estimated) & RBF & no & estimated \\
TPGPMM (Mat\'ern-3/2, $\sigma_{\mathrm n}$ estimated) & Mat\'ern-3/2 & no & estimated \\
TPGPMM (Mat\'ern-3/2, regime mean, $\sigma_{\mathrm n}$ estimated) & Mat\'ern-3/2 & yes & estimated \\
\bottomrule
\end{tabular}
\end{table}

\paragraph{Estimated observation noise.} The noise level is shared across regimes, as in~\eqref{eq:gp_observation_model}, with prior
\[
\log\sigma_{\mathrm n}\sim\mathcal N(\log0.1,\,1),
\]
centred on the value fixed in the main specification. Because $\sigma_{\mathrm n}^2$ enters every block covariance, each update of $\sigma_{\mathrm n}$ recomputes all block likelihoods (Appendices~\ref{app:block_C} and~\ref{app:sampler_derivation}).

\paragraph{Mat\'ern-3/2 kernel.} The RBF kernel~\eqref{eq:rbf_kernel} is replaced by
\begin{equation}\label{eq:matern_kernel}
  C_{\boldsymbol{\phi}_k}(x,x')
  = s_k^2\left(1+\frac{\sqrt3\,|x-x'|}{\ell_k}\right)\exp\!\left(-\frac{\sqrt3\,|x-x'|}{\ell_k}\right),
\end{equation}
with the same block hyperparameters $\boldsymbol{\phi}_k=(\ell_k,s_k^2)$ and priors \citep{Stein1999}. Its sample paths are continuous and once mean-square differentiable, whereas those of the RBF kernel are infinitely differentiable, so it can follow rougher within-regime variation. The kernel is stationary, so the block covariances remain symmetric Toeplitz and the sampler of Section~\ref{sec:inference} applies unchanged.

\paragraph{Regime mean.} The regime-mean alternative writes the latent function of block $k$ as a constant level plus a zero-mean GP with the Mat\'ern-3/2 kernel~\eqref{eq:matern_kernel},
\[
  f_k(x)=\mu_k+g_k(x),\qquad
  g_k\mid\boldsymbol{\phi}_k\sim\mathcal{GP}\bigl(0,C_{\boldsymbol{\phi}_k}\bigr),\qquad
  \mu_k\sim\mathcal N(m_0,v_0),
\]
with $\mu_k$ independent across regimes and of $g_k$. Integrating out $\mu_k$ together with $f_k$ replaces the block marginal likelihood~\eqref{eq:block_marginal_likelihood} by
\[
  L_k(\boldsymbol{\phi}_k)=\mathcal N\!\Bigl(\mathbf y_{S_k}\,;\,m_0\mathbf 1,\;K_k(\boldsymbol{\phi}_k)+v_0\mathbf 1\mathbf 1^\top+\sigma_{\mathrm n}^2I_{|S_k|}\Bigr).
\]
Adding the constant $v_0$ to every entry keeps the covariance symmetric Toeplitz, so the sampler is again unchanged. We set $m_0=\bar y$ and $v_0=\widehat{\mathrm{var}}(\mathbf y)$, the sample mean and variance of the series, and centre the amplitude prior on the sample variance, $\log s_k^2\sim\mathcal N(\log\widehat{\mathrm{var}}(\mathbf y),1)$. The predictive distributions of Section~\ref{app:posterior_predictive} apply with $\mathbf y_{S_k}$ replaced by $\mathbf y_{S_k}-m_0\mathbf 1$, every kernel value increased by $v_0$, and $m_0$ added to the predictive mean.

\section{MCMC implementation details}
\label{app:mcmc_details}

This appendix gives the full specification of the four update blocks summarised in Section~\ref{sec:inference}. We write $a_k=\log\ell_k$ and $b_k=\log s_k^2$, and $\varphi_\ell,\varphi_s$ for the normal prior densities of $a_k,b_k$, including their normalising constants. The normalising constants matter because a split or a merge changes the number of hyperparameters. A derivation of every ratio is given in the accompanying note on the sampler equations.

\subsection{Block A: structural updates of $\rho$}
\label{app:block_A}

We enforce a minimum block length $m_{\min}=3$. The structural move at each
iteration is a \emph{split}, which inserts a new internal boundary inside an existing splittable block, or a \emph{merge}, which removes an internal boundary, fusing two adjacent blocks. If $K=1$ a split is proposed; otherwise a split is proposed with probability $q=1/2$ and a merge otherwise. We write $q_K$ for the probability of proposing a split when there are $K$ blocks and $s(\rho)$ for the number of splittable blocks (those with $n_j\ge2m_{\min}$). Each structural move is followed by exact Gibbs draws of two regime boundaries.

\paragraph{Split ($K\to K+1$).}
Choose $j$ uniformly from the splittable blocks and draw the length $c$ of the left child, $m_{\min}\le c\le n_j-m_{\min}$, from
\[
\psi_j(c)=\frac{0.1}{n_j-2m_{\min}+1}+0.9\,\pi_j(c),
\]
where
\[
\pi_j(c)\propto w_\sigma(c)\,w_\sigma(n_j-c)\,\mathcal N(\mathbf y_{1:c}^{(j)};\boldsymbol\phi_j)\,\mathcal N(\mathbf y_{c+1:n_j}^{(j)};\boldsymbol\phi_j)
\]
is the conditional distribution of the split point given the parent's hyperparameters \citep{Zanella2020}; the uniform component keeps every split point reachable. All the prefix and suffix likelihoods come from one Cholesky factorisation of the parent's covariance (Section~\ref{sec:inference}). The children's hyperparameters are
\[
a_L=a_j+w_Ru_\ell,\quad a_R=a_j-w_Lu_\ell,\quad b_L=b_j+w_Ru_s,\quad b_R=b_j-w_Lu_s,\qquad u_\ell,u_s\sim\mathcal N(0,\tau^2),
\]
with $w_L=c/n_j$, $w_R=1-w_L$ and $\tau=0.5$. This map is invertible, with inverse $a_j=w_La_L+w_Ra_R$ and $u_\ell=a_L-a_R$ (and likewise for $b$), and its Jacobian on the log scale is one. The split is accepted with probability $\min\{1,A\}$, where
\begin{multline*}
A=\frac{\theta+K\sigma}{K+1}\,\frac{w_\sigma(c)\,w_\sigma(n_j-c)}{w_\sigma(n_j)}\,
\frac{\varphi_\ell(a_L)\varphi_\ell(a_R)\varphi_s(b_L)\varphi_s(b_R)}{\varphi_\ell(a_j)\varphi_s(b_j)}\,
\frac{\mathcal N(\mathbf y_L;\boldsymbol\phi_L)\,\mathcal N(\mathbf y_R;\boldsymbol\phi_R)}{\mathcal N(\mathbf y_{S_j};\boldsymbol\phi_j)}\\
\times\frac{(1-q_{K+1})/K}{q_K\,\psi_j(c)/s(\rho)}\,\frac{1}{\mathcal N(u_\ell;0,\tau^2)\,\mathcal N(u_s;0,\tau^2)}.
\end{multline*}
The first two factors are the ratio of the partition priors \eqref{eq:truncated_prior}, in which $Z_n$ cancels. Equivalently, in terms of densities on $(\ell,s^2)$ the ratio carries the Jacobian $\ell_L\ell_R/\ell_j\cdot s^2_Ls^2_R/s^2_j$.

\paragraph{Merge ($K\to K-1$).}
Choose an internal boundary uniformly at random and merge the two adjacent
blocks. The merged block receives the size-weighted geometric mean of the
two children's hyperparameters, $a_j=w_La_L+w_Ra_R$ and $b_j=w_Lb_L+w_Rb_R$, and $u=(a_L-a_R,\,b_L-b_R)$. The merge is accepted with probability $\min\{1,A^{-1}\}$, with $A$ evaluated at the split that would recreate the current state from the merged one, including $\psi_j$ and $s(\cdot)$ of the merged composition. Because $\psi_j$ depends only on the pre-split state, which the merge recovers exactly, the ratio is valid for any choice of $\psi_j$.

\paragraph{Boundary Gibbs draws.}
For two internal boundaries chosen at random, the boundary between blocks $i$ and $i+1$, with combined length $T=n_i+n_{i+1}$ starting at position $a$, is drawn from its full conditional
\[
p(c\mid\cdot)\ \propto\ w_\sigma(c)\,w_\sigma(T-c)\,\mathcal N(\mathbf y_{a+1:a+c};\boldsymbol\phi_i)\,\mathcal N(\mathbf y_{a+c+1:a+T};\boldsymbol\phi_{i+1}),\qquad c=m_{\min},\ldots,T-m_{\min},
\]
with the hyperparameters of both blocks held fixed. Unlike a uniform boundary shuffle, this move never proposes an implausible position, and it is cheap: the prefix likelihoods of block $i$ and the suffix likelihoods of block $i+1$ each need a single Cholesky factorisation, so every candidate position is evaluated at the cost of one Metropolis proposal.

\subsection{Block B: kernel hyperparameter update}
\label{app:block_B}

Conditional on the current composition, we select three blocks at random and update $\log\ell_k$ and then $\log s_k^2$ of each by a random-walk Metropolis step,
\begin{equation*}
a_k' = a_k + \lambda_k\varepsilon,\qquad \varepsilon\sim\mathcal{N}(0,1),\qquad \lambda_k=\lambda\sqrt{100/n_k},
\end{equation*}
accepted with probability
\begin{equation}
\alpha = \min\!\Bigl\{1,\ \frac{\varphi_\ell(a_k')\,L_k(e^{a'_k},s_k^2)}{\varphi_\ell(a_k)\,L_k(e^{a_k},s_k^2)}\Bigr\},
\end{equation}
and likewise for $b_k$. Written with densities on $\ell_k$ the ratio carries the Jacobian factor $\ell_k'/\ell_k$. The scale $\lambda$ is adapted towards a 35\% acceptance rate during burn-in and fixed afterwards, so the retained chain is a valid Markov chain. The dominant cost is the $|S_k|\times|S_k|$ Cholesky factorisation required by $L_k$.

\subsection{Block C: Pitman--Yor and noise updates}
\label{app:block_C}

The discount $\sigma$ is updated by a random-walk Metropolis step on the logit scale (proposal SD $0.35$) and the strength $\theta$ on the log scale (proposal SD $0.4$), ten times each per iteration. The acceptance ratios are
\[
\frac{p(\sigma')\,\sigma'(1-\sigma')\,\Pr(\rho\mid\sigma',\theta)\,Z_n(\sigma,\theta)}{p(\sigma)\,\sigma(1-\sigma)\,\Pr(\rho\mid\sigma,\theta)\,Z_n(\sigma',\theta)},\qquad
\frac{p(\theta')\,\theta'\,\Pr(\rho\mid\sigma,\theta')\,Z_n(\sigma,\theta)}{p(\theta)\,\theta\,\Pr(\rho\mid\sigma,\theta)\,Z_n(\sigma,\theta')},
\]
where $\sigma'(1-\sigma')/[\sigma(1-\sigma)]$ and $\theta'/\theta$ are the Jacobians of the logit and log transformations. They involve only the partition prior and the hyperpriors, so no Cholesky factorisations are needed, but they do involve the truncation constant $Z_n$ of Appendix~\ref{app:pyp-composition-prior}.

When the noise is estimated, one random walk on $\log\sigma_{\mathrm n}$ per iteration (step adapted during burn-in) is accepted with the ratio of $\prod_kL_k$ times the normal prior density of $\log\sigma_{\mathrm n}$; it recomputes every block likelihood.

\subsection{Block D: bookkeeping}
\label{app:block_D}

To prevent numerical drift in the per-block log-marginals, the full set of likelihoods is recomputed from scratch every $200$ iterations. We discard the first half of each chain as burn-in and thin the retained chain every five iterations.

\section{Derivation of the sampler}
\label{app:sampler_derivation}

This appendix derives each update of Appendix~\ref{app:mcmc_details} from the detailed-balance condition and identifies, for each, the term whose omission changes the distribution the chain targets. For every such term we state the target that results without it; Table~\ref{tab:sampler_terms} summarises them, and Appendix~\ref{app:diag_verify} reports the tests that confirm the sampler targets~\eqref{eq:posterior}.

\subsection{Target in log coordinates}

We work with $a_k=\log\ell_k$ and $b_k=\log s_k^2$, so the hyperparameter priors are normal densities $\varphi_\ell(a)=\mathcal N(a;\mu_\ell,s_\ell^2)$ and $\varphi_s(b)=\mathcal N(b;\mu_s,s_s^2)$, \emph{including} their normalising constants $(2\pi s_\ell^2)^{-1/2}$ and $(2\pi s_s^2)^{-1/2}$. In these coordinates the posterior~\eqref{eq:posterior}, with $\sigma_{\mathrm n}$ fixed, is
\begin{equation}\label{eq:target_log}
\pi(\rho,a_{1:K},b_{1:K},\sigma,\theta\mid\mathbf y)\ \propto\ p(\sigma)\,p(\theta)\,\frac{\Pr(\rho\mid\sigma,\theta)}{Z_n(\sigma,\theta)}\prod_{k=1}^K\varphi_\ell(a_k)\,\varphi_s(b_k)\,L_k(e^{a_k},e^{b_k}),
\end{equation}
on the union over $K$ of spaces of dimension $2K+2$, with $\rho$ restricted to compositions with blocks of at least $m_{\min}$ observations. Each update below leaves~\eqref{eq:target_log} invariant. The splits and merges connect every composition to the single-block one, so the chain is irreducible on the partitions, and the rejections make it aperiodic.

\subsection{Split and merge}

\paragraph{The reversible-jump condition.} A split maps the state $(\rho,x)$ with $x=(a_{1:K},b_{1:K})$ to $(\rho',x')$ of dimension $2(K+1)$ by drawing auxiliary variables $u\sim g$ and applying a deterministic map $x'=T(x,u)$; the merge applies $T^{-1}$ and draws nothing. Detailed balance between the two moves holds when $T$ is a bijection, the dimensions match ($\dim x+\dim u=\dim x'$), and the split is accepted with probability $\min\{1,A\}$ \citep{green1995},
\begin{equation}\label{eq:green}
A=\frac{\pi(\rho',x')}{\pi(\rho,x)}\;\frac{q_{\mathrm{merge}}(\rho\mid\rho')}{q_{\mathrm{split}}(\rho'\mid\rho)}\;\frac{1}{g(u)}\;\left|\frac{\partial x'}{\partial(x,u)}\right|,
\end{equation}
and the merge with probability $\min\{1,A^{-1}\}$, with $A$ evaluated at the split that would undo it.

\paragraph{Dimension matching.} A split replaces $(a_j,b_j)$ by $(a_L,b_L,a_R,b_R)$, so it needs exactly two auxiliary variables. With $w_L=c/n_j$ and $w_R=1-w_L$,
\[
a_L=a_j+w_Ru_\ell,\qquad a_R=a_j-w_Lu_\ell,\qquad b_L=b_j+w_Ru_s,\qquad b_R=b_j-w_Lu_s .
\]
\begin{proposition}\label{prop:bijection}
The map $(a_j,u_\ell)\mapsto(a_L,a_R)$ is a bijection of $\mathbb R^2$ with inverse $a_j=w_La_L+w_Ra_R$, $u_\ell=a_L-a_R$, and its Jacobian determinant has absolute value one. The same holds for $(b_j,u_s)$.
\end{proposition}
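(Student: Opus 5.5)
The map is linear, so I will write it in matrix form and read off both invertibility and the Jacobian from a single $2\times2$ determinant. The weights $w_L=c/n_j$ and $w_R=1-w_L$ depend only on the split point $c$ and the parent length $n_j$, not on $(a_j,u_\ell)$. Conditional on $c$, the map $(a_j,u_\ell)\mapsto(a_L,a_R)$ is therefore the linear map
\[
\begin{pmatrix}a_L\\ a_R\end{pmatrix}=M\begin{pmatrix}a_j\\ u_\ell\end{pmatrix},\qquad
M=\begin{pmatrix}1 & w_R\\ 1 & -w_L\end{pmatrix}.
\]
Its determinant is $\det M=-w_L-w_R=-1$, using $w_L+w_R=1$. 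Since $\det M\neq0$, $M$ is invertible, so the map is a bijection of $\mathbb R^2$. Because the map is linear, its Jacobian matrix is $M$ at every point, and the absolute Jacobian determinant is $|\det M|=1$.

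Next I will check that the stated formulas really give the inverse, by direct substitution rather than by inverting $M$ abstractly. First,
\[
w_La_L+w_Ra_R=(w_L+w_R)a_j+w_Lw_Ru_\ell-w_Rw_Lu_\ell=a_j .
\]
Second, $a_L-a_R=(w_R+w_L)u_\ell=u_\ell$. In the other direction, substituting $a_j=w_La_L+w_Ra_R$ and $u_\ell=a_L-a_R$ back into the forward map returns $a_L$ and $a_R$. Both compositions are therefore the identity. Equivalently, one can confirm that $M^{-1}=\begin{pmatrix}w_L&w_R\\1&-1\end{pmatrix}$ by multiplying it against $M$.

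The $b$-coordinates obey exactly the same affine relations with the same weights, so the argument carries over verbatim to $(b_j,u_s)\mapsto(b_L,b_R)$. The full split map $(a_j,b_j,u_\ell,u_s)\mapsto(a_L,a_R,b_L,b_R)$ is block-diagonal in these two pairs, so its Jacobian determinant is the product of the two, with absolute value one. This is the form used in~\eqref{eq:green}.

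I expect no real obstacle. The one point that needs care is to state explicitly that $c$, and hence $w_L$ and $w_R$, is a discrete component of the proposal held fixed. It is not part of the continuous variables being transformed, so differentiating the weights contributes nothing to the Jacobian.
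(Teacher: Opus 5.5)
Your proof is correct and follows essentially the same route as the paper: it verifies the inverse by direct substitution and obtains the Jacobian from the same constant matrix $\begin{pmatrix}1&w_R\\1&-w_L\end{pmatrix}$, whose determinant is $-(w_L+w_R)=-1$. Your extra remarks, that $c$ is held fixed and that the full four-dimensional map is block-diagonal, are sound additions.
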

\begin{proof}
$w_La_L+w_Ra_R=a_j+(w_Lw_R-w_Rw_L)u_\ell=a_j$ and $a_L-a_R=(w_R+w_L)u_\ell=u_\ell$, so the stated map is the inverse. The Jacobian matrix is $\begin{pmatrix}1&w_R\\1&-w_L\end{pmatrix}$, whose determinant is $-(w_L+w_R)=-1$.
\end{proof}
The merged block therefore receives the size-weighted geometric means
\[
\ell_j=\ell_L^{w_L}\ell_R^{w_R},\qquad s_j^2=(s_L^2)^{w_L}(s_R^2)^{w_R},
\]
and the merge recovers $u$ exactly. A construction that perturbs each child independently around the parent draws four variables for a move that adds two. Its split and merge are then not inverse maps, \eqref{eq:green} has no valid Jacobian, and the chain does not target the posterior.

\paragraph{Natural scale.} Written in $(\ell,s^2)$, the hyperpriors are log-normal, $p(\ell)=\varphi_\ell(\log\ell)/\ell$, and the Jacobian of the split is $\ell_L\ell_R/\ell_j\cdot s_L^2s_R^2/s_j^2$. The two forms agree:
\[
\frac{p(\ell_L)\,p(\ell_R)}{p(\ell_j)}\cdot\frac{\ell_L\ell_R}{\ell_j}=\frac{\varphi_\ell(a_L)\,\varphi_\ell(a_R)}{\varphi_\ell(a_j)} .
\]

\paragraph{Prior ratio.} From~\eqref{eq:ordered_py_composition_final}, a split increases $K$ by one, multiplies $\prod_{i<K}(\theta+i\sigma)$ by $\theta+K\sigma$, divides by $K+1$ through $K!$, and replaces $w_\sigma(n_j)$ by $w_\sigma(c)\,w_\sigma(n_j-c)$. The truncation constant $Z_n(\sigma,\theta)$ is the same on both sides and cancels. The hyperparameter prior ratio is $\varphi_\ell(a_L)\varphi_\ell(a_R)\varphi_s(b_L)\varphi_s(b_R)/\{\varphi_\ell(a_j)\varphi_s(b_j)\}$, which has one more density factor in $a$ and one in $b$ in the numerator. The normalising constants therefore do \emph{not} cancel.
\begin{proposition}\label{prop:normconst}
If $\varphi_\ell,\varphi_s$ in the split--merge ratio are replaced by unnormalised Gaussian kernels, the chain targets~\eqref{eq:target_log} multiplied by $(2\pi s_\ell s_s)^K$.
\end{proposition}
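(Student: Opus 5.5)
The plan is to find the distribution that the modified split--merge ratio satisfies detailed balance for, by comparing it with the correct ratio~\eqref{eq:green}. Write $c_\ell=(2\pi s_\ell^2)^{-1/2}$ and $c_s=(2\pi s_s^2)^{-1/2}$, so that $\varphi_\ell(a)=c_\ell\,\tilde\varphi_\ell(a)$ and $\varphi_s(b)=c_s\,\tilde\varphi_s(b)$, where $\tilde\varphi$ denotes the unnormalised Gaussian kernel. In the split ratio $A$, the hyperparameter prior factor has four densities in the numerator and two in the denominator. Replacing them by kernels therefore gives $\tilde A = A/(c_\ell c_s)$, and the same holds for $\tilde A^{-1}$ in the merge. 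Every other factor is unchanged: the partition-prior ratio, the likelihoods, the proposal probabilities, $g(u)$ and the unit Jacobian of Proposition~\ref{prop:bijection}.

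Next, exhibit a target for which $\tilde A$ is exactly Green's ratio. Define $\tilde\pi(\rho,x)=\pi(\rho,x)\,\kappa^{K}$, with $K=|\rho|$. Using $\tilde\pi$ in~\eqref{eq:green} multiplies the target ratio by $\kappa^{(K+1)-K}=\kappa$ and leaves the rest unchanged. Matching this with $\tilde A=A/(c_\ell c_s)$ requires $\kappa=1/(c_\ell c_s)=2\pi s_\ell s_s$. Since $\tilde\pi$ has the same functional form as~\eqref{eq:target_log} on each fixed-$K$ subspace, the split/merge pair with acceptance $\min\{1,\tilde A\}$ and $\min\{1,\tilde A^{-1}\}$ satisfies detailed balance with respect to $\tilde\pi$. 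This follows from the same argument that gives detailed balance for $\pi$ (Green's condition with the bijection $T$).

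It remains to check that the other updates also leave $\tilde\pi$ invariant. The boundary Gibbs draws, the random-walk updates of $a_k,b_k$ in Block B, and the $(\sigma,\theta)$ and $\sigma_{\mathrm n}$ updates in Block C all keep $K$ fixed. Under those moves $\tilde\pi$ and $\pi$ differ only by a constant factor, so their conditionals and Metropolis ratios coincide. Moreover, any normalising constants in Block B cancel because $K$ does not change. The full chain is therefore $\tilde\pi$-invariant.

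To conclude that the chain targets $\tilde\pi$, invoke the irreducibility and aperiodicity already noted after~\eqref{eq:target_log}. Both properties depend only on the supports of the proposals, which are unchanged. One also needs $\tilde\pi$ to be normalisable. This holds because $K\le\lfloor n/m_{\min}\rfloor$, so the extra factor $\kappa^K$ is bounded. The step that needs the most care is the bookkeeping of how many density factors cancel between the numerator and the denominator of the split ratio. Getting the net count wrong would change the exponent of $\kappa$, so I would verify it directly against the form of $A$ in Appendix~\ref{app:block_A}.
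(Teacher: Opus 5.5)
Your proof is correct and follows the paper's argument: replacing the densities by kernels multiplies each split ratio by $2\pi s_\ell s_s$ (and each merge ratio by its inverse), which is exactly Green's ratio for a target carrying that factor once per block. Your extra checks, that the fixed-$K$ moves are unaffected by a $K$-dependent factor and that the tilted target is normalisable because $K\le\lfloor n/m_{\min}\rfloor$, fill in details the paper leaves implicit.
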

\begin{proof}
Each split multiplies the ratio by $\sqrt{2\pi}\,s_\ell\cdot\sqrt{2\pi}\,s_s$ and each merge divides it by the same factor, so the modified ratio is the correct ratio for a target that carries this factor once per block.
\end{proof}
With $s_\ell=0.6$ and $s_s=0.8$ the factor is $3.02^K$, a strong bias towards more regimes that does not show up in fixed-dimension moves, where the constants cancel.

\paragraph{Proposal ratio.} The split chooses the move with probability $q_K$, a splittable block with probability $1/s(\rho)$ and the split point with probability $\psi_j(c)$; the merge chooses the move with probability $1-q_{K+1}$ and one of the $K$ boundaries of $\rho'$ uniformly. With $g(u)=\mathcal N(u_\ell;0,\tau^2)\,\mathcal N(u_s;0,\tau^2)$ and unit Jacobian, \eqref{eq:green} gives the ratio $A$ of Appendix~\ref{app:block_A}.

\paragraph{The informed split proposal is valid.} $\psi_j$ depends on the parent block's data, its hyperparameters $\phi_j$ and $\sigma$, all of which belong to the pre-split state. The merge reconstructs that state exactly (Proposition~\ref{prop:bijection}), so the probability of the reverse split can be evaluated at the merge. Detailed balance therefore holds for any $\psi_j$ that gives positive probability to every $c\in\{m_{\min},\ldots,n_j-m_{\min}\}$, which the uniform component guarantees. The choice of $\psi_j$ and of $\tau$ affects only efficiency: $\pi_j(c)$ proposes split points where the data support them, and $\tau=0.5$ lets the children's hyperparameters differ substantially, whereas very small $\tau$ makes most splits indistinguishable from the parent.

\subsection{Boundary moves}

Holding $K$, every other boundary and all hyperparameters fixed, the boundary between blocks $i$ and $i+1$ (combined length $T$, starting after position $a$) can take the values $c\in\{m_{\min},\ldots,T-m_{\min}\}$. In~\eqref{eq:target_log}, $n!$, $(\theta+1)_{n-1}$, $\prod_{i<K}(\theta+i\sigma)/K!$, $Z_n$, the hyperpriors and the likelihoods of all other blocks do not depend on $c$. The full conditional is therefore
\[
\pi(c\mid\text{rest})\ \propto\ w_\sigma(c)\,w_\sigma(T-c)\;\mathcal N(\mathbf y_{a+1:a+c};\boldsymbol\phi_i)\;\mathcal N(\mathbf y_{a+c+1:a+T};\boldsymbol\phi_{i+1}),
\]
a discrete distribution with at most $T$ atoms that can be sampled exactly, so the move is a Gibbs update with no rejection. A uniform shuffle targets the same conditional but proposes most positions where it is negligible, so for long blocks nearly all its proposals are rejected. The Gibbs draw costs about as much as one shuffle proposal because of the following property.
\begin{proposition}\label{prop:prefix}
Let $\Sigma=LL^\top$ be the $m\times m$ covariance of a block with equally spaced inputs and a stationary kernel, $\mathbf v=L^{-1}\mathbf y$ and $\tilde{\mathbf v}=L^{-1}J\mathbf y$, with $J$ the reversal permutation. Then for every $c\le m$,
\[
\log\mathcal N(\mathbf y_{1:c};\mathbf 0,\Sigma_{1:c,1:c})=\sum_{i\le c}\Bigl(-\tfrac12v_i^2-\log L_{ii}-\tfrac12\log2\pi\Bigr),
\]
and the same sum with $\tilde{\mathbf v}$ gives the log-likelihood of the suffix $\mathbf y_{m-c+1:m}$.
\end{proposition}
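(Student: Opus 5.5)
The plan has two parts: a prefix identity that holds for any positive definite $\Sigma$, and the suffix claim, which needs stationarity and equal spacing through the Toeplitz structure.

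\textbf{Prefix identity.} Write $L$ in block form with leading $c\times c$ block $L_{11}$. Because $L$ is lower triangular, the leading block of $LL^\top$ is $L_{11}L_{11}^\top$. So $\Sigma_{1:c,1:c}=L_{11}L_{11}^\top$, and by uniqueness of the Cholesky factor with positive diagonal, $L_{11}$ is the Cholesky factor of the leading submatrix. Forward substitution for $L\mathbf v=\mathbf y$ shows that the first $c$ rows involve only $L_{11}$ and $\mathbf y_{1:c}$, hence $\mathbf v_{1:c}=L_{11}^{-1}\mathbf y_{1:c}$. Substituting into the Gaussian log-density gives the three terms of the stated sum:
\begin{itemize}
\item the quadratic form is $\mathbf y_{1:c}^\top(L_{11}L_{11}^\top)^{-1}\mathbf y_{1:c}=\|\mathbf v_{1:c}\|^2=\sum_{i\le c}v_i^2$;
\item $\log\det\Sigma_{1:c,1:c}=2\sum_{i\le c}\log L_{ii}$;
\item the constant is $-\tfrac c2\log2\pi$.
\end{itemize}

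\textbf{Suffix identity.} This is the only step that uses the Toeplitz structure. With equally spaced inputs and a stationary kernel, $\Sigma_{ij}$ depends only on $|i-j|$, so $\Sigma$ is symmetric Toeplitz. It is therefore persymmetric: $J\Sigma J=\Sigma$, since $(J\Sigma J)_{ij}=\Sigma_{m+1-i,m+1-j}$ and this depends only on $|i-j|$. The same holds after adding $\sigma_{\mathrm n}^2I$, jitter, or the constant $v_0\mathbf 1\mathbf 1^\top$ of the regime-mean model, so the argument covers all specifications.

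Set $\mathbf z=J\mathbf y$. Then $\mathbf z\sim\mathcal N(\mathbf 0,J\Sigma J)=\mathcal N(\mathbf 0,\Sigma)$, and $\mathbf z_{1:c}=(y_m,\ldots,y_{m-c+1})$ is the reversed suffix. Its density under $\Sigma_{1:c,1:c}$ equals the density of $\mathbf y_{m-c+1:m}$ under $\Sigma_{m-c+1:m,\,m-c+1:m}$. This follows because permuting coordinates leaves the Gaussian log-density invariant, and $J_c\Sigma_{1:c,1:c}J_c$ equals that trailing block, again by persymmetry. Applying the prefix identity to $\mathbf z$, with $\tilde{\mathbf v}=L^{-1}\mathbf z$ and the same $L$, gives the suffix formula.

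No step is a real obstacle. The one point needing care is the suffix case, where the reversed trailing block must be identified with the leading block, and this is exactly where equal spacing and stationarity are needed.
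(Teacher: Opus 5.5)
Your proposal is correct and follows the same route as the paper's proof. Both identify the leading block of $L$ as the Cholesky factor of the leading submatrix, so that $\mathbf v_{1:c}$ is the whitened prefix (you via forward substitution, the paper via the leading block of $L^{-1}$), and both use persymmetry $J\Sigma J=\Sigma$ of the symmetric Toeplitz covariance to identify the reversed suffix with a prefix. You simply spell out the log-density expansion and the identification $J_c\Sigma_{m-c+1:m,\,m-c+1:m}J_c=\Sigma_{1:c,1:c}$ in more detail.
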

\begin{proof}
The Cholesky factor of a leading principal sub-matrix is the leading block of $L$, and the leading block of $L^{-1}$ is the inverse of that block, so the first $c$ entries of $\mathbf v$ are the whitened prefix. $\Sigma$ is symmetric Toeplitz, hence persymmetric ($J\Sigma J=\Sigma$), so the reversed suffix has the covariance of a prefix of the same length.
\end{proof}
One factorisation for block $i$ and one for block $i+1$ thus give the likelihood of every candidate position. The same property gives $\pi_j(c)$ for the informed split from a single factorisation of the parent.

\subsection{Kernel hyperparameters}

The random walk proposes $a'_k=a_k+\lambda_k\varepsilon$ with $\varepsilon\sim\mathcal N(0,1)$, which is symmetric in $a$. Metropolis--Hastings with the target density in $a$ gives the ratio $\varphi_\ell(a'_k)L_k(e^{a'_k},s_k^2)/\{\varphi_\ell(a_k)L_k(e^{a_k},s_k^2)\}$. In terms of the log-normal density on $\ell$, the same ratio is $p(\ell'_k)L_k(\ell'_k)\,\ell'_k/\{p(\ell_k)L_k(\ell_k)\,\ell_k\}$.
\begin{proposition}\label{prop:rwjac}
If the random walk on $a=\log\ell$ is accepted with the natural-scale ratio
\[
\frac{p(\ell'_k)\,L_k(\ell'_k)}{p(\ell_k)\,L_k(\ell_k)},
\]
without the factor $\ell'_k/\ell_k$, the chain targets a prior on $\log\ell_k$ shifted from $\mathcal N(\mu_\ell,s_\ell^2)$ to $\mathcal N(\mu_\ell-s_\ell^2,s_\ell^2)$, and likewise for $\log s_k^2$.
\end{proposition}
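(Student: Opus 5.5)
The plan is to identify the stationary distribution of the modified chain by reading the acceptance ratio it actually uses as a correct Metropolis--Hastings ratio for some other target. The random walk $a'_k=a_k+\lambda_k\varepsilon$ is symmetric in $a$, so a Metropolis step accepted with ratio $r(a'_k)/r(a_k)$ leaves the density proportional to $r(a)$ in the $a$-coordinate invariant (with all other coordinates held fixed). Therefore the first step is to rewrite the natural-scale ratio as a function of $a$ alone and read off $r$.

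Substituting $p(\ell)=\varphi_\ell(\log\ell)/\ell$ gives
\[
\frac{p(\ell'_k)L_k(\ell'_k)}{p(\ell_k)L_k(\ell_k)}=\frac{\varphi_\ell(a'_k)\,e^{-a'_k}\,L_k(e^{a'_k},s_k^2)}{\varphi_\ell(a_k)\,e^{-a_k}\,L_k(e^{a_k},s_k^2)} .
\]
Hence the move is a valid symmetric-proposal Metropolis update for the conditional target $\propto \varphi_\ell(a)e^{-a}L_k(e^a,s_k^2)$. Equivalently, the effective prior density on $a$ is $\varphi_\ell(a)e^{-a}$, rather than $\varphi_\ell(a)$ as in~\eqref{eq:target_log}.

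The second step is to complete the square:
\[
-\frac{(a-\mu_\ell)^2}{2s_\ell^2}-a=-\frac{(a-\mu_\ell+s_\ell^2)^2}{2s_\ell^2}+\mathrm{const},
\]
so $\varphi_\ell(a)e^{-a}\propto\mathcal N(a;\mu_\ell-s_\ell^2,s_\ell^2)$. The argument for $b_k=\log s_k^2$ is identical, with $e^{-b}$ coming from $p(s^2)=\varphi_s(\log s^2)/s^2$.

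The main obstacle is making ``the chain targets'' precise, since the other updates (split--merge, boundary draws, the $(\sigma,\theta)$ moves) still use $\varphi_\ell$. The fixed-dimension updates are unaffected in either case, because they hold $a_k$ fixed or involve only the $\varphi$ ratio. For the dimension-changing moves, I would state the result as the Block-B update targeting the full conditional with the shifted prior. The shifted normal carries the same normalising constant $(2\pi s_\ell^2)^{-1/2}$, since it is a pure translation, while the proportionality constant $e^{-\mu_\ell+s_\ell^2/2}$ is per block. I would note this explicitly, as it is the analogue of Proposition~\ref{prop:normconst}: if the split--merge ratio were written consistently with the same natural-scale convention, the shifted prior would be the exact target. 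Otherwise, the statement is read as the stationary law of the hyperparameter kernel conditional on the partition.
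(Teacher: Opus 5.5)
Your argument is the paper's proof. You use $p(\ell)=\varphi_\ell(\log\ell)/\ell$ to read the natural-scale ratio as the exact Metropolis ratio, under a walk symmetric in $a$, for the density $\varphi_\ell(a)e^{-a}L_k(e^a,s_k^2)$, and then complete the square to obtain $\mathcal N(\mu_\ell-s_\ell^2,s_\ell^2)$.

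Your extra discussion of what ``the chain targets'' goes beyond the paper, which proves the claim only for the random-walk kernel given $\rho$. Your remark that a consistently natural-scale split--merge would make the shifted prior ``the exact target'' needs one correction. That exact target would be the shifted-prior posterior multiplied by $\bigl(e^{-\mu_\ell+s_\ell^2/2}e^{-\mu_s+s_s^2/2}\bigr)^K\approx0.165^K$. So the law of $K$ would also be tilted, not only the hyperparameter priors shifted.
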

\begin{proof}
That ratio is the correct one for the density $p(e^a)L(e^a)=\varphi_\ell(a)e^{-a}L(e^a)$ in $a$, and $\varphi_\ell(a)e^{-a}\propto\exp\{-(a-\mu_\ell+s_\ell^2)^2/(2s_\ell^2)\}$.
\end{proof}
With our priors the shifts are $-0.36$ for $\log\ell_k$ and $-0.64$ for $\log s_k^2$: shorter length-scales and smaller amplitudes than the prior states.

The step $\lambda_k=\lambda\sqrt{100/n_k}$ depends on the block length, which this update does not change, so the proposal remains symmetric given $\rho$. The scale $\lambda$ is adapted towards a $35\%$ acceptance rate only during burn-in. After burn-in the transition kernel is fixed, so the retained draws come from a time-homogeneous Markov chain with stationary distribution~\eqref{eq:target_log}, and no conditions on vanishing adaptation are needed.

\subsection{Pitman--Yor parameters and the truncation constant}

Given $\rho$, the conditional of $(\sigma,\theta)$ is proportional to $p(\sigma)\,p(\theta)\Pr(\rho\mid\sigma,\theta)/Z_n(\sigma,\theta)$. The random walks act on $z=\operatorname{logit}\sigma$ and $\log\theta$, whose densities carry the Jacobians $\sigma(1-\sigma)$ and $\theta$. This gives the ratios of Appendix~\ref{app:block_C}. Unlike in the split and merge, $Z_n$ depends on the parameters being updated and does not cancel.
\begin{proposition}\label{prop:zn}
A sampler that omits $Z_n$ from the updates of $(\sigma,\theta)$ targets the posterior under the hyperprior $p(\sigma)\,p(\theta)\,Z_n(\sigma,\theta)$, normalised.
\end{proposition}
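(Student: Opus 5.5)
The plan is to argue exactly as in the proofs of Propositions~\ref{prop:normconst} and~\ref{prop:rwjac}: identify the acceptance ratio actually used as the correct Metropolis--Hastings ratio for a modified target, then read off the hyperprior implied by that target. First, I will write the modified ratio for the $\sigma$ update. Dropping $Z_n$ from the ratio in Appendix~\ref{app:block_C} leaves
\[
\frac{p(\sigma')\,\sigma'(1-\sigma')\,\Pr(\rho\mid\sigma',\theta)}{p(\sigma)\,\sigma(1-\sigma)\,\Pr(\rho\mid\sigma,\theta)},
\]
and analogously for $\theta$. Since the logit and log random walks are symmetric, this is precisely the Metropolis--Hastings ratio for the conditional density of $(\sigma,\theta)$ proportional to $p(\sigma)p(\theta)\Pr(\rho\mid\sigma,\theta)$. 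Writing this as $\{p(\sigma)p(\theta)Z_n(\sigma,\theta)\}\cdot\Pr(\rho\mid\sigma,\theta)/Z_n(\sigma,\theta)$, the factor $\Pr/Z_n$ is the truncated prior~\eqref{eq:truncated_prior}. Hence the omission is equivalent to replacing $p(\sigma)p(\theta)$ by $p(\sigma)p(\theta)Z_n(\sigma,\theta)$ in~\eqref{eq:target_log}.

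Second, I must check that every other update leaves this modified joint target invariant, so that it is a single stationary distribution for the whole chain. The split/merge and boundary moves hold $(\sigma,\theta)$ fixed. In their ratios $Z_n$ cancels, as noted under \emph{Prior ratio} and in the boundary full conditional, so the extra factor $Z_n(\sigma,\theta)$, which depends only on $(\sigma,\theta)$, also cancels. The kernel and noise updates do not involve $(\sigma,\theta)$ at all. Each move is therefore correct for the modified target.

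Third, I need the modified hyperprior to be proper. This holds because $0<Z_n\le1$ (it is a probability), so $p(\sigma)p(\theta)Z_n$ is integrable and can be normalised. Irreducibility and aperiodicity carry over unchanged from the earlier argument.

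The main obstacle is the bookkeeping in the second step. One has to be sure that no move treats $Z_n$ as a fixed constant in a way that would interact with the modified factor. The cleanest route is to exhibit the modified target explicitly and note that, conditional on $(\sigma,\theta)$, it differs from~\eqref{eq:target_log} only by a constant multiple.
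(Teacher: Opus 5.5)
Your proposal is correct and follows the paper's argument. The $Z_n$-free ratio is the exact Metropolis--Hastings ratio for the conditional proportional to $p(\sigma)p(\theta)\Pr(\rho\mid\sigma,\theta)$, which amounts to tilting the hyperprior by $Z_n$. Your extra checks spell out what the paper's brief ``no other update depends on it'' and ``normalised'' leave implicit: that the fixed-$(\sigma,\theta)$ moves still leave the tilted joint invariant, and that the tilted hyperprior is proper because $0<Z_n\le 1$.
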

\begin{proof}
Omitting $Z_n$ multiplies the conditional density of $(\sigma,\theta)$ by $Z_n(\sigma,\theta)$. This is the same as replacing $p(\sigma)p(\theta)$ by $p(\sigma)p(\theta)Z_n(\sigma,\theta)$ in~\eqref{eq:target_log}, and no other update depends on it.
\end{proof}
$Z_n(\sigma,\theta)$ is the prior probability that every block has at least $m_{\min}$ observations. It falls quickly as $\theta$ grows, because larger $\theta$ produces more and smaller blocks. For $\sigma=0$ the block counts of each size are asymptotically independent Poisson variables with means $\theta/j$ \citep{ArratiaEtAl2003}, so $Z_n\to\exp\{-\theta\sum_{j<m_{\min}}1/j\}=e^{-1.5\theta}$ for $m_{\min}=3$. Under our $\mathrm{Beta}(1,1)\times\mathrm{Gamma}(5,0.5)$ hyperprior, the tilted prior has mean $\sigma\approx0.04$ and mean $\theta\approx2.2$ for both $n=900$ and $n=2{,}264$, instead of $0.5$ and $10$. It pulls both parameters towards the Dirichlet-process limit regardless of the data. In the simulation study, omitting $Z_n$ lowers the average posterior mean of $\sigma$ from $\SigHom$ and $\SigPow$ (Section~\ref{subsec:simulation}) to about $0.03$ and $0.06$, so the adaptation of $\sigma$ to the segment sizes is largely masked. Appendix~\ref{app:pyp-composition-prior} gives the exact computation of $Z_n$.

\subsection{Observation noise}

When $\sigma_{\mathrm n}$ is estimated, the prior is normal on $\log\sigma_{\mathrm n}$ and the random walk is symmetric on that scale. The acceptance ratio is therefore the ratio of $\prod_kL_k$ times the ratio of the normal prior densities, with no Jacobian. Since $\sigma_{\mathrm n}$ enters every block covariance, every block likelihood is recomputed.

\subsection{Summary}

\begin{table}[ht]
\centering
\small
\caption{Terms each update of the sampler must include, the distribution targeted without them, and how the sampler was checked. The last two rows concern efficiency, not correctness.}
\label{tab:sampler_terms}
\begin{tabularx}{\textwidth}{>{\raggedright\arraybackslash}p{3.6cm} >{\raggedright\arraybackslash}X >{\raggedright\arraybackslash}p{3.4cm}}
\toprule
Term & Without it & Check \\
\midrule
Dimension-matched split with exact inverse merge (Prop.~\ref{prop:bijection}) & Split and merge are not inverse maps; the reversible-jump ratio is undefined & Prior-only runs \\
Normalising constants of the hyperpriors (Prop.~\ref{prop:normconst}) & Target multiplied by $(2\pi s_\ell s_s)^K\approx3^K$ & Prior-only law of $K$ \\
Log-scale Jacobian of the random walk (Prop.~\ref{prop:rwjac}) & Prior of $\log\ell_k$ and $\log s_k^2$ shifted by $-s^2$ ($-0.36$, $-0.64$) & Prior-only marginals \\
Truncation constant $Z_n$ in the $(\sigma,\theta)$ updates (Prop.~\ref{prop:zn}) & Hyperprior tilted to mean $\sigma\approx0.04$, $\theta\approx2.2$ & Prior-only KS tests; exact sampler \\
Adaptation during burn-in only & Retained chain is not a fixed Markov kernel & --- \\
\midrule
Gibbs draw of the boundaries (Prop.~\ref{prop:prefix}) & Uniform shuffles, mostly rejected & ESS (Appendix~\ref{app:sampler_diagnostics}) \\
Informed split proposal and $\tau=0.5$ & Uninformed or near-identical splits, mostly rejected & ESS (Appendix~\ref{app:sampler_diagnostics}) \\
\bottomrule
\end{tabularx}
\end{table}

The prior-only runs of Appendix~\ref{app:diag_verify} switch the likelihood off, so the chain must reproduce the prior exactly. A missing term in the first four rows shows up there as a distorted law of $K$ or of the hyperparameters; with all terms included, the total-variation distance from the exact law of $K$ is $\PriorTV$. With the likelihood on, the exact grid sampler of Appendix~\ref{app:exact}, which uses no MCMC moves at all, agrees with the chains on the posterior of $K$ and on the boundary probabilities.

\section{Implementation details of benchmark methods}
\label{app:baselines}

This appendix specifies the implementation used for each benchmark method in Sections~\ref{sec:regimes} and~\ref{sec:prediction}. BOCPD, HDP-SLDS and laGP's in-sample fit are those of the original study of this model, and their results are reproduced from it. Martinez--Mena is refitted for this paper with the settings of \citet{MartinezMena2014}. The standard-GP fits and forecasts were computed for this paper. Each method is applied to the same observation series $\mathbf{y}$ as TPGPMM unless otherwise stated.

\subsection{BOCPD: Bayesian online change-point detection with a
Normal-Inverse-Gamma conjugate underlying predictive model}
\label{app:gpts_cp}

The underlying predictive model of this baseline is a Normal--Inverse-Gamma conjugate model rather than a Gaussian process, so we refer to it as Bayesian online change-point detection \citep[BOCPD;][]{AdamsMacKay2007}. The pipeline follows the ``Industry Portfolios'' specification of \citet{SaatciTurnerRasmussen2010}, with implementation details as in \citet{Turner2011}:

\paragraph{Preprocessing (VIX only).}
Daily close-to-close returns $r_t = \mathrm{VIX}_t / \mathrm{VIX}_{t-1} - 1$ are first standardised using the sample mean $\widehat{\mu}$ and standard deviation $\widehat{\sigma}$ of the first $T_{\mathrm{train}}=250$ observations, $z_t = (r_t - \widehat{\mu})/\widehat{\sigma}$. Each standardised return is then Gaussianised by applying the probability-integral transform under a Student-$t(\nu)$ distribution with $\nu=4$ degrees of freedom, followed by the inverse standard-normal cumulative distribution function (CDF): $\widetilde{z}_t = \Phi^{-1}\!\bigl(F_{t_4}(z_t)\bigr)$. This nonlinear warping maps the heavy-tailed marginal of financial returns into approximate standard normality, ensuring that the Gaussian assumptions of the conjugate model are satisfied. For the simulation study, the synthetic series is already Gaussian by construction and is fed to the model unchanged; no preprocessing is applied.

\paragraph{Within-segment model.}
Within each run, the (Gaussianised) observations are modelled as $\widetilde{z}_t \mid \mu, \sigma^2 \sim \mathcal{N}(\mu, \sigma^2)$, with a Normal-Inverse-Gamma conjugate prior $\mu \mid \sigma^2 \sim \mathcal{N}(\mu_0, \sigma^2/\kappa_0)$ and $\sigma^{-2} \sim \mathrm{Gamma}(\alpha_0, \beta_0)$. We fix $\mu_0 = 0$, $\kappa_0 = 1$, $\alpha_0 = 1$, $\beta_0 = 1$ throughout. Marginalising both $\mu$ and $\sigma^2$, the one-step-ahead predictive distribution is Student-$t$ with degrees of freedom $\mathrm{df}_n = 2\alpha_n$, location $\mu_n$, and scale $\beta_n(\kappa_n + 1)/(\alpha_n \kappa_n)$, where the posterior parameters update analytically from the within-run sufficient statistics.

\paragraph{Run-length recursion.}
We use the standard forward recursion of \citet{AdamsMacKay2007} with a constant hazard $H = 1/\lambda$ and $\lambda = 250$, corresponding to a geometric prior on inter-changepoint intervals with an expected gap of one trading year. The run-length posterior is truncated at $R_{\max} = 1500$, and hypotheses with probability below $10^{-4}$ are pruned at each step.

\paragraph{Change-point detection.}
We declare a change-point at time $t$ when the recent-changepoint score $P(r_t \le 5 \mid x_{1:t})$ exceeds a threshold of $0.5$, with a cooldown of $5$ trading days suppressing duplicate detections within the same event. A short warmup of $k+1=6$ steps is applied so the cumulative score $P(r_t \le k)$ is well-defined.

\subsection{Sticky HDP-SLDS}
\label{app:hdp_slds}

We implement the Markov-switching stochastic volatility (MSSV) variant of the sticky HDP-SLDS of \citet{FoxSudderthJordanWillsky2011}. The latent log-volatility process follows
\[
  x_t = \phi\,x_{t-1} + \mu^{(z_t)} + e_t, \qquad e_t \sim \mathcal{N}(0,\sigma_\eta^2),
\]
with mode-specific process-noise mean $\mu^{(z_t)}$, shared dynamic coefficient $\phi$, and a discrete mode sequence $z_t \in \{1,\ldots,L\}$. The observation model is $y_t = x_t + v_t$, where $v_t$ is approximated by a $10$-component mixture-of-Gaussians fit to the $\log\chi^2(1)$ density. The mode sequence follows a sticky hierarchical Dirichlet process hidden Markov model (HDP-HMM) prior with truncation level $L=15$ and concentration hyperpriors $\alpha + \kappa \sim \mathrm{Gamma}(1, 0.01)$, $\gamma \sim \mathrm{Gamma}(1, 0.01)$, $\rho = \kappa/(\alpha + \kappa) \sim \mathrm{Beta}(10, 1)$. The dynamic-parameter prior is $\phi \sim \mathcal{N}(0, 1)$, $\mu_k \sim \mathcal{N}(0, 1)$, $\sigma_\eta^2 \sim \mathrm{IG}(3, 0.2)$.

\paragraph{Inference.}
A blocked Gibbs sampler alternates between (i) forward-filtering / backward-sampling of $x_{1:T}$ given $z_{1:T}$ and the parameters, (ii) sampling the mixture-component indicators of the measurement noise, (iii) sampling $(\phi, \boldsymbol{\mu}, \sigma_\eta^2)$ given $x_{1:T}$ and $z_{1:T}$, (iv) block sampling $z_{1:T}$ via forward-backward over the truncated HDP transition distributions, and (v) resampling the HDP transition rows and the global stick-breaking weights $\boldsymbol{\beta}$. The simulation study uses $n_{\mathrm{iter}}=1{,}500$ Gibbs iterations with a burn-in of $500$; the VIX application uses $n_{\mathrm{iter}}=3{,}000$ with a burn-in of $1{,}000$. After burn-in, we accumulate the per-time change-point score $\widehat{p}_t = \frac{1}{M}\sum_m \mathbf{1}\{z_t^{(m)} \neq z_{t-1}^{(m)}\}$ and, when needed, the posterior-predictive moments $E[y_t \mid \mathbf{y}]$ and $\mathrm{Var}[y_t \mid \mathbf{y}]$.

\subsection{Martinez--Mena: Pitman--Yor--OU change-point detection}
\label{app:martinez_mena}

We replicate the change-point procedure of \citet{MartinezMena2014}. The
method shares the Pitman--Yor-induced prior on ordered contiguous
compositions used by TPGPMM (Section~\ref{sec:prior_segmentation}), but
combines it with a markedly different within-block model and is fit
through a dedicated split--merge--shuffle MCMC sampler. We evaluate two
specifications from the original paper.

\paragraph{Composition prior.}
The prior on the segmentation $\rho=(n_1,\ldots,n_K)$ is the same
Pitman--Yor-induced ordered-composition prior derived in
Appendix~\ref{app:pyp-composition-prior}, with discount $\sigma$ and strength $\theta$.

\paragraph{Within-block likelihood.}
Conditional on the partition, observations within each block follow a
discretely-observed Ornstein--Uhlenbeck process,
\[
y_t = \mu_k + \phi\,(y_{t-1} - \mu_k) + \xi_t,
\qquad \xi_t \sim \mathcal{N}\!\bigl(0,\,\lambda_k^{-1}(1-\phi^2)\bigr),
\qquad t \in S_k,
\]
with regime-specific mean $\mu_k$ and precision $\lambda_k$, and a shared
autocorrelation parameter $\phi$. The within-block parameters are assigned
the conjugate Normal--Gamma prior
$\mu_k \mid \lambda_k \sim \mathcal{N}\!\bigl(0,\,(c\,\lambda_k)^{-1}\bigr)$,
$\lambda_k \sim \mathrm{Gamma}(a, b)$, and integrated out analytically,
yielding a closed-form per-block marginal likelihood
$L_k(\phi) = p(\mathbf{y}_{S_k} \mid \phi, a, b, c)$ given by Eq.~9 of
\citet{MartinezMena2014}. We use $a = b = 1$ and $c = 0.1$ throughout, as
in the paper.

\paragraph{Inference.}
A split--merge--shuffle MCMC sampler iterates over (i) a structural move on
$\rho$ (split, merge, or boundary shuffle, operating on the integrated
within-block likelihood directly, with no within-block hyperparameters to
perturb), (ii) random-walk Metropolis updates of the autocorrelation
parameter $\phi$ on the logit scale when it is free, and (iii) random-walk
updates of the Pitman--Yor strength $\theta$ on the log scale. The discount
$\sigma$ is held fixed across both specifications, following
\citet{MartinezMena2014}.

\paragraph{Specifications.}
We evaluate the two specifications of the original paper's application \citep[Section~6]{MartinezMena2014}, with its settings. \textbf{Model~A} treats observations as Markovian within each block, with $\phi\sim\mathrm U(0,1)$. \textbf{Model~B} treats them as independent and identically distributed within each block ($\phi=0$ held fixed). In both models the discount is fixed at $\sigma=0.9$ and the strength has the shifted Gamma prior $\theta\sim\mathrm{Ga}(1,1,-0.9)$, that is, $\theta+0.9\sim\mathrm{Gamma}(1,1)$; $a=b=1$ and $c=0.1$; and we run $20{,}000$ iterations after a burn-in of $10{,}000$, for the VIX and Treasury series and for each simulated dataset.

\paragraph{Preprocessing.}
For VIX, $y_t = \log(\mathrm{VIX}_t)$ and the series is centred on its
sample mean before fitting, so the Normal--Gamma prior with prior mean
$\mu_0 = 0$ is well-calibrated. For the simulation study, the synthetic
series $y_t$ is also centred on its sample mean; no further transform is
applied.

\paragraph{Change-point detection.}
We declare a change-point at index $t$ when the marginal posterior
probability that $t$ is an internal boundary,
$\widehat{\tau}_t^{\mathrm{MM}} =
\frac{1}{M^\star}\sum_m \mathbf{1}\{t \text{ is an internal boundary of } \rho^{(m)}\}$,
exceeds the threshold $0.5$, with the same $5$-step cooldown used by
BOCPD and HDP-SLDS.

\paragraph{In-sample predictive.}
For each retained draw, $(\mu_k,\lambda_k)$ of every regime has the Normal--Gamma posterior given the regime's data and the draw's $\phi$, and the predictive of $y_t$ is Student-$t$; the draws are mixed as for TPGPMM. For Model~B ($\phi=0$) the predictive of every $y_t$ in a regime is that of a new observation from the regime. Model~A has no observation noise, so a predictive that conditions on $y_t$ would be degenerate; instead $y_t$ is predicted one step ahead from $y_{t-1}$, with mean $(1-\phi)\,\mathrm{E}[\mu_k]+\phi\,y_{t-1}$, and the first observation of a regime from the regime's stationary distribution. With $\phi=0$ the two coincide.

\subsection{Gaussian-process benchmarks}
\label{app:forecast_baselines}

\paragraph{Standard GP.} A single stationary GP on the whole training series, with RBF or Mat\'ern-3/2 kernel and hyperparameters (length-scale, signal variance, noise) set by maximising the marginal likelihood (ML-II). It is used for the in-sample scores and for the forecasts. The in-sample predictive conditions on the whole training series; for forecasting it conditions on the last 150 observations and is centred on their mean.

\paragraph{laGP.} The local approximate GP of \citet{GramacyApley2015} (R package \texttt{laGP}), with a local design for each prediction point that starts from its $10$ nearest neighbours and is extended to $60$ observations by the ALC-ray criterion, length-scale and nugget by maximum likelihood, and predictions at the training inputs.

\paragraph{TPGPMM forecasts.} For each of 500 posterior draws, the forecast is the GP predictive of that draw's final regime, conditioning on at most the last 150 observations of the regime; specifications without a regime mean are centred on the mean of that window. The draws are averaged, and scores use the mean and variance of the resulting mixture.

\section{Definitions of evaluation metrics}
\label{app:metrics}

This appendix defines the metrics used in Sections~\ref{sec:regimes} and~\ref{sec:prediction}. Throughout, $\mathbf{y} = (y_1,\ldots,y_n)$ is the observed series, and a method's posterior predictive at index $t$ is summarised by its mean $\mu_t$ and variance $\nu_t$.

\subsection{Posterior-predictive metrics}
\label{app:metrics_predictive}

\paragraph{Root mean squared error (RMSE).}
\begin{equation}
  \mathrm{RMSE} = \sqrt{\frac{1}{n}\sum_{t=1}^n (y_t - \mu_t)^2}.
\end{equation}
Same units as $y$. Sensitive only to the predictive mean, not the spread.

\paragraph{Negative log predictive density (NLPD).}
Under a Gaussian predictive $\mathcal{N}(\mu_t, \nu_t)$,
\begin{equation}
  \mathrm{NLPD} = \frac{1}{n}\sum_{t=1}^n \left[ \frac{1}{2}\log(2\pi\nu_t) + \frac{(y_t - \mu_t)^2}{2\nu_t} \right].
\end{equation}
Lower is better. NLPD penalises both location error and miscalibration of the predictive variance.

\paragraph{Continuous ranked probability score (CRPS).}
We use the closed-form Gaussian-approximation CRPS \citep{GneitingRaftery2007},
\begin{equation}
  \mathrm{CRPS}_t = \sigma_t \left[ z_t\bigl(2\Phi(z_t) - 1\bigr) + 2\varphi(z_t) - \frac{1}{\sqrt{\pi}}\right],
\end{equation}
with $\sigma_t = \sqrt{\nu_t}$, $z_t = (y_t - \mu_t)/\sigma_t$, and $\Phi, \varphi$ the standard-normal CDF and probability density function (PDF). The reported CRPS is the average $\frac{1}{n}\sum_t \mathrm{CRPS}_t$. Lower is better. Like NLPD, CRPS is a strictly proper scoring rule that rewards both accurate point predictions and well-calibrated uncertainty, but unlike NLPD it remains finite under heavy-tailed errors. For TPGPMM, $\mu_t$ and $\nu_t$ are the mean and variance of the mixture \eqref{eq:posterior_predictive_mixture}.

\paragraph{Scale-normalised variants.}
Because TPGPMM, laGP, and HDP-SLDS operate on different targets, we report scale-normalised versions to make magnitudes comparable across rows of Tables~\ref{tab:vix_predictive_comparison} and~\ref{tab:treasury_predictive_comparison}. Let $s_y = \mathrm{sd}(y)$ denote the empirical standard deviation of the target. Then
\begin{align}
  \mathrm{RMSE}/\mathrm{sd} &= \frac{\mathrm{RMSE}}{s_y}, \qquad
  \mathrm{CRPS}/\mathrm{sd} = \frac{\mathrm{CRPS}}{s_y}, \\
  \Delta\mathrm{NLPD} &= \mathrm{NLPD} - \left[ \tfrac{1}{2}\log(2\pi s_y^2) + \tfrac{1}{2} \right].
\end{align}
The bracketed term is the NLPD that a marginal-Gaussian baseline $\mathcal{N}(\overline{y}, s_y^2)$ would obtain on its own data. Negative values of $\Delta\mathrm{NLPD}$ therefore indicate that the model improves on the trivial constant-Gaussian fit; RMSE/sd and CRPS/sd less than $1$ indicate improvements over a constant-mean predictor. For the out-of-sample forecasts, $s_y$ is the standard deviation of the 2024 test window.

\subsection{Change-point detection metrics}
\label{app:metrics_cp}

Let $\mathcal{T}^\star = \{t^\star_1,\ldots,t^\star_{K-1}\}$ be the set of true internal change-points and $\widehat{\mathcal{T}} = \{\widehat{t}_1,\ldots,\widehat{t}_{\widehat{K}-1}\}$ the set of inferred change-points. We say that a true change-point $t^\star_j$ is \emph{matched} if there exists $\widehat{t}_i \in \widehat{\mathcal{T}}$ with $|\widehat{t}_i - t^\star_j| \le \tau$, where the matching tolerance $\tau$ is set to $5$ trading days throughout. Each true change-point is matched to at most one inferred change-point and vice versa, in a greedy nearest-neighbour fashion, yielding $n_{\mathrm{matched}}$ matched pairs.

\paragraph{Inferred regime count.}
$\widehat{K} = |\widehat{\mathcal{T}}| + 1$, the pruned number of inferred regimes after applying any method-specific merge radius. For TPGPMM, $\widehat{\mathcal{T}}$ is the set of change-points of the modal composition.

\paragraph{Precision, recall, $F_1$.}
\begin{align}
  \mathrm{Precision} &= \frac{n_{\mathrm{matched}}}{|\widehat{\mathcal{T}}|}, &
  \mathrm{Recall} &= \frac{n_{\mathrm{matched}}}{|\mathcal{T}^\star|}, &
  F_1 &= \frac{2\,\mathrm{Precision}\cdot\mathrm{Recall}}{\mathrm{Precision} + \mathrm{Recall}}.
\end{align}
Precision penalises spurious detections; recall penalises missed change-points. $F_1\in[0,1]$ is their harmonic mean and is the standard one-number summary because precision and recall are in tension---a method can score perfect recall by predicting change-points everywhere, at the cost of precision.

\paragraph{Change-point mean absolute error.}
\begin{equation}
  \mathrm{CP\_MAE} = \frac{1}{|\mathcal{T}^\star|}\sum_{j=1}^{|\mathcal{T}^\star|} \min_{\widehat{t} \in \widehat{\mathcal{T}}} \big|\widehat{t} - t^\star_j\big|.
\end{equation}
The localisation error of the closest inferred change-point to each true change-point, averaged across true change-points. Reported in trading-day units. By convention $\mathrm{CP\_MAE}=0$ when both sets are empty, and undefined ($\mathrm{NaN}$) when $\widehat{\mathcal{T}} = \emptyset$ but $\mathcal{T}^\star \neq \emptyset$.

\subsection{TPGPMM-specific summaries}
\label{app:metrics_pyp}

For the simulation study we additionally summarise the marginal posteriors of the Pitman--Yor parameters from the TPGPMM chains. Given the retained samples $\{(\sigma^{(m)}, \theta^{(m)})\}_{m=1}^{M^\star}$,
\begin{equation}
  \overline{\sigma} = \frac{1}{M^\star}\sum_{m=1}^{M^\star}\sigma^{(m)}, \qquad
  \mathrm{CI}_{95}(\sigma) = \bigl[\sigma^{(0.025)},\,\sigma^{(0.975)}\bigr],
\end{equation}
with $\sigma^{(q)}$ the $q$-th empirical quantile of the chain. The reported $\sigma$ posterior mean in the body text is seed-averaged: for a batch with $R$ replicate seeds, we report $\frac{1}{R}\sum_r \overline{\sigma}_r$. The same convention applies to $\theta$ and $\sigma_{\mathrm n}$ when reported.

\section{Simulation-study configuration}
\label{app:simulation_config}

This appendix gives the exact specification used to generate the synthetic
scenarios in Section~\ref{subsec:simulation}, sufficient to reproduce all
reported TPGPMM numbers from the code release.

\paragraph{Data-generating process.}
For each block $k$, the latent function is sampled from a zero-mean Gaussian
process with squared-exponential kernel of length-scale $\ell_k$ and signal
variance $s_k^2$. The block-specific mean $m_k$ is added, and observations
are drawn as $y_t = f(X_t) + \varepsilon_t$ with
$\varepsilon_t \sim \mathcal{N}(0, \sigma_n^2)$ and $\sigma_n = 0.1$. The
inputs are integer indices $X_t = t-1$. Continuity at boundaries is not
enforced.

The data-generating process includes an explicit per-block mean
$m_k$ in addition to the within-block kernel hyperparameters
$(\ell_k, s_k^2)$, while the fitted TPGPMM has only the kernel
hyperparameters as block-specific parameters. This is deliberate: the
simulation tests whether TPGPMM recovers regime structure when regimes
differ in mean as well as in covariance, a setting strictly more general
than the model's prior. The chain has no parametric mean to fit, so the
mean shift is absorbed by the within-block GP posterior; recovery of the
true partitions in Section~\ref{subsec:simulation} indicates this
absorption is effective for the level scales and segment lengths
considered. The data are generated with a squared-exponential kernel and $\sigma_n=0.1$, so TPGPMM uses the data-generating kernel family and noise level and differs from the data-generating process only in the block means. The Mat\'ern-3/2 alternative with estimated noise is misspecified in the kernel and has to learn the noise, which makes it the more demanding test.

\paragraph{Per-block parameters.}
The two scenarios share identical per-block kernels and means; only the
block lengths differ:

\begin{table}[ht]
\centering
\small
\begin{tabular}{lccc}
\toprule
Block $k$ & $\ell_k$ & $s_k^2$ & $m_k$ \\
\midrule
1  & 25.0 & 0.85 & 0.0  \\
2  &  8.0 & 0.70 & 1.8  \\
3  &  6.0 & 0.75 & $-1.7$ \\
4  & 10.0 & 0.70 & 1.6  \\
5  &  7.0 & 0.70 & $-1.8$ \\
6  & 12.0 & 0.65 & 1.7  \\
7  &  9.0 & 0.75 & $-1.5$ \\
8  & 11.0 & 0.70 & 1.9  \\
9  &  8.0 & 0.70 & $-1.6$ \\
10 &  7.0 & 0.70 & 1.7  \\
\bottomrule
\end{tabular}
\end{table}

\paragraph{Block lengths.}
\begin{itemize}
\item \emph{Homogeneous}: $(n_1,\ldots,n_{10}) = (90,\ldots,90)$, total $n=900$.
\item \emph{Power-law}: $(n_1,\ldots,n_{10}) = (810, 10, 10, 10, 10, 10, 10, 10, 10, 10)$, total $n=900$.
\end{itemize}

\paragraph{Replication and seeds.}
Each scenario is replicated $R=50$ times. For replication $r=1,\ldots,R$,
the data-generating seed is $s_{\mathrm{data}}^{(r)} = 1000 + r$
and the MCMC seed is $s_{\mathrm{model}}^{(r)} = 8800 + r$, for both scenarios. Each replication is fitted with one chain of $15{,}000$ iterations, the first half discarded. Martinez--Mena A and B are fitted on the same datasets, with the same seeds. The BOCPD and HDP-SLDS results in Table~\ref{tab:simulation-study-recovery} are taken from the original study, which used the same design with data seeds $1000$--$1049$; for those two baselines the comparison is therefore between draws from the same data-generating process, not on identical datasets.

\paragraph{Evaluation.}
Change-point matches are declared within $\tau = 5$ trading days. Adjacent
inferred change-points within a merge radius of $4$ days are collapsed before
computing precision, recall, $F_1$, and $\mathrm{CP\_MAE}$.

\section{Sampler diagnostics}
\label{app:sampler_diagnostics}

To assess sampler mixing over the partition space, we report acceptance
rates for the split and merge moves, the fraction of boundary Gibbs draws that move a boundary, effective sample sizes for the global
quantities $K$, $\sigma$, and $\theta$, and the rank-normalised split-$\widehat{R}$ across four chains started from very different partitions. Detailed numbers are collected in
Table~\ref{tab:sampler_diagnostics}; the traces of the global
quantities for the VIX chains are shown in Figure~\ref{fig:trace_diag}.

\begin{table}[ht]
\centering
\small
\caption{Sampler diagnostics. Four chains per series started at $K_0\in\{1,5,15,30\}$; acceptance rates are averaged over chains; ESS is the rank-normalised bulk effective sample size pooled over chains and $\widehat{R}$ the rank-normalised split-$\widehat{R}$ \citep{VehtariEtAl2021}, reported as the maximum over $K,\sigma,\theta$; statistics on the second half of each chain.}
\label{tab:sampler_diagnostics}
\resizebox{\linewidth}{!}{%
\begin{tabular}{llcccccccc}
\toprule
Series & Specification & acc$_{\mathrm{split}}$ & acc$_{\mathrm{merge}}$ & moved$_{\mathrm{Gibbs}}$ & ESS$_K$ & ESS$_\sigma$ & ESS$_\theta$ & $\widehat{R}$ (max) & min/chain \\
\midrule
VIX & TPGPMM & 0.055 & 0.055 & 0.439 & 143 & 7336 & 3013 & 1.031 & 8.3 \\
VIX & TPGPMM ($\sigma_{\mathrm n}$ estimated) & 0.065 & 0.064 & 0.381 & 12 & 14083 & 51 & 1.266 & 5.3 \\
VIX & TPGPMM (Mat\'ern-3/2, $\sigma_{\mathrm n}$ estimated) & 0.061 & 0.061 & 0.354 & 544 & 3452 & 10183 & 1.006 & 50.0 \\
VIX & TPGPMM (Mat\'ern-3/2, regime mean, $\sigma_{\mathrm n}$ estimated) & 0.147 & 0.145 & 0.452 & 88 & 8470 & 280 & 1.047 & 6.5 \\
\midrule
Treasury & TPGPMM & 0.126 & 0.127 & 0.780 & 387 & 1762 & 4441 & 1.011 & 13.8 \\
Treasury & TPGPMM ($\sigma_{\mathrm n}$ estimated) & 0.095 & 0.091 & 0.539 & 26 & 4604 & 158 & 1.139 & 2.0 \\
Treasury & TPGPMM (Mat\'ern-3/2, $\sigma_{\mathrm n}$ estimated) & 0.115 & 0.115 & 0.667 & 147 & 1655 & 4148 & 1.023 & 18.1 \\
Treasury & TPGPMM (Mat\'ern-3/2, regime mean, $\sigma_{\mathrm n}$ estimated) & 0.062 & 0.062 & 0.597 & 96 & 2952 & 1417 & 1.031 & 6.7 \\
\bottomrule
\end{tabular}}
\end{table}

\begin{figure}[t]
  \centering
  \includegraphics[width=0.9\linewidth]{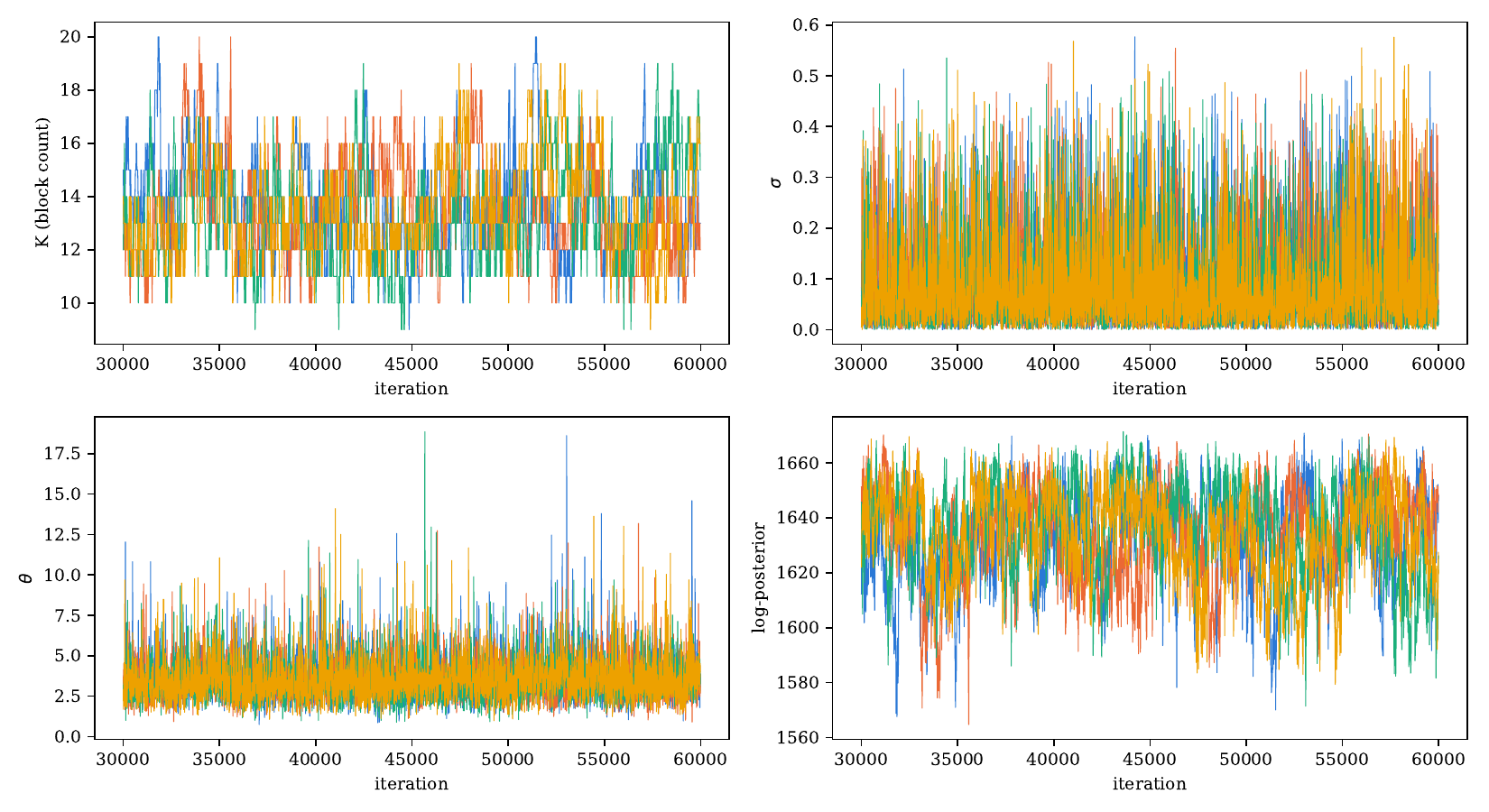}
  \caption{Post-burn-in traces of the global quantities ($K$, $\sigma$,
  $\theta$, and the log-posterior) for the four VIX chains of TPGPMM, started from $K_0=1,5,15,30$ equal blocks (first half of the $60{,}000$ iterations discarded as burn-in). The four chains overlap in every panel.}
  \label{fig:trace_diag}
\end{figure}

For TPGPMM the four chains agree: the largest $\widehat{R}$ over $K,\sigma,\theta$ is $\RhatMainVix$ on VIX and $\RhatMainTr$ on the Treasury yield, and the effective sample size of $K$ is $\EssKvix$ and $\EssKtr$. The same holds for the Mat\'ern-3/2 and regime-mean alternatives. The RBF model with estimated noise is the exception ($\widehat{R}=\RhatRbfSn$ on VIX and $\RhatRbfSnTr$ on the Treasury yield, with $\mathrm{ESS}_K=\EssKRbfSn$ on VIX): its chains settle at different combinations of the noise level and the number of regimes. This is the weak separability of within-regime amplitude and noise that motivates fixing $\sigma_{\mathrm n}$ (Section~\ref{sec:gp_likelihood}).

For comparison, a basic split--merge sampler for TPGPMM, with uniform split points, a uniform boundary shuffle and hyperparameter perturbations of scale $0.05$ at a split, gives $\mathrm{ESS}_K=5$ and split-$\widehat{R}_K=1.51$ for a $20{,}000$-iteration VIX chain. Three features account for the difference: the exact Gibbs draws of the boundaries, the informed split proposal, and the dimension-matched split with auxiliary variables of scale $\tau=0.5$.

\subsection{Verification of the sampler}
\label{app:diag_verify}

\paragraph{Prior-only runs.} With the likelihood switched off ($n=20$, noise estimated), four chains of $60{,}000$ iterations reproduce the prior: the total-variation distance between the sampled and the exact law of $K$ is $\PriorTV$, and the Kolmogorov--Smirnov distances of $\sigma$ and $\theta$ from their priors are $0.019$ and $0.002$. A sampler that omits $Z_n$ gives mean $\theta\approx3.0$ and mean $\sigma\approx0.15$ in the same setting, against the prior means $10$ and $0.5$. Block likelihoods from the Toeplitz prefix computation match a direct multivariate normal density to relative error $10^{-13}$ for all kernels, with and without the regime mean.

\paragraph{Exact sampler.} With the likelihood switched on, Appendix~\ref{app:exact} compares the chains with an exact sampler that needs no MCMC moves. The two agree on the posterior of the number of regimes, the boundary probabilities and the Pitman--Yor parameters.

\section{Validation against an exact sampler}
\label{app:exact}

For TPGPMM's main specification, with an RBF kernel and fixed noise, the posterior of the partition can be computed without MCMC once the kernel hyperparameters are restricted to a grid. Its draws are independent, so it has no convergence issue, and its only approximation is the grid. It therefore provides a reference against which to check that the chains of Appendix~\ref{app:mcmc_details} converge to the right distribution.

\paragraph{Discretised hyperprior.} We replace the log-normal priors on $(\ell_k,s_k^2)$ by a discrete prior on a regular grid $\{(a_g,b_g)\}_{g=1}^G$ of $(\log\ell,\log s^2)$, covering $\mu_\ell\pm3.3s_\ell$ and $[\mu_s-3s_s,\mu_s+4s_s]$, with weights $\pi_g\propto\varphi_\ell(a_g)\varphi_s(b_g)$. Because the block hyperparameters are independent across blocks, they can be summed out block by block. Each block $y_{s+1:t}$ then contributes the collapsed likelihood
\[
S(s,t)=\sum_{g=1}^G\pi_g\,\mathcal N\bigl(\mathbf y_{s+1:t};\mathbf 0,\Sigma_g[1{:}t-s,\,1{:}t-s]\bigr),
\]
and the posterior of $(\rho,\sigma,\theta)$ is
\[
\pi(\rho,\sigma,\theta\mid\mathbf y)\ \propto\ p(\sigma)\,p(\theta)\,\frac{c_K(\sigma,\theta)}{Z_n(\sigma,\theta)}\prod_{k=1}^Kw_\sigma(n_k)\,S(i_{k-1},i_k),\qquad
c_K(\sigma,\theta)=\frac{n!}{(\theta+1)_{n-1}}\,\frac{\prod_{i=1}^{K-1}(\theta+i\sigma)}{K!}.
\]
As the grid is refined, this converges to the posterior of the continuous model~\eqref{eq:target_log}.

\paragraph{All block likelihoods at once.} Every block covariance at grid point $g$ is a leading sub-matrix of the same $n\times n$ Toeplitz matrix $\Sigma_g$. Let $L_g$ be its Cholesky factor and $H$ the Hankel matrix whose column $s$ holds $(y_{s+1},\ldots,y_n,0,\ldots,0)$. By Proposition~\ref{prop:prefix}, the cumulative sums down each column of the matrix with entries
\[
-\tfrac12\bigl[(L_g^{-1}H)_{rs}\bigr]^2-\log(L_g)_{rr}-\tfrac12\log2\pi
\]
are the log-likelihoods of all blocks $(s,t)$. One factorisation and one triangular solve per grid point therefore give the whole matrix $S$.

\paragraph{Forward recursion and backward sampling.} The prior is a product of block terms times a function of $K$ (Appendix~\ref{app:pyp-composition-prior}), so the recursion carries the number of blocks:
\[
F_1(t)=w_\sigma(t)\,S(0,t),\qquad F_K(t)=\sum_{s:\,t-s\ge m_{\min}}F_{K-1}(s)\,w_\sigma(t-s)\,S(s,t),
\]
and
\[
p(\mathbf y\mid\sigma,\theta)=\frac{1}{Z_n(\sigma,\theta)}\sum_Kc_K(\sigma,\theta)\,F_K(n).
\]
We place $(\sigma,\theta)$ on a grid of $24$ values of $\sigma\in[0,0.9]$ and $120$ log-spaced values of $\theta\in[0.05,60]$, weighted by their priors. Each independent draw takes three steps \citep{Fearnhead2006}:
\begin{enumerate}
\item draw $(\sigma,\theta)$ from its posterior on the grid;
\item draw $K$ with probability proportional to $c_K(\sigma,\theta)F_K(n)$;
\item draw the boundaries backwards. The last block starts at $s$ with probability
\[
\Pr(s)\ \propto\ F_{K-1}(s)\,w_\sigma(n-s)\,S(s,n),
\]
and the earlier boundaries follow in the same way.
\end{enumerate}
We use $4{,}000$ draws and three grids, with $G=\ExCoarseGVix$, $783$ and $\ExFineGVix$ points. On $12$ cores the block likelihoods take $7$--$51$ seconds and a complete run two to three minutes.

\paragraph{Results.} Table~\ref{tab:exact_check} and Figure~\ref{fig:exact_check} compare the exact posterior with TPGPMM's four chains.
\begin{itemize}
\item \textbf{Number of regimes.} On VIX, the exact posterior mean of $K$ is $13.48$ on the $783$-point grid and $\ExFineKVix$ on the finest grid, against $\ExMcmcKVix$ for the chains, whose Monte Carlo standard error is $\ExMcseVix$. The total-variation distance between the posteriors of $K$ from the finest grid and from the chains is $\ExTvVix$. On the Treasury yield the two means are $\ExFineKTr$ and $\ExMcmcKTr$ (standard error $\ExMcseTr$), with a total-variation distance of $\ExTvTr$.
\item \textbf{Boundary locations.} The probabilities of a boundary within $\pm10$ trading days of each date agree closely along both series: the mean absolute difference is $\ExDmeanVix$ on VIX and $\ExDmeanTr$ on the Treasury yield, and the largest is $\ExDmaxVix$ and $\ExDmaxTr$.
\item \textbf{Pitman--Yor parameters.} The posterior means of $\sigma$ and $\theta$ are $\ExSigVix$ and $\ExThVix$ (exact) against $\ExSigMcmcVix$ and $\ExThMcmcVix$ (chains) on VIX, and $\ExSigTr$ and $\ExThTr$ against $\ExSigMcmcTr$ and $\ExThMcmcTr$ on the Treasury yield.
\item \textbf{Effect of the grid.} The coarsest grid, with $\ExCoarseGVix$ points, overstates the number of regimes on VIX ($\ExCoarseKVix$). The two finer grids differ from each other by about as much as they differ from the chains, so the remaining gap is of the order of the discretisation error. On the Treasury yield all three grids agree with the chains.
\end{itemize}
The chains therefore converge to the exact posterior, up to Monte Carlo error and the discretisation of the hyperprior. The check is limited to the main specification. With estimated noise, $\sigma_{\mathrm n}$ would add a third grid dimension, and the cost of the block-likelihood matrix grows linearly with the grid size.

\begin{table}[ht]
\centering
\small
\caption{TPGPMM's chains against the exact grid sampler, which includes the truncation constant. $\mathrm{TV}_K$ is the total-variation distance between the two posteriors of $K$, and $\max|\Delta\tau|$ the largest difference between the probabilities of a boundary within $\pm10$ trading days of each date. Exact results use $4{,}000$ independent draws.}
\label{tab:exact_check}
\resizebox{\linewidth}{!}{%
\begin{tabular}{llccccc}
\toprule
Series & Method & $E[K]$ (95\% interval) & $E[\sigma]$ & $E[\theta]$ & $\mathrm{TV}_K$ & $\max|\Delta\tau|$ \\
\midrule
VIX & Exact, 210-point grid & 14.43 [11, 18] & 0.072 & 3.67 & 0.227 & 0.24 \\
VIX & Exact, 783-point grid & 13.48 [10, 18] & 0.083 & 3.51 & 0.029 & 0.15 \\
VIX & Exact, 1{,}760-point grid & 13.69 [11, 18] & 0.082 & 3.58 & 0.049 & 0.14 \\
VIX & TPGPMM chains (MCMC) & 13.44 [11, 17] & 0.090 & 3.56 & --- & --- \\
\midrule
Treasury & Exact, 210-point grid & 7.04 [5, 10] & 0.359 & 5.01 & 0.045 & 0.09 \\
Treasury & Exact, 783-point grid & 6.97 [5, 10] & 0.367 & 5.17 & 0.030 & 0.03 \\
Treasury & Exact, 1{,}760-point grid & 6.95 [5, 10] & 0.367 & 5.16 & 0.024 & 0.04 \\
Treasury & TPGPMM chains (MCMC) & 7.00 [5, 10] & 0.375 & 5.26 & --- & --- \\
\bottomrule
\end{tabular}}
\end{table}

\begin{figure}[ht]
  \centering
  \includegraphics[width=\linewidth]{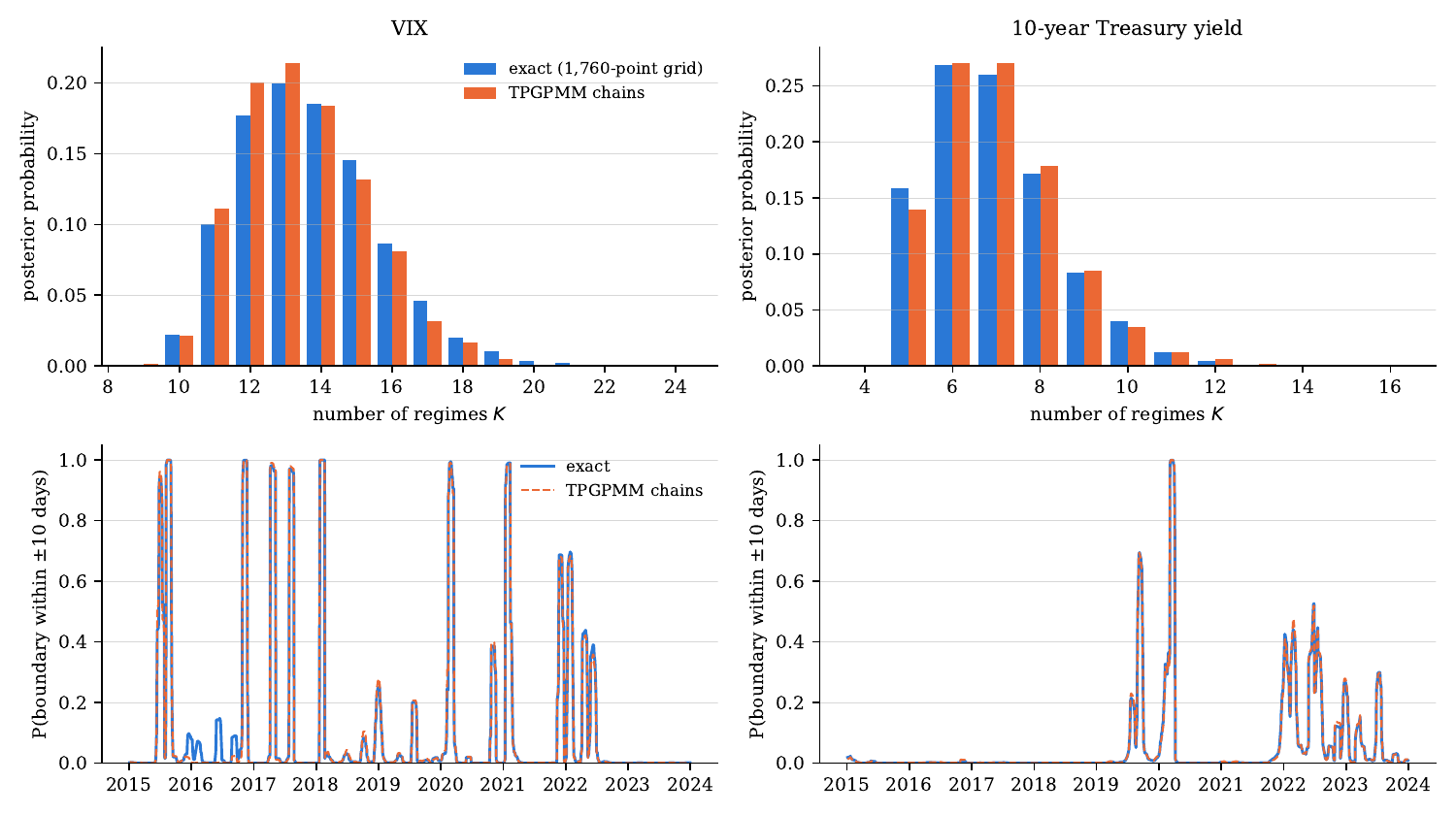}
  \caption{Exact grid sampler against TPGPMM's chains, for VIX (left) and the 10-year Treasury yield (right). \emph{Top}: posterior of the number of regimes $K$. \emph{Bottom}: posterior probability of a boundary within $\pm10$ trading days of each date. The exact results use the finest grid and $4{,}000$ independent draws.}
  \label{fig:exact_check}
\end{figure}

\end{document}